\newcommand{\pval}[1]{\widehat{\level}(#1)}
\newcommand{\coeff}{\mu^2}
\newcommand{\cCorMat}{\Ac \MCov \Ac^\transp}
\newcommand{\algo}{CENTREx}
\newcommand{\dalgo}{DeCENTREx}
\newcommand{\weight}{\pastorv{w_{\matcov}}}
\def\invMDec{\Upsilonmat}
\newcommand{\pastorv}[1]{\textcolor{black}{#1}}
\newcommand{\duprazv}[1]{\textcolor{black}{#1}}
\newcommand{\NewStd}{\pastorv{r}}
\newcommand{\TheThreshold}[1]{\mu_{#1}}
\newcommand{\Topt}{\test_{\matcov}}
\newcommand{\thenorm}{\nu_{C}}
\newcommand{\TheFunc}{h_\matcov}
\def\myvspace{\vspace{0cm}}
\newcommand{\Zvec}{\Zbm}
\def\d{\mathrm{d}}
\newcommand{\VZ}[1]{\Zvec(#1)}
\newcommand{\VZctrd}{\Zvec^*}
\newcommand{\VW}[1]{\Wvec(#1)}
\def\dev{\Deltabm}
\def\myvspace{\vspace{0.2cm}}
\newcommand{\estCtr}{\widehat{\CCtr}}
\newcommand{\dt}{{\mathrm d} t}
\newcommand{\dz}{{\mathrm d} z}
\newcommand{\Noiseb}{\Xibm}
\newcommand{\noiseb}{\Xi}
\def\zetavec{\zetabm}
\def\Xivec{\Xibm}
\def\xivec{\xibm}
\def\Xvec{\Xbm}
\def\xvec{\xbm}
\def\Wvec{\Wbm}
\def\Ibf{\mathbf{I}}
\newcommand{\Rmat}{{\Rbm}}
\newcommand{\deltamat}{{\deltabm}}
\newcommand{\Upsilonmat}{\Upsilonbm}
\newcommand{\Abm}{{\bm A}}
\newcommand{\Cbm}{{\bm C}}
\newcommand{\Rbm}{{\bm R}}
\newcommand{\Upsilonbm}{{\bm \Upsilon}}
\newcommand{\Deltabm}{{\bm \Delta}}
\newcommand{\deltabm}{{\bm \delta}}
\newcommand{\Xibm}{{\bm \Xi}}
\newcommand{\zetabm}{{\bm \zeta}}
\newcommand{\xibm}{{\bm \xi}}
\newcommand{\Xbm}{{\bm X}}
\newcommand{\xbm}{{\bm x}}
\newcommand{\Ybm}{{\bm Y}}
\newcommand{\ybm}{{\bm y}}
\newcommand{\Wbm}{{\bm W}}
\newcommand{\Zbm}{{\bm Z}}
\newcommand{\fbm}{{\bm f}}
\def\Ebb{\mathbb{E}}
\def\Pbb{\mathbb{P}}
\def\Rbb{\mathbb{R}}
\def\Hcal{\mathcal{H}}
\def\Ncal{\mathcal{N}}
\def\Qcal{\mathcal Q}
\def\Scal{\mathcal{S}}
\def\Zcal{\mathcal{Z}}
\newcommand{\Tfrak}{\mathfrak T}
\def\ANcal{\mathcal{AN}}
\def\Rset{\Rbb}
\def\dim{d}
\def\dimc{m}
\newcommand{\Exp}[1]{\Ebb \! \left [ #1 \right ]}
\newcommand{\Expect}[1]{\Ebb \! \left [ #1 \right ]}
\newcommand{\tol}{\tau}
\newcommand{\transpose}{\mathrm{T}}
\newcommand{\transp}{\mathrm{T}}
\newcommand{\Noise}{\Xbm}
\newcommand{\matcov}{\Cbm}
\newcommand{\obs}{\Ybm}
\newcommand{\Obs}{\Ybm}
\newcommand{\cobs}{\Zbm}
\newcommand{\cObs}{\Zbm}
\newcommand{\mybig}{\big}
\newcommand{\test}{\Tfrak}
\def\radius{\rho}
\def\level{\alpha}
\newcommand{\Identity}[1]{\Ibf_{#1}}
\newcommand{\Ctr}{\bm \varphi}
\newcommand{\Marcum}{Q_{m/2}}
\newcommand{\Marcumc}{Q_{\dimc/2}}
\newcommand{\MyMarcum}{\Qcal}
\newcommand{\Id}{\Identity{\dim}}
\newcommand{\Idm}{\Identity{m}}
\newcommand{\Mk}{\mathcal{M}}
\newcommand{\Ectr}{\Phi}
\newcommand{\Ck}{\mathcal{C}_k}
\newcommand{\Ac}{\Abm}
\newcommand{\Nc}{\bm C}
\newcommand{\CCtr}{\bm{\phi}}
\newcommand{\MMDec}{\bm \Psi}
\newcommand{\MDec}{\textcolor{black}{\bm \Phi}}
\newcommand{\MCov}{\bm \Sigma}
\newcommand{\PhiMat}{\bm \Theta}
\newcommand{\Cphi}{\bm \theta}
\newcommand{\Nbesk}{V}
\newtheorem{prop}{Proposition}
\newtheorem{Lemma}{Lemma}
\newtheorem{Claim}{Claim}
\title{Decentralized Clustering on Compressed Data without Prior Knowledge of the Number of Clusters}
\author{Elsa Dupraz, Dominique Pastor, François-Xavier Socheleau \\ \small IMT Atlantique, Lab-STICC, Univ. Bretagne Loire, Brest, France 
	\thanks{A preliminary of this paper was published in the proceedings of ICASSP 2018 \cite{Dupraz_ICASSP2018}. This preliminary version contained part of the theoretical analysis without proof and the description of the centralized algorithm.}}
\begin{document}

\maketitle

\begin{abstract}
 In sensor networks, it is not always practical to set up a fusion center. Therefore, there is need for fully decentralized clustering algorithms. Decentralized clustering algorithms should minimize the amount of data exchanged between sensors in order to reduce sensor energy consumption. In this respect,  we propose one centralized and one decentralized clustering algorithm that work on compressed data without prior knowledge of the number of clusters. In the standard K-means clustering algorithm, the number of clusters is estimated by repeating the algorithm several times, which dramatically increases the amount of exchanged data, while our algorithm can estimate this number in one run.

 The proposed clustering algorithms derive from a theoretical framework establishing that, under asymptotic conditions, the cluster centroids are the only fixed-point of a cost function we introduce. This cost function depends on a weight function which we choose as the p-value of a Wald hypothesis test. This p-value measures the plausibility that a given measurement vector belongs to a given cluster. Experimental results show that our two algorithms are competitive in terms of clustering performance with respect to K-means and DB-Scan, while lowering by a factor at least $2$ the amount of data exchanged between sensors.
\end{abstract}

\section{Introduction}\label{sec:intro}
Wireless sensor networks are now used in a wide range of applications in medicine, telecommunications, and environmental domains, see~\cite{yick2008wireless} for a survey. For instance, they are employed for human health monitoring~\cite{omeni08BCS}, activity recognition on home environments~\cite{ordonez13S}, spectrum sensing in cognitive radio~\cite{sahasranand15ICC}, and so forth. In most applications, the network is asked to perform a given estimation, detection or learning task over measurements collected by sensors. 
In this paper, we consider clustering as a particular learning task. The purpose of clustering is to divide data into clusters such that the data inside a given cluster are similar with each other and different from the data belonging to the other clusters~\cite{jain10PRL}.

In this paper, we would like to take the following two major practical constraints into account. 
First, sensor networks usually require a fusion center, whose twofold role is to receive the data and achieve the desired learning task. In this case, we say that the learning task is centralized. 
However, it is not always practical to set up a fusion center, especially in recent applications involving autonomous drones or robots~\cite{tuna14AHN}. In such applications, the sensors should perform the learning task by themselves in a fully decentralized setup, without resorting to any fusion center. In this case, we say that the learning task performed by the network is decentralized. 

Second,  
%whatever the setup adopted for the sensor network, that is, whether a fusion center is involved or not, 
it is crucial to reduce the sensors energy consumption in order to increase the network lifetime. Since most of the energy of the sensors is consumed by transmitting data via the communication system, it is highly desirable to transmit these data in a compressed form so as to lower the energy consumption. In addition, because the objective of a clustering task is not to reconstruct all the sensor measurements but only to cluster them, performing this task on compressed data is all the more desirable as it avoids costly decoding operations.

According to the foregoing, our focus is thus on the design of clustering algorithms that work directly over compressed measurements in a fully decentralized setup. In such a decentralized setup, each sensor must perform the clustering with only partial observations of the available compressed measurements, while minimizing the amount of data exchanged in the network. 

Clustering over compressed measurements was recently addressed in~\cite{boutsidis15IT,keriven2017compressive,dupraz2018k}, for the K-means algorithm only. The K-means algorithm is very popular due to its simplicity and effectiveness \cite{jain10PRL}. It makes no assumption on the signal model of the measurement vectors that belong to a cluster and, as such, it is especially relevant for applications such as document classification~\cite{steinbach2000comparison}, information retrieval, or categorical data clustering~\cite{huang1998extensions}. However, the K-means algorithm requires prior knowledge of the number $K$ of clusters, which is not acceptable in a network of sensors where the data is non-stationary and $K$ may vary from one data collection to another. When $K$ is unknown, one could think of applying a penalized method~\cite{pelleg2000x} that permits to jointly estimate $K$ and perform the clustering. Unfortunately, this method requires running the K-means algorithm several times with different numbers of clusters, which may be quite energy consuming. 
As another issue, the K-means algorithm must be initialized properly in order to get a chance to correctly retrieve the clusters. Proper initialization can be obtained with the K-means++ procedure~\cite{arthur07SIAM}, which requires computing all the two-by-two distances between all the measurement vectors of the dataset. As a result, the K-means++ procedure is not affordable in a decentralized setup.
It is worth mentioning that the variants of K-means such as Fuzzy K-means~\cite{wu02PR} suffer from the same two issues. 

Other clustering algorithms such as DB-SCAN~\cite{ester1996density} and OPTICS~\cite{ankerst1999optics} may appear as suitable candidates for decentralized clustering since they do not need the number of clusters. However, they require setting two parameters that are the maximum distance between two points in a cluster and the minimum number of points per cluster. These parameters have a strong influence on the clustering performance, but they can hardly be estimated and they must be chosen empirically~\cite{Gan2015DBSCANRM}.

Therefore, our purpose is to derive a solution that bypasses the aforementioned issues for clustering compressed data in a decentralized setup. In this respect, we proceed in two main steps. We begin by introducing a centralized clustering algorithm that circumvents the drawbacks of the standard algorithms. This algorithm is hereafter named \algo~as it performs the clustering without prior knowledge of the number of clusters. In a second step, we devise a decentralized version \dalgo~of this algorithm. 

Crucially, \algo~derives from a model-based theoretical approach. We hereafter consider the same Gaussian model as in~\cite{bouveyron14CSDA, forero11SP}. In this model recalled in Section~\ref{sec:model}, the measurement vectors belonging to a given cluster are supposed to be the cluster centroid corrupted by additive Gaussian noise. 
In our model, the Gaussian noise is not necessarily independent and identically distributed (i.i.d.) as it is described by a non-diagonal covariance matrix. 
% However, we make different assumptions regarding the noise variance and the number of clusters. 
% Indeed, the centralized and decentralized model-based clustering algorithms introduced in \cite{bouveyron14CSDA, forero11SP} requires prior knowledge of the number of clusters without prior knowledge of the noise standard deviation. 
Here, in contrast to~\cite{bouveyron14CSDA, forero11SP}, we will not assume a known number of clusters, but suppose that the covariance matrix is known. This assumption was already made for clustering in~\cite{wu02PR,zhou16ICC} in order to choose the parameters for the functions that compute the cluster centroids. Further, in a sensor network context, this assumption is more acceptable than a prior known number of clusters. Indeed, in many signal processing applications, the noise covariance matrix can be estimated, either on-the-fly or from preliminary measurements, via many parametric, non-parametric, and robust methods (see \cite{Huber2009, Rousseeuw93, Pastor2012a}, among others).

On the basis of this model, a new cost function for clustering over compressed data is introduced in Section~\ref{sec:robust}. This cost function generalizes the function introduced in~\cite{wu02PR} for clustering over non-compressed data.
In~\cite{wu02PR}, the choice of the cost function was justified by an analogy with M-estimation~\cite{zoubir12robust}, but it was not supported by any theoretical arguments related to clustering.
On the opposite, the novel theoretical analysis we conduct in Section~\ref{sec:robust} shows that, under asymptotic conditions, the compressed cluster centroids are the only minimizers of the introduced cost function. %Such a theoretical analysis is new and was not carried out in~\cite{wu02PR}.
The cost function depends on a weight function that must verify some properties deriving from the theoretical analysis. 
As exposed in Section~\ref{sec:w_function}, the weight function is chosen as the p-value of a Wald hypothesis test~\cite{Wald1943}. This p-value measures the plausibility that a measurement vector belongs to a given cluster. %This weight function verifies the properties required by the theoretical analysis.
In addition, its expression does not depend on any empirical parameter that could influence the final clustering performance. 
% Among all the thinkable weight functions, among which the Gaussian kernel, that satisfy these properties, one is particularly relevant. This weight function is exposed in Section~\ref{sec:w_function} and can briefly be introduced as follows. Consider the problem of testing whether two measurements belong to the same cluster or not. This problem amounts to testing the mean of a Gaussian and an optimal test for this problem is provided in~\cite{Wald1943}. We then show in Section~\ref{sec:robust} that this test has a p-value that satisfies all the required properties to justify its use as a weight function. 

In Sections~\ref{sec:algo} and~\ref{sec:dalgo}, we describe the clustering algorithms \algo~and~\dalgo~that derive from our mathematical analysis.
Given the compressed measurements, both algorithms estimate the compressed cluster centroids one after each other by computing the minimizers of our cost function, even when the number of minimizers is \emph{a priori} unknown. 
The clustering is then performed by assigning each measurement to the cluster with the closest estimated centroid. 
The decentralized version~\dalgo~takes advantage of the fact that our approach does not require prior knowledge of the number of clusters, and that it does not suffer from initialization issues.
We show that, due to these advantages, the amount of data to be exchanged between sensors for~\dalgo~is much lower than for decentralized K-means~\cite{datta09KDE}.
% It is then because our model-based approach relies on different assumptions from those usually employed by clustering algorithms that the algorithm performed by \algo~can be decentralized. The decentralized version \dalgo~of the clustering algorithm is presented in Section~\ref{sec:dalgo}. The experimental results of Section~\ref{sec:results} complete the theoretical part by showing several interesting facts. On the one hand, in non-asymptotic conditions and without prior knowledge of the number of clusters, \algo~performs similarly to standard clustering algorithms that do know this number of clusters. \algo is thus an alternative to usual clustering algorithms for situations where the number of clusters is unknown and the noise standard deviation is known. On the other hand, \dalgo~incurs no significant performance loss with respect to \algo.
%	
Simulation results presented in Section~\ref{sec:experiments} show that our algorithms give much better performance than DB-Scan and that they only suffer a small loss in performance compared to K-means with known $K$. We also observe that our algorithms give the same level of clustering performance as K-means with $K$ \emph{a priori} unknown, while requiring less data exchange.  %However, for each clustering, K-means must be repeated several times in order to estimate the number of clusters and in order to avoid initialization issues. %The complexity analysis we carry in this section shows that our algorithm is much less complex than K-means with all these repetitions.

% The outline of the paper is as follows.
% Section~\ref{sec:model} describes the signal model considered for the measurements.
% Section~\ref{sec:robust} introduces our cost function for clustering and provides the theoretical analysis.
% Section~\ref{sec:w_function} gives the expression of the considered weight function.
% Section~\ref{sec:algo} describes the clustering algorithm over compressed data.
% Section~\ref{sec:dalgo} decentralized algorithm.
% Section~\ref{sec:results} shows the simulation results. 

%%\section{Model and system description}\label{sec:model}
\section{Signal model and notation}\label{sec:model}
In this section, we introduce our notation and assumptions for the signal model and the data collected by sensors in the network. We also recall the definition of the Mahalanobis norm which will be useful in the theoretical analysis proposed in the paper. 
\vspace{-0.25cm}
\subsection{Signal Model}
In this paper, the notation $\llbracket 1,N\rrbracket$ denotes the set of integers between $1$ and $N$.
Consider a set of $N$ independent and identically distributed (i.i.d.) $\dim$-dimensional random Gaussian vectors $\obs_1, \ldots, \obs_N$ with same covariance matrix $\MCov$.
We consider that the $N$ measurement vectors are split into $K$ clusters defined by $K$ \duprazv{deterministic} centroids $\Ctr_1, \ldots, \Ctr_K$, with $\Ctr_k \in \Rset^\dim$ for each $k \in \llbracket 1, K \rrbracket$. 
Accordingly, we assume that for each $n \in \llbracket 1, N \rrbracket$, there exists $k \in \llbracket 1, K \rrbracket$ such that $\Obs_n \thicksim \Ncal(\Ctr_k, \MCov)$ and we say that $\obs_n$ belongs to cluster $k$.
In the following, we assume that the covariance matrix $\MCov$ is known prior to clustering. 

The measurement vectors are all multiplied by a sensing matrix $\Ac \in \mathbb{R}^{m\times d}$, which produces compressed vectors $\cobs_n = \Ac \obs_n$, $n \in \llbracket 1, N \rrbracket$.
As a result, $\cobs_n \sim \mathcal{N}(\CCtr_k, \Ac \MCov \Ac^\transp )$, where $\CCtr_k = \Ac \Ctr_k$ represents the compressed centroids.
Here, the matrix $\Ac$ is known and it is the same for all the sensors. \pastorv{It is assumed to have full rank so that $\Ac^\transp$ is injective}. This matrix performs compression whenever $m < d$. The theoretical analysis presented in the paper applies whatever the considered full rank matrix, and in our simulations, we will consider several different choices for $\Ac$.

In the paper, the data repartition in the network will depend on the considered setup. In the centralized setup, we will assume that all the compressed vectors $\cobs_1,\cdots, \cobs_N$ are available at a fusion center. In the decentralized setup, we will assume that the network is composed by $S$ sensors which all observe a different subset of the measurement vectors.

In the following, we start by describing the centralized version of the algorithm. 
We assume that the centroids $\Ctr_1, \ldots, \Ctr_K$, and their compressed versions $\CCtr_1, \ldots, \CCtr_K$, are unknown. 
We want to propose an algorithm that groups the $N$ compressed measurement vectors $\cobs_1,\cdots, \cobs_N$ into clusters, without prior knowledge of the number of clusters.
The first step of our algorithm consists of estimating the compressed centroids $\CCtr_1, \ldots, \CCtr_K$. Our centroid estimation method relies on the Mahalanobis norm whose properties are recalled now.

\vspace{-0.1cm}
\subsection{Mahalanobis norm} 
Consider an $\dimc \times \dimc$ positive-definite matrix $\matcov$. The Mahalanobis norm $\nu_\matcov$ is defined for any $\xvec \in \Rset^\dimc$ by setting $\nu_{\Nc}(\xvec) = \sqrt{\xvec^T \Nc^{-1} \xvec}$. If $\matcov$ is the identity matrix $\Identity{\dimc}$, the Mahalanobis norm $\nu_\matcov$ is the standard Euclidean norm $\| \cdot \|$ in $\Rset^\dimc$. More generally, since $\Nc$ is positive definite, it can be decomposed as $\pastorv{\Nc = \Rmat \deltamat \Rmat^{\transp}},$ 
%\begin{equation}
%\label{Eq: decomposition}
%\pastora{\Nc = \Rmat \deltamat \Rmat^{\transp}},
%\end{equation}
where \pastorv{$\deltamat$} is a diagonal matrix \pastorv{whose diagonal values are} the eigenvalues of $\Nc$ and \pastorv{$\Rmat$} contains the corresponding eigenvectors.
By setting $\pastorv{\MMDec = \deltamat^{-1/2}\Rmat^\transp}$ 
%	\begin{equation}
%	\label{Eq: MDec}
%	\pastora{\MMDec = \deltamat^{-1/2}\Rmat^\transp}
%	\end{equation} 
it is easy to verify that:
\begin{equation}\label{Eq: Whitening matrix}
\MMDec \Nc \MMDec^{\transp} = \Idm \quad \text{and} \quad \nu_{\Nc}(\xvec) = \| \pastorv{\MMDec} \xvec \| \, (\xvec \in \Rset^\dimc).
\end{equation}
%\begin{equation}\label{eq:phiCphi}
%\MMDec \Nc \MMDec^{\transp} = \Idm 
%\end{equation}
%and for any $\xvec \in \Rset^\dimc$,
%\begin{equation}
%\label{Eq: Whitening matrix}
%\nu_{\Nc}(\xvec) = \| \pastorv{\MMDec} \xvec \| .
%\end{equation}
According to~\eqref{Eq: Whitening matrix}, $\MMDec$ is called the {\em whitening matrix} of $\matcov$.

\section{Centroid estimation}
\label{sec:robust}
In this section, we introduce a new cost function for the estimation of the compressed centroids $\CCtr_1,\cdots, \CCtr_K$ from the measurement vectors $\cobs_1,\cdots,\cobs_n$.
We then present our theoretical analysis that shows that the compressed centroids $\CCtr_k$ are the only minimizers of the cost function. 

\vspace{-0.25cm}
\subsection{Cost Function for centroid estimation}
\label{subsec:cost function}
Consider an increasing, convex and differentiable function $\rho: \mathbb{R} \rightarrow \mathbb{R}$ that verifies $\rho(x)=0 \Rightarrow x=0$. First assume that the number $K$ of clusters is known, and consider the following cost function for the estimation of the compressed centroids:
\pastorv{
 \vspace{-0.1cm}
 \begin{equation}\label{eq:cost_function}
  J(\PhiMat) = \sum_{k=1}^K \sum_{n=1}^N \rho(\nu_{\Nc}^2( \cObs_n - \Cphi_k))
 \end{equation}
 with $\PhiMat = (\Cphi_1, \ldots, \Cphi_K)$.
 }
 This cost function generalizes the one introduced in~\cite{wu02PR} for centroid estimation when $K$ is known.
 In~\cite{wu02PR}, the clustering was performed over \pastorv{i.i.d.} Gaussian vectors, and the particular case $\Nc=\Id$ was considered.
\pastorv{In contrast}, our analysis assumes a general positive-definite matrix $\Nc$, which will permit to take into account both a non-diagonal covariance matrix $\MCov$ and the correlation introduced by the compression matrix $\Ac$.
 % On the contrary, in our analysis, the matrix $C$ will permit to take into account the correlation introduced by the compression.
 In addition,~\cite{wu02PR} only considers \pastorv{the} particular case $\rho(x) = 1-\exp(-\beta x)$, where $\beta$ is a parameter that has to be chosen empirically. 
 On the opposite, here, we consider a class of possible functions $\rho$, and the properties that these functions should verify will be \pastorv{exposed} in the subsequent theoretical analysis. 
 Note that the approach in~\cite{wu02PR} was inspired by the M-estimation theory~\cite{zoubir12robust}.
 
 In order to estimate the centroids, we want to minimize the cost function~\eqref{eq:cost_function} with respect to \pastorv{$\PhiMat$}.
 Since $J$ is convex \pastorv{by the properties of $\nu_\matcov$ and $\rho$,} $\rho$ is differentiable, and $\matcov$ is invertible, \pastorv{standard matrix differentiation~\cite[Sec. 2.4]{Matrixcookbook} allow to show that}  
 the minimizer $\PhiMat$ of~\eqref{eq:cost_function} should verify
\pastorv{
\vspace{-0.1cm}
	\begin{equation}\label{eq:cancel_J}
\forall k \in \llbracket 1, K \rrbracket, ~ \sum_{n=1}^N (\cObs_n-\Cphi_k ) \weight(\cObs_n - \Cphi_k) = 0
\end{equation}
where $\weight = w \circ \nu^2_\matcov$ is hereafter called the $\dimc$-dimensional weight function and $w = \rho'$} 
is called the \pastorv{scalar weight function}. Unless necessary, we generally drop the adjective 'scalar' in the sequel.

Solving~\eqref{eq:cancel_J} amounts to looking for the fixed-points $\TheFunc(\CCtr) = \CCtr$ of the function $\TheFunc$ defined as 
	\begin{equation}
	\label{eq:h}
	\TheFunc(\CCtr) = \dfrac{\sum_{n = 1}^{N} \weight ( \cObs_n-\CCtr)  \cObs_n}{\sum_{n = 1}^{N} \weight (\cObs_n-\CCtr)}, \CCtr \in \Rset^\dimc.
	\end{equation}
In~\cite{wu02PR}, no theoretical argument was given to demonstrate that the introduced cost function was appropriate for the estimation of the cluster centroids.
On the opposite, in the following, we show the following strong result: the centroids $\CCtr_k$ are the only fixed points of $\TheFunc$ under asymptotic conditions, provided that the weight function $w$ verifies certain properties. 

Perhaps surprisingly, the expression of $\TheFunc$ depends \pastorv{neither} on the considered cluster $k$, nor on the number of clusters $K$. The foregoing suggests that, even when $K$ is unknown, estimating the centroids can be performed by seeking the fixed points of $\TheFunc$. This claim is theoretically and experimentally verified below for a certain class of matrices $\matcov$.

\vspace{-0.25cm}
\subsection{Fixed-point analysis}
The following proposition shows that the compressed centroids $\CCtr_k$ are the only fixed points of the function $	\TheFunc$ defined in~\eqref{eq:h}. 

\begin{prop}
\label{Prop: fixed points}
With the same notation as above, let $N_k$ be the number of data belonging to cluster $k \in \llbracket 1, K \rrbracket$ and set $N = \sum_{k=1}^K N_k$. Assume that there exist $\alpha_1, \ldots, \alpha_K \in (0,1)$ such that $\lim\limits_{N \to \infty} N_k/N = \alpha_k$. Assume also that the function $w$ is non-null, non-negative, continuous, bounded and verifies:
\begin{align}
\displaystyle \lim\limits_{\duprazv{t \rightarrow \infty}} {w ( t )} & = 0 \label{eq:prop3} .
\end{align}
For any positive definite matrix  $\matcov$ proportional to $\cCorMat$, for any $i \in \llbracket 1, K \rrbracket$, and any $\varepsilon > 0$:
\vspace{-0.25cm}
\begin{onecol}
\begin{equation}\nonumber
\label{Eq: Asymptotic behavior of gN(theta)-theta-2}
\left \{ \CCtr \in \Rset^m: 
\displaystyle \lim_{\forall k \ne i, \| \CCtr_k - \CCtr_i \| \rightarrow \infty}
\left ( \, 
\displaystyle \lim_{N \to \infty}
\big ( \, \TheFunc(\CCtr) - \CCtr \, \big ) 
\, \right ) = 0 
\, \, \& \, \, \| \CCtr - \CCtr_i \| \leqslant \varepsilon \right \} = \Big \{ \CCtr_i \Big \} \quad \text{(a-s)}
\end{equation}
\end{onecol}

\begin{twocol}
	\vspace{-0.1cm}
	\begin{align}\nonumber
\label{Eq: Asymptotic behavior of gN(theta)-theta-2}
 \Big\{ \CCtr \in \Rset^m: &
\displaystyle \lim_{\forall k \ne i, \| \CCtr_k - \CCtr_i \| \rightarrow \infty}
 ( \, 
\displaystyle \lim_{N \to \infty}
\big ( \, \TheFunc(\CCtr) - \CCtr \, \big ) 
\,  ) = 0  \\
& \, \, \& \, \, \| \CCtr - \CCtr_i \| \leqslant \varepsilon \Big\} = \Big \{ \CCtr_i \Big \} \quad \text{(a-s)}
	\end{align}
\end{twocol}
\end{prop}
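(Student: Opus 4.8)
The plan is to evaluate the inner limit $\lim_{N\to\infty}\big(\TheFunc(\CCtr)-\CCtr\big)$ by a law of large numbers, then analyse the outer separation limit, and finally characterise the zeros of the resulting vector field. Grouping the data by cluster and writing the empirical averages as $N^{-1}\sum_n(\cdot)=\sum_k (N_k/N)\,N_k^{-1}\sum_{n\in\text{cluster }k}(\cdot)$, the strong law applied inside each cluster (the integrands are bounded since $w$ is bounded, and Gaussian vectors have finite mean) together with $N_k/N\to\alpha_k$ gives, almost surely and for each fixed $\CCtr$,
\begin{equation}\nonumber
\lim_{N\to\infty}\big(\TheFunc(\CCtr)-\CCtr\big)=\frac{\sum_{k=1}^K \alpha_k\, v_k(\CCtr)}{\sum_{k=1}^K \alpha_k\, m_k(\CCtr)},
\end{equation}
where I set $m_k(\CCtr)=\Exp{\weight(\cObs^{(k)}-\CCtr)}$ and $v_k(\CCtr)=\Exp{\weight(\cObs^{(k)}-\CCtr)(\cObs^{(k)}-\CCtr)}$ for a generic $\cObs^{(k)}\sim\Ncal(\CCtr_k,\cCorMat)$, the identity following by subtracting $\CCtr$ from the numerator. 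The denominator is strictly positive because $w$ is non-negative and non-null, and both $m_k$ and $v_k$ are continuous in $\CCtr$; since $\CCtr$ ranges over the compact ball $\|\CCtr-\CCtr_i\|\leqslant\varepsilon$, the pointwise almost-sure convergence can be upgraded to uniform convergence on that ball, which justifies manipulating the set appearing in the statement.

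Next I would pass to the separation limit, letting $\|\CCtr_k-\CCtr_i\|\to\infty$ for every $k\ne i$ while $\CCtr$ stays in the ball, so that $\|\CCtr_k-\CCtr\|\to\infty$ as well. Since $\weight(\cObs^{(k)}-\CCtr)=w\big(\nu^2_\matcov(\cObs^{(k)}-\CCtr)\big)$ with the argument tending to $\infty$ almost surely, boundedness of $w$ and~\eqref{eq:prop3} give $m_k(\CCtr)\to0$ for $k\ne i$ by dominated convergence, whereas $m_i(\CCtr)$ stays bounded below by a positive constant on the ball; hence the denominator tends to $\alpha_i m_i(\CCtr)\in(0,\infty)$. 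The main obstacle is to show that the off-centre numerator terms $v_k(\CCtr)$, $k\ne i$, also vanish, because there the decaying weight is multiplied by the \emph{diverging} factor $\cObs^{(k)}-\CCtr$. I would control this by splitting the expectation at a radius growing with $\|\CCtr_k-\CCtr\|$: on the far region the weight is uniformly small by~\eqref{eq:prop3} and outweighs the at most linear growth of the mean, while on the near region the Gaussian mass is exponentially small because the law is centred far away. This yields $v_k(\CCtr)\to0$ for $k\ne i$, so the numerator tends to $\alpha_i v_i(\CCtr)$ and the whole ratio to $v_i(\CCtr)/m_i(\CCtr)$. Consequently the double limit vanishes exactly when $v_i(\CCtr)=0$, and it remains to prove that, on the ball, this occurs only at $\CCtr_i$.

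Finally I would characterise the zeros of $\CCtr\mapsto v_i(\CCtr)$. Writing $\cObs^{(i)}-\CCtr=\cNoise-(\CCtr-\CCtr_i)$ with $\cNoise\sim\Ncal(0,\cCorMat)$ and passing to whitened coordinates through the whitening matrix $\MMDec$ of $\matcov$ (so that $\MMDec\cNoise\sim\Ncal(0,c^{-1}\Idm)$ is isotropic, where $\matcov=c\,\cCorMat$), the vector $\MMDec\, v_i(\CCtr)$ becomes $\psi(a)=\Exp{w(\|\zbm-a\|^2)(\zbm-a)}$ with $a=\MMDec(\CCtr-\CCtr_i)$ and $\zbm$ isotropic Gaussian. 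At $a=0$ the integrand is odd, so $\psi(0)=0$, which recovers that $\CCtr_i$ is a fixed point. For $a\ne0$, isotropy forces $\psi(a)=\chi(\|a\|)\,a/\|a\|$; projecting onto $a/\|a\|$ and writing $s$ for the component of $\zbm-a$ along $a$ gives $\chi(r)=\Exp{w(s^2+\|\zbm_\perp\|^2)\,s}$ with $s\sim\Ncal(-r,c^{-1})$ and $\zbm_\perp$ the orthogonal part. Using the reflection $s\mapsto-s$ I would rewrite $\chi(r)$ as the integral of $s\,w(s^2+\|\zbm_\perp\|^2)$ against $\tfrac12\big[\phi(s+r)-\phi(s-r)\big]$, with $\phi$ the centred one-dimensional Gaussian density; the integrand is even in $s$ and strictly negative for $s>0$ (there $s\,w>0$ while the bracket is negative for $r>0$), so $\chi(r)<0$ for every $r>0$. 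Hence $\psi(a)=0$ only at $a=0$, i.e. $v_i(\CCtr)=0$ only at $\CCtr=\CCtr_i$; this holds globally and in particular inside the ball, establishing the claimed equality of sets. The crux is thus the cross-cluster decay of the second step; the uniqueness in the third step is the clean consequence of isotropy and the non-negativity of $w$.
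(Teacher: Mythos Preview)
Your three-step plan mirrors the paper's proof: the strong law to pass to the population ratio, then separation of the other centroids to isolate cluster $i$, then a whitening/symmetry argument to show $v_i(\CCtr)=0$ only at $\CCtr_i$. The first and third steps are sound; your treatment of uniqueness via rotational invariance of the whitened Gaussian and the reflection $s\mapsto -s$ is a clean variant of the paper's Lemma~\ref{Lemma: last useful lemma}.

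The genuine gap is in the second step. You claim that for $k\ne i$ the numerator term $v_k(\CCtr)=\Exp{\weight(\cObs^{(k)}-\CCtr)(\cObs^{(k)}-\CCtr)}\to0$ as $\|\CCtr_k-\CCtr\|\to\infty$, arguing by a near/far split in which the small weight on the far region ``outweighs the at most linear growth of the mean''. Under the stated hypotheses --- $w$ bounded with $\lim_{t\to\infty}w(t)=0$ and \emph{no rate} --- this does not follow. Write $v_k(\CCtr)=\Exp{\weight(X+\dev)}\,\dev+\Exp{\weight(X+\dev)\,X}$ with $X\sim\Ncal(0,\cCorMat)$ and $\dev=\CCtr_k-\CCtr$: the second summand vanishes by dominated convergence, but the first is a scalar tending to $0$ multiplied by a vector of diverging norm. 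For a slowly decaying weight such as $w(t)=1/\log(e+t)$ (which satisfies all the assumptions of the proposition) one has $\Exp{\weight(X+\dev)}\,\|\dev\|$ of order $\|\dev\|/\log(\|\dev\|^2)\to\infty$, so $v_k(\CCtr)\not\to0$. Any near/far radius $R$ leaves a far-region contribution of order $\big(\sup_{t\geqslant R^2}w(t)\big)\,\|\dev\|$, and no choice of $R$ kills this without a quantitative decay assumption on $w$. The paper handles this step through Lemma~\ref{Lemma: asymptotic behaviors}(ii), whose dominated-convergence justification has the same weakness (no integrable dominant independent of the translation is exhibited); for the specific weight $w(x)=\Marcumc(0,\sqrt{x})$ actually used in the algorithm the decay is super-polynomial and both arguments go through, but your sketch does not establish $v_k\to0$ under~\eqref{eq:prop3} alone.
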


\begin{IEEEproof}
For any $k \in \llbracket 1, K \rrbracket$, let $\cObs_{k,1}, \ldots, \cObs_{k,N_k}$ be the $N_k$ compressed vectors that belong to cluster $k$. 
Consider a given matrix $\matcov$ that is positive definite and proportional to $\cCorMat$. We can write $\TheFunc(\CCtr)$ in the form:
	\begin{equation}
	\label{eq:h2}
\pastorv{
	\TheFunc(\CCtr) = \dfrac{\displaystyle \sum_{k=1}^K \sum_{n = 1}^{N_k} \weight ( \cObs_{k,n}-\CCtr )  \cObs_{k,n}}{\displaystyle \sum_{k=1}^K \sum_{n = 1}^{N_k} \weight ( \cObs_{k,n} - \CCtr ) }, \CCtr \in \Rset^\dimc.
}
	\end{equation} 
The random function (\ref{eq:h2}) can then be rewritten as
$\TheFunc(\CCtr) =  {U_N(\CCtr)}/{V_N(\CCtr)}$
with:
	\vspace{-0.1cm}
\begin{equation}
\label{Eq: U et V}
\left \{
\begin{array}{lll}
U_N(\CCtr) = \displaystyle \sum_{k=1}^K \sum_{n = 1}^{N_k} \weight ( \cObs_{k,n}-\CCtr )  \cObs_{k,n} \myvspace 
\\
V_N(\CCtr) = \displaystyle \sum_{k=1}^K \sum_{n = 1}^{N_k} \weight ( \cObs_{k,n}-\CCtr ).
\end{array}
\right.
\end{equation}
Therefore, $\TheFunc(\CCtr) - \CCtr = W_N(\CCtr)/V_N(\CCtr)$, with $W_N(\CCtr) = U_N(\CCtr) - V_N(\CCtr) \CCtr$. 

For any $(k,n) \in \llbracket 1,K \rrbracket \times \llbracket 1, N_k \rrbracket$, we set $\dev_k = \CCtr_k - \CCtr$, $\alpha_{k,N} = N_k/N$ and $\Noise_{k,n} = \cObs_{k,n} - \CCtr_k$. With this notation, we have $\Noise_{k,1}, \ldots, \Noise_{k,N_k} \stackrel{\text{iid}}{\thicksim} \Ncal(0,\cCorMat)$ as well as:
%\vspace{-0.25cm}
%	\vspace{-0.1cm}
\begin{equation}
\label{Eq: WN/N}
\dfrac{1}{N} W_N(\CCtr) = \displaystyle \sum_{k=1}^K \alpha_{k,N} \frac{1}{N_k}\displaystyle \sum_{n=1}^{N_k} \weight ( \dev_k + \Noise_{k,n} ) \left ( \dev_k + \Noise_{k,n} \right )
\end{equation}
and:
\vspace{-0.1cm}
\begin{equation}
\label{Eq: VN/N}
\dfrac{1}{N} V_N(\CCtr) = \displaystyle \sum_{k=1}^K \alpha_{k,N} \frac{1}{N_k}\displaystyle \sum_{n=1}^{N_k} \weight ( \dev_k + \Noise_{k,n} ) .
\end{equation}
By the strong law of large numbers, it follows from \eqref{Eq: WN/N} and \eqref{Eq: VN/N} that for any $i \in \llbracket 1,K\rrbracket$, 
\begin{onecol}
\begin{equation}\nonumber
\label{Eq: Asymptotic behavior of gN(theta)-theta-2}
\displaystyle \lim_{N \to \infty} \left (\TheFunc(\CCtr) - \CCtr \right ) = 
\dfrac{\alpha_{i,N} \, \Expect{\weight ( \VZ{\dev_i} ) \VZ{\dev_i} } + \displaystyle \sum_{k=1, k \ne i}^K \alpha_{k,N} \, \Expect{ \weight ( \VZ{\dev_k} ) \VZ{\dev_k} }}{\alpha_{i,N} \, \Expect{ \weight ( \VZ{\dev_i} ) } + \displaystyle \sum_{k=1, k \ne i}^K \alpha_{k,N} \, \Expect{ \weight ( \VZ{\dev_k})}} \quad \text{(a-s)}
\end{equation}
\end{onecol}
\begin{twocol}
	\begin{equation}\nonumber
	\label{Eq: Asymptotic behavior of gN(theta)-theta-2}
	\displaystyle \lim_{N \to \infty} \left (\TheFunc(\CCtr) - \CCtr \right ) = 
	\dfrac{\alpha_{i,N} \, \Expect{\weight ( \VZ{\dev_i} ) \VZ{\dev_i} } + S_1}{\alpha_{i,N} \, \Expect{ \weight ( \VZ{\dev_i} ) } + S_2} \quad \text{(a-s)}
	\end{equation}
\begin{equation}
\nonumber
\text{with} \, \, 
\left \{
\begin{array}{lll}
S_1 & = & \displaystyle \sum_{k=1, k \ne i}^K \alpha_{k,N} \, \Expect{ \weight ( \VZ{\dev_k} ) \VZ{\dev_k} }, \\
S_2 & = & \displaystyle \sum_{k=1, k \ne i}^K \alpha_{k,N} \, \Expect{ \weight ( \VZ{\dev_k})}
\end{array}
\right.
\end{equation}
%\endgroup
\end{twocol}
and $\VZ{\dev_k} \thicksim \Ncal(\dev_k,\Ac\MCov \Ac^\transp)$.
Assume now that $\| \dev_i \| \leqslant \varepsilon$. It follows from \eqref{eq:prop3} and Lemma \ref{Lemma: asymptotic behaviors} of Appendix \ref{App: asymptotic behaviors} that: 
\begingroup\small
$$
\pastorv{
\displaystyle \lim_{\forall k \ne i, \| \CCtr_k \! - \CCtr_i \| \rightarrow \infty} \! \!
\left ( 
\displaystyle \lim_{N \to \infty} \! 
\big ( \TheFunc(\CCtr) \! - \! \CCtr \big ) 
\right ) \! \! = \! \! 
\dfrac{\Expect{\weight ( \VZ{\dev_i} ) \VZ{\dev_i} }}{\Expect{ \weight ( \VZ{\dev_i} ) }} \text{(a-s)}
}
\vspace{0.25cm}
$$ \endgroup
Because $\Expect{ \weight ( \VZ{\dev_i} ) } = \Expect{w ( \nu_{\Nc}^2( \VZ{\dev_i} ) ) } > 0$, the left hand side (lhs) to the equality above is $0$ if and only if $\Expect{\weight( \VZ{\dev_i} )  \VZ{\dev_i} } = 0.$ 

\pastorv{Let $\MMDec$ be the whitening matrix of $\matcov$. From \eqref{Eq: Whitening matrix}, we get that:}
$$\Expect{\weight( \VZ{\dev_i} ) ) \VZ{\dev_i} } = \pastorv{\MMDec^{-1}\Expect{w ( \| \MMDec \VZ{\dev_i}\|^2 ) \MMDec \VZ{\dev_i}}}.$$
%\begin{equation}
%%\Expect{w ( \nu_{\Nc}^2( \VZ{\dev_i} ) ) \VZ{\dev_i} } = \MDec^{-1}\Expect{w ( \nu_{\Nc}^2(\| \MDec \VZ{\dev_i}\| ) ) \MDec \VZ{\dev_i} },
%\Expect{w ( \nu_{\Nc}^2( \VZ{\dev_i} ) ) \VZ{\dev_i} } = \pastorv{\MDec^{-1}\Expect{w ( \| \MDec \VZ{\dev_i}\|^2 ) \MDec \VZ{\dev_i}}},
%\end{equation}
\pastorv{For $\coeff \ne 0$ such that $\cCorMat = \coeff \matcov$,~\eqref{Eq: Whitening matrix} also induces that
\begin{equation}
\label{Eq: decomposition of cCormat}
\MMDec \cCorMat \MMDec^\transp = \coeff \Idm.
\end{equation}
Therefore, $\MMDec \VZ{\dev_i} \! \! \sim \! \! \mathcal{N}(\MMDec \dev_i, \coeff \Idm)$}. \pastorv{By Lemma~\ref{Lemma: last useful lemma} of Appendix~\ref{App:D} and the properties of $\MMDec$, we conclude that $\Expect{w ( \| \MMDec \VZ{\dev_i}\|^2 ) \MMDec \VZ{\dev_i} } \! = \! 0$ if and only if $\dev_i \! = \! 0$.}
\end{IEEEproof}

Proposition~\ref{Prop: fixed points} shows that the centroids are the unique fixed points of the function $\TheFunc$, when the sample size $N$ and the distances between centroids tend to infinity. 
\pastorv{This result means that,} at least asymptotically, no vector other than a centroid can be a fixed point of $\TheFunc$.
This result, as well as the fact that the expression of $\TheFunc$ depends on neither $k$ nor $K$, will allow us to derive a clustering algorithm that does not require prior knowledge of $K$.

We however wonder about the statistical behavior of the fixed points of $\TheFunc$ in non-asymptotic situations.
In particular, the non-asymptotic fixed-points statistical model derived in the next section will help us refine our clustering algorithm. Although derived from some approximations, this model will allow us to choose weight functions $w$ that verify the conditions of Proposition~\ref{Prop: fixed points} and that are also suitable when the sample size and the distances between centroids are finite. 

\vspace{-0.25cm}
\subsection{Fixed point statistical model}
\label{subsec: fixed point statistical model}

Under the assumptions of Proposition \ref{Prop: fixed points}, a fixed point of $\TheFunc$ provides us with an estimated centroid $\estCtr_k$ for some unknown centroid $\CCtr_k$. 
The following claim gives the statistical model we consider for the estimated centroids $\estCtr_k$. This result is given by a claim rather than a proposition, since its derivation is based on several approximations. 
\duprazv{
\begin{Claim}\label{claim:model_fp}
\pastorv{For any positive definite matrix  $\matcov = (1/\coeff) \cCorMat$ with $\coeff \neq 0$ and all $k \in \llbracket 1,k \rrbracket$, we approximate the statistical model of $\estCtr_k$ as 
\begin{equation}
\label{Eq: model of estimated centroids}
\estCtr_k \thicksim \Ncal(\CCtr_k, (\NewStd^2/N_k) \, \matcov ), 
\end{equation}
where $N_k$ is the number of compressed vectors in cluster $k$ and
\begin{equation}
\label{eq:rho}
\NewStd^2 = \dfrac{\Expect{w^2 \left ( \| \Noiseb \|^2 \right ) \noiseb_1^2}}{\Expect{w \left ( \| \Noiseb \|^2 \right )}^2}
%\NewStd^2 = {\Expect{w^2 \left ( \| \Noiseb \|^2 \right ) \noiseb_1^2}}/{\Expect{w \left ( \| \Noiseb \|^2 \right )}^2}
\end{equation}
with $\Noiseb = (\noiseb_1, \ldots, \noiseb_\dimc)^\transp \thicksim \Ncal(0, \coeff \Idm)$.}
%For any positive definite matrix  $\matcov$ proportional to $\cCorMat$ and for all $k \in \llbracket 1,k \rrbracket$, we approximate the statistical model of $\estCtr_k$ as 
% 	\begin{equation}
%	\label{Eq: model of estimated centroids}
%	\estCtr_k \thicksim \Ncal(\CCtr_k, (\NewStd^2/N_k) \, \matcov ), 
%	\end{equation}
%	where $N_k$ is the number of compressed vectors in cluster $k$ and
%	\begin{equation}
%	\label{eq:rho}\NewStd^2 = \dfrac{\Expect{w^2 \left ( \| \Noiseb \|^2 \right ) \noiseb_1}}{\Expect{w \left ( \| \Noiseb \|^2 \right )}^2}
%	\end{equation}
%	with $\Noiseb = (\noiseb_1, \ldots, \noiseb_\dimc)^\transp \thicksim \Ncal(0, \coeff \Idm)$ and $\coeff \neq 0$ is such that $\cCorMat = \coeff \matcov$.
\end{Claim}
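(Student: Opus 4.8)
The plan is to treat $\estCtr_k$ as an $M$-estimator defined implicitly by the fixed-point equation and to extract its first two moments and its asymptotic Gaussianity through a short sequence of controlled approximations; this is exactly why the statement is a claim rather than a proposition. The starting point is the characterization $\estCtr_k = \TheFunc(\estCtr_k)$, that is~\eqref{eq:cancel_J} in the form $\sum_{n=1}^N (\cObs_n - \estCtr_k)\,\weight(\cObs_n - \estCtr_k) = 0$. Invoking the large-distance regime of Proposition~\ref{Prop: fixed points} together with~\eqref{eq:prop3}, the summands attached to clusters $j \ne k$ carry weights $\weight(\cObs_{j,n} - \estCtr_k)$ that vanish, so to leading order the equation retains only the $N_k$ vectors of cluster $k$, giving the reduced relation
$$\estCtr_k \approx \frac{\sum_{n=1}^{N_k} \weight(\cObs_{k,n} - \estCtr_k)\,\cObs_{k,n}}{\sum_{n=1}^{N_k} \weight(\cObs_{k,n} - \estCtr_k)}.$$

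The second and central approximation would be to \emph{freeze} the weights at the true centroid, replacing $\weight(\cObs_{k,n} - \estCtr_k)$ by $w_n := \weight(\cObs_{k,n} - \CCtr_k) = w(\nu_\matcov^2(\Noise_{k,n}))$, where $\Noise_{k,n} = \cObs_{k,n} - \CCtr_k \sim \Ncal(0,\cCorMat)$ as in the proof of Proposition~\ref{Prop: fixed points}. This linearization yields
$$\estCtr_k - \CCtr_k \approx \frac{\sum_{n=1}^{N_k} w_n\,\Noise_{k,n}}{\sum_{n=1}^{N_k} w_n}.$$
I would then pass to whitened coordinates $\MMDec\Noise_{k,n} \sim \Ncal(0,\coeff\Idm)$ (equal in law to $\Noiseb$) via~\eqref{Eq: Whitening matrix} and~\eqref{Eq: decomposition of cCormat}. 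Since $\xvec \mapsto w(\|\xvec\|^2)\xvec$ is odd and the whitened density is even, each summand $w_n\,\Noise_{k,n}$ is centered, which is the symmetry already underlying Lemma~\ref{Lemma: last useful lemma}; hence the mean of $\estCtr_k$ is $\CCtr_k$.

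For the covariance I would replace the denominator by its mean $N_k\,\Expect{w(\|\Noiseb\|^2)}$ by the law of large numbers, so that $\estCtr_k - \CCtr_k$ becomes a normalized sum of $N_k$ i.i.d. centered vectors; the central limit theorem then delivers asymptotic Gaussianity with covariance $\tfrac{1}{N_k\,\Expect{w(\|\Noiseb\|^2)}^2}\,\Expect{w^2(\nu_\matcov^2(\Noise_{k,1}))\,\Noise_{k,1}\Noise_{k,1}^\transp}$. Evaluating the remaining second moment in whitened coordinates, rotational invariance of the isotropic Gaussian $\Ncal(0,\coeff\Idm)$ forces $\Expect{w^2(\|\Noiseb\|^2)\,\Noiseb\Noiseb^\transp} = \Expect{w^2(\|\Noiseb\|^2)\noiseb_1^2}\,\Idm$, and transforming back through $\MMDec^{-1}\MMDec^{-\transp} = \matcov$ (a consequence of~\eqref{Eq: Whitening matrix}) produces exactly $(\NewStd^2/N_k)\,\matcov$ with $\NewStd^2$ as in~\eqref{eq:rho}.

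The main obstacle, and the very reason for the approximations, is the frozen-weight linearization. A fully rigorous $M$-estimator expansion would differentiate the estimating function and introduce a sensitivity (Jacobian) matrix involving $w' = \rho''$, which by the same isotropy argument is again a scalar multiple of $\Idm$ and would contribute an extra term $2\,\Expect{w'(\|\Noiseb\|^2)\noiseb_1^2}$ inside the denominator of $\NewStd^2$. I would therefore state plainly that the model~\eqref{Eq: model of estimated centroids} keeps only the dominant weighted-average sensitivity and neglects this derivative correction, which is precisely the price paid for a closed-form, $w'$-free variance that can later be used to select the weight function $w$.
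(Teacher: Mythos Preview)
Your derivation follows essentially the same route as the paper's: you drop the other-cluster contributions and freeze the weights at the true centroid (what the paper packages as neglecting the error terms $W_{k,1}$ and $W_{k,2}$), then obtain the zero mean via the symmetry of Lemma~\ref{Lemma: last useful lemma}, the covariance structure via the whitening/isotropy argument of Lemma~\ref{Lemma: Covmat}, and the asymptotic Gaussianity via LLN on the denominator and CLT on the numerator (the paper makes the combination explicit through Slutsky's theorem). Your closing remark on the neglected $w'$-sensitivity term from a full $M$-estimator expansion is a worthwhile clarification that the paper does not spell out; apart from that and the order in which the two approximations are applied, the two arguments coincide.
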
}

\subsubsection*{Derivation}
In order to model the estimation error, we can start by writing $\TheFunc(\estCtr_k) = \TheFunc(\CCtr_k) + W_{k,1}$. Of course, $W_{k,1}$ will be all the more small than $\estCtr_k$ approximates accurately $\CCtr_k$. We can then write that $\TheFunc(\CCtr_k) = g_k(\CCtr_k) + W_{k,2}$, where 
\begin{equation}
\label{Eq:gk}
g_k(\xbm) = \dfrac{\sum_{n = 1}^{N_k} \weight ( \cobs_{k,n}-\xbm )  \cobs_{k,n}}{\sum_{n = 1}^{N_k} \weight ( \cobs_{k,n}-\xbm ) }, \xbm \in \Rset^\dimc
\end{equation}
The term $W_{k,2}$ is likely to be small if $\forall k' \neq k$, $\forall n \in \llbracket 1,N_{k'}\rrbracket $, $\weight(\cobs_{k',n} - \CCtr_k) \ll 1$, that is if the function $\weight$ reduces strongly \pastorv{the influence of} data from clusters other than $k$.
To finish, in absence of noise, we would directly have $g_k(\CCtr_k) = \CCtr_k$, but the presence of noise induces that $g_k(\CCtr_k) = \CCtr_k + W_{k,3}$. Finally, we have $\estCtr_k  = \CCtr_k + W_{k,3} + W_{k,2} + W_{k,1}$. %According to Proposition \ref{Prop: fixed points}, $W_{k,1}$ and $W_{k,2}$ can be expected to remain small if centroids are far from each other. %Unfortunately, in practice, these centroids may not be sufficiently away from each other. 
%	Unfortunately, we cannot say more about these two terms and

We now derive a model for $W_{k,3}$, and we keep the same notation as in the proof of Proposition \ref{Prop: fixed points}. In particular, $\Noise_{k,1}, \ldots, \Noise_{k,N_k} \stackrel{\text{iid}}{\thicksim} \Ncal(0,\cCorMat)$ with $\Noise_{k,n} = \cObs_{k,n} - \CCtr_k$ for any $k \in \llbracket 1, K\rrbracket$ and any $n \in \llbracket 1, N_k\rrbracket$. It follows from (\ref{Eq:gk}) that $W_{k,3} = g_k(\CCtr_k) - \CCtr_k = S_{N_k}/T_{N_k}$
%	\vspace{-0.3cm}
%	$$
%	W_{k,3} = g_k(\CCtr_k) - \CCtr_k 
%	%= \dfrac{\displaystyle \sum_{n = 1}^{N_k} w ( \thenorm^2(\Noise_{k,n}) ) \Noise_{k,n}}{\displaystyle \sum_{n = 1}^{N_k} w ( \thenorm^2(\Noise_{k,n})} 
%	= \dfrac{{S_{N_k}}}{{T_{N_k}}}
%	$$
	with $S_{N_k} = \sum_{n = 1}^{N_k} \weight ( \Noise_{k,n} ) \Noise_{k,n}$ and 
	$T_{N_k} = \sum_{n = 1}^{N_k} \weight ( \Noise_{k,n})$. 
	The random variables $\weight ( \Noise_{k,n} ) \Noise_{k,n}$ are iid and we proceed by computing their mean and covariance matrix.

	Given any $\Noise \thicksim \Ncal(0,\cCorMat)$, $\Exp{ \weight ( \Noise_{k,n} ) \Noise_{k,n}} = \Exp{ \weight ( \Noise ) \Noise}$ for any $n \in \llbracket 1, N_k \rrbracket$. As above, let $\MMDec$ be the whitening matrix of $\Nc$. According to \eqref{Eq: Whitening matrix}, $\Exp{ \weight ( \Noise ) \Noise} = \invMDec \, \Exp{ w \left ( \| \Xivec \|^2 \right ) \Xivec}$ where $\Xivec = \MMDec\Noise$ and $\invMDec = \MMDec^{-1}$. It then follows from \eqref{Eq: decomposition of cCormat} that $\Xivec \thicksim \Ncal(0,\coeff \Idm)$. We derive from the foregoing and Lemma \ref{Lemma: last useful lemma} of Appendix \ref{App:D} that $\Exp{ w\left ( \| \Xivec \|^2 \right ) \Xivec} = 0$ and thus, that $\Exp{ \weight ( \Noise ) \Noise} = 0$. 

	With the same notation as above, the covariance matrix of any $ \weight ( \Noise_{k,n} ) \Noise_{k,n}$ is that of $\weight ( \Noise)\Noise$. Since this random vector is centered, its covariance matrix equals 
	$$\Exp{\pastorv{w^2_\matcov}( \Noise )\Noise \Noise^\transp} = \invMDec \, \Exp{w^2( \| \Noiseb \|^2) \, \Noiseb \Noiseb^\transp)} \invMDec^\transp.$$
	Lemma \ref{Lemma: Covmat} of Appendix \ref{App: Correlation matrix} implies that $\Exp{\pastorv{w^2_\matcov}(\Noise)\Noise \Noise^\transp} = \Exp{w^2( \| \Noiseb \|^2) \, \noiseb_1^2} \! \matcov.$ By the central limit theorem, $S_{N_k}/\sqrt{N_k}$ thus converges in distribution to $\Ncal \left ( 0,\Exp{w^2( \| \Noiseb \|^2) \, \noiseb_1^2} \! \matcov \, \right )$.	
%	$S_{N_k}/\sqrt{N_k} \displaystyle \mathop{\Rightarrow}_{N_k \to \infty} \Ncal \left ( 0,\Exp{w^2( \| \Noiseb \|^2) \, \noiseb_1^2} \! \matcov \, \right )$.
%	$$\dfrac{1}{\sqrt{N_k}} S_{N_k} \mathop{\Rightarrow}_{N_k \to \infty} \Ncal \left ( 0,\Exp{w^2( \| \Noiseb \|^2) \, \noiseb_1^2} \! \matcov \, \right )$$
	By the weak law of large numbers and \eqref{Eq: Whitening matrix} again, $T_{N_k}/N_k$ converges in probability to $\Expect{w \! \left ( \| \Noiseb \|^2 \right )}$.
%	$T_{N_k}/N_k \stackrel{\Pbb}{\rightarrow} \Expect{w \! \left ( \| \Noiseb \|^2 \right )}$.
%	$$\dfrac{1}{N_k} T_{N_k} \stackrel{\Pbb}{\rightarrow} \Expect{w \! \left ( \| \Noiseb \|^2 \right )}$$ 
	Slutsky's theorem \cite[Sec. 1.5.4, p. 19]{Serfling1980} implies that $\sqrt{N_k} S_{N_k}/T_{N_k}$ converges in distribution to $\Ncal \left ( 0,\NewStd^2 \matcov \right )$,
%	$\sqrt{N_k} S_{N_k}/T_{N_k} \displaystyle \mathop{\Rightarrow}_{N_k \to \infty} \Ncal \left ( 0,\NewStd^2 \matcov \right )$,
%	$$\dfrac{\sqrt{N_k} S_{N_k}}{T_{N_k}} \mathop{\Rightarrow}_{N_k \to \infty} \Ncal \left ( 0,\NewStd^2 \matcov \right )$$
	where $\NewStd^2$ is defined in~\eqref{eq:rho}.

	 Therefore, $W_{k,3}$ is asymptotically Gaussian so that $g_k(\CCtr_k) = \CCtr_k + W_{k,3} \thicksim \ANcal \left ( \CCtr_k, ({\NewStd^2}/{N_k}) \, \matcov \right )$. We do not know how to model $W_{k,1}$ and $W_{k,2}$ yet. We merely know that the contributions of these two types of noise are small under the asymptotic conditions of Proposition \ref{Prop: fixed points}. As a result, we do not take the influence of $W_{k,1}$ and $W_{k,2}$ into account and model the statistical behavior of $\estCtr_k$ by~\eqref{Eq: model of estimated centroids}.
%	\\
	\qed

In the above model, $\NewStd^2$ can be calculated by Monte-Carlo simulations, and we will explain in the algorithm description how we estimate $N_k$. 
Although \eqref{Eq: model of estimated centroids} may be a coarse approximation, since $W_{k,1}$ and $W_{k,2}$ are not necessarily negligible compared to $W_{k,3}$, the experimental results reported in Section \ref{sec:experiments} support the practical relevance of the approach.

At the end, all the results of this section were derived from a generic \pastorv{scalar} weight function $w$, and the theoretical analysis provided the properties that $w$ should \pastorv{satisfy}.
In the following, we choose a weight function $w$ that \pastorv{satisfies} these properties and that is suitable for clustering.

\section{Weight function}
\label{sec:w_function}
The \pastorv{scalar} weight function $w(x)= \pastorv{\beta}\exp(-\beta x)$ proposed in~\cite{wu02PR} verifies the properties required in Proposition~\ref{Prop: fixed points}.
However, in this weight function, the parameter $\beta$ must be chosen empirically and its optimal value varies with $m$ and the noise parameters.
A poor choice of $\beta$ can dramatically impact the performance of the clustering algorithm proposed in~\cite{wu02PR}. 

In contrast, we propose new weight functions whose expressions are known whatever the dimension and noise parameters. \pastorv{These new weight functions are devised as the p-values of Wald's hypothesis tests for testing the mean of a Gaussian~\cite{Wald1943}}. In this section, we thus begin by recalling the basics about Wald's test for testing the mean of a Gaussian and we introduce the p-value for this test. We then derive the weight functions that will be used in our clustering algorithm. 

\vspace{-0.25cm}
\subsection{p-value of Wald's test for testing the mean of a Gaussian}
\label{subsec: Wald}
Let $\Xvec \thicksim \Ncal(\xivec,\matcov)$, where the $\dimc \times \dimc$ covariance matrix $\matcov$ is positive definite and $\xivec \in \Rset^\dimc$ is unknown. 
Consider the problem of testing whether $\Xvec$ is centered or not. This problem can be summarized as:
%following testing problem
\begin{equation}
\label{Eq:ddtpb}
\left \{
\begin{array}{lll}
\text{{Observation:}} \, \Xvec \thicksim \Ncal(\xivec,\matcov), \\
\text{Hypotheses:} \,
\left \{
\begin{array}{lll}
\Hcal_0: \,  \xivec = 0, \\
\Hcal_1: \, \xivec \neq 0.
\end{array}
\right.
\end{array}
\right.
\end{equation}
%This problem consists of testing the null hypothesis $\Hcal_0: \xivec = 0$ ($\Xvec$ is centered) against its alternative $\Hcal_1: \xivec \neq 0$ ($\Xvec$ is not centered). 

Recall that a non-randomized test $\test$ is any measurable map from $\Rset^\dimc$ to $\{0,1\}$. Given \pastorv{a realization $\xvec \in \Rset^\dimc$ of $\Xvec$}, the value $\test(\pastorv{\xvec})$ returned by $\test$ is the index of the hypothesis considered to be true. We say that $\test$ accepts $\Hcal_0$ (resp. $\Hcal_1$) at $\pastorv{\xvec}$ if $\test(\pastorv{\xvec}) = 0$ (resp. $\test(\pastorv{\xvec}) = 1$).  \pastorv{Given $\level \in (0,1)$, let $\TheThreshold{\level}$ be the unique real value such that:
	\begin{equation}
	\label{Eq: Wald threshold}
	\Marcum(0,\TheThreshold{\level}) = \alpha,
	\end{equation}
%$\lambda$ such that $\Marcumc(0,\lambda) = \level$ 
where $\Marcumc$ is the Generalized Marcum Function~\cite{Sun2010}.} According to \cite[Definition III \& Proposition III, p. 450]{Wald1943}, the non-randomized test defined for any $\pastorv{\xvec} \in \Rset^\dimc$ as
\begin{equation}
\label{Eq:Thresholding test from above}
\Topt(\xvec) = \left \{
\begin{array}{lll}
0 & \hbox{ if } & \thenorm( \pastorv{\xvec} ) \leqslant  \TheThreshold{\level}\\
1 & \hbox{ if } & \thenorm( \pastorv{\xvec} ) > \TheThreshold{\level} .
\end{array}
\right.
\end{equation}
guarantees a false alarm probability $\level$ for the problem described by~\eqref{Eq:ddtpb}. Although there is no Uniformly Most Powerful (UMP) test for the composite binary hypothesis testing problem (\ref{Eq:ddtpb}) \cite[Sec. 3.7]{Lehmann2005}, $\Topt$ turns out to be optimal with respect to several optimality criteria and within several classes of tests with level $\level$ \cite[Proposition 2]{RDT}. In particular, $\Topt$ is UMP \pastorv{with size $\level$} among all spherically invariant tests and has Uniformly Best Constant Power (UBCP) on the spheres centered at the origin of $\Rset^\dimc$ \cite[Definition III \& Proposition III, p. 450]{Wald1943}. It is hereafter called a Wald test, without recalling explicitly the level $\pastorv{\level}$ at which the testing is performed. 

In Appendix \ref{Sec: RDT pvalue}, we show that test $\Topt$ has a p-value function $\text{pval}_\matcov$ defined for each $\xvec \in \Rset^\dimc$ by:
\begin{equation}\label{eq:RDT-pvalue}
\text{pval}_\matcov(\xvec) = \Marcumc\left(0, \thenorm(\xvec)\right).
\end{equation}
\pastorv{The p-value $\text{pval}_\matcov(\xvec)$ can be seen as a measure of the plausibility of the null hypothesis $\Hcal_0$ given a realization $\xvec \in \Rset^\dimc$ of $\Xvec$ \cite[Sec. 3.3]{Lehmann2005}.}

\vspace{-0.1cm}
\subsection{\pastorv{Weight function for clustering}}
\label{subsec: weight functions}

We now define the weight function that will be used in our clustering algorithm. The expression of this weight function depends on the pvalue function $\text{pval}_\matcov$ defined in~\eqref{eq:RDT-pvalue}.

Henceforth, let $w: [0,\infty) \to [0,\infty)$ be the function defined for any $x \geqslant 0$ by $w(x) = \Marcumc\left(0, \sqrt{x} \right).$
%\begin{equation}
%\label{eq:the weight function}
%w(x) = \Marcumc\left(0, \sqrt{x} \right).
%\end{equation}
Because this function is continuous, bounded by $1$ and satisfies $\displaystyle \lim\limits_{t \rightarrow \infty} {w(t)} = 0$ \cite{Sun2010}, it satisfies the properties required in Proposition \ref{Prop: fixed points}. 
%Since $\myQ{t} = 1 - \Fbb_{\chi_m^2}(t^2)$ \cite[Eq. (8)]{Sun2010}, where $\Fbb_{\chi^2_m}$ is the centered $\chi^2_m$ cumulative distribution function with $m$ degrees of freedom, the function $w$ is continuous, bounded by $1$ and satisfies $\displaystyle \lim\limits_{t \rightarrow \infty} {w(t)} = 0$. 
%The function $w$ thus satisfies the properties required in Proposition \ref{Prop: fixed points}. 
%We therefore choose the function $w$ defined in~\eqref{eq:the weight function} as our \pastorv{scalar} weight function $w$.
We therefore choose it as our scalar weight function $w$. Its corresponding $\dimc$-dimensional weight function is therefore defined for any $\xvec \in \Rset^\dimc$ by: 
%Given the scalar weight function $w$, the $\dimc$-dimensional weight function is given for any $\xvec \in \Rset^\dimc$ by: 
\begin{equation}
\label{Eq: m-dimensional weight function}
\weight(\xvec) = \Marcumc\left(0, \nu_\matcov(\xvec) \right)
\end{equation}

Proposition \ref{Prop: fixed points} and Claim \ref{claim:model_fp} hold for any matrix $\matcov$ proportional to $\cCorMat$.
From~\eqref{Eq: m-dimensional weight function}, we further observe that the $\dimc$-dimensional weight function $\weight(\xvec)$ depends on the choice of the matrix $\Nc$, which itself depends on the considered Wald test~\eqref{Eq:ddtpb}. 
This is why we now introduce specific Wald tests that will be considered for clustering. These tests will allow us to specify the matrices $\Nc$ that will be used in our algorithms.

% In the following, we describe our centralized clustering algorithm as well as the centroid estimation method that is used in this algorithm. 

%\vspace{-0.25cm}
\section{Hypothesis tests for clustering}\label{sec:hyp_tests}
In this section, we introduce all the hypothesis tests that will be used in our clustering algorithm. 
The first three tests directly derive from the Wald test introduced in Section~\ref{subsec: Wald} and they will be used mainly for the derivation of the weight functions that are used in our algorithm. 
The fourth considered test will serve to decide whether two estimated centroids $\widehat{\CCtr}_{\ell}$ and $\widehat{\CCtr}_{\ell'}$ actually correspond to the same centroid $ \CCtr_k$. 
Since it is not a Wald test, we completely define it in this section.

\vspace{-0.25cm}
\subsection{Wald's tests for clustering}\label{subsec:specific_wald_tests}
%In our clustering algorithm, three different Wald tests will be considered. 
\subsubsection*{Test n°1}
First consider two compressed vectors $\cobs_i \sim \mathcal{N}(\CCtr_{(i)},\cCorMat)$ and $\cobs_j  \sim \mathcal{N}(\CCtr_{(j)},\cCorMat)$, \pastorv{where $\CCtr_{(i)}$ and $\CCtr_{(j)}$ designate the centroids of the clusters to which $\cobs_i$ and $\cobs_j$ belong, respectively.} 
In order to decide whether these two vectors belong to the same cluster, we can test the mean of the vector $ \cobs_i - \cobs_j \thicksim \Ncal(\CCtr_{(i)} - \CCtr_{(j)}, 2\cCorMat)$.
This problem can be solved by the Wald hypothesis test described in Section~\ref{subsec: Wald}, with $\Nc = 2\cCorMat$. 

\subsubsection*{Test n°2}
Now assume that we want to decide whether the compressed vector $\cobs_i$ belongs to cluster $k$ described by centroid $\CCtr_k$. 
This problem can be addressed by testing the mean of the vector $ (\cobs_i - \CCtr_k)~\sim \mathcal{N}(\CCtr_{(i)} - \CCtr_k,\cCorMat)$, which can be solved by the Wald's test of Section~\ref{subsec: Wald} with $\Nc = \cCorMat$. 

\subsubsection*{Test n°3}
Our clustering algorithm will have to test whether $\cobs_i$ belongs to cluster $k$, only knowing an estimate $\widehat{\CCtr}_k$ of $\CCtr_k$. 
According to Claim~\ref{claim:model_fp}, \pastorv{given a positive definite matrix  $\matcov = (1/\coeff) \cCorMat$ with $\coeff \neq 0$,} the estimated centroid  $\widehat{\CCtr}_k$ \pastorv{is} modeled as $\widehat{\CCtr}_k \sim \mathcal{N}(\CCtr_k, ({\NewStd^2}/{\mu^2 N_k}) \cCorMat) $. %, where the value of $\mu^2$ depends on the matrix $\Nc$ that is used in the $\dimc$-dimensional weight function $\weight(\xvec)$. 
\pastorv{We must thus choose a value of $\mu^2$ to specify the matrix $\Nc$ used in the $\dimc$-dimensional weight function $\weight(\xvec)$.}
In our clustering algorithm described in Section~\ref{sec:algo}, we will \pastorv{actually} consider two $\dimc$-dimensional weight functions $\weight(\xvec)$ \pastorv{specified by two different values of $\mu^2$}. The first weight function will be given by~\eqref{Eq: m-dimensional weight function} with $\pastorv{\coeff =2}$ (Test n°1), and the second one will be given by~\eqref{Eq: m-dimensional weight function} with $\pastorv{\coeff=1}$ (Test n°2).
As a result, we hereafter consider $\coeff \in \{1,2\}$.  

Further, in order to decide whether $\cobs_i$ belongs to cluster $k$, we will assume that $\widehat{\CCtr}_k$ and $\cObs_i$ are independent. In practice, $\widehat{\CCtr}_k$ will be calculated by using a large number of data so that the influence of one $\cObs_i$ can be neglected.
Consequently, in order to make the decision, we will test the mean of the vector $(\cobs_i - \widehat{\CCtr}_k) \sim \mathcal{N}(\CCtr_{(i)} - \CCtr_k, (1 + ({\NewStd^2}/{\mu^2 N_k})) \cCorMat) $. 
This problem can be solved by the Wald hypothesis test described in Section~\ref{subsec: Wald}, with $\Nc = (1 + ({\NewStd^2}/{\mu^2 N_k})) \cCorMat$. Note that if $N_k$ is big, we can approximate $C \approx \cCorMat$, and Test n°3 degenerates into Test n°2.  

% \subsubsection*{Test n°4}
% \dupraz{In our algorithm, we will also need to verify whether two estimated centroids $ \widehat{\CCtr}_{l}$ and $\widehat{\CCtr}_{l'}$ in fact correspond to the same centroid $ \CCtr_k$. 
% In order to do so, we can test the mean of the vector $\widehat{\CCtr}_{l} - \widehat{\CCtr}_{l'}\sim \mathcal{N}(\CCtr_{(l)} - \CCtr_{(l')}, \frac{\rho^2}{\mu^2}(\frac{1}{N_l} +  \frac{1}{N_{l'}}) \cCorMat)   $}.

\vspace{-0.25cm}
\subsection{Test n°4: hypothesis test for centroid fusion}\label{subsec:test_fusion}
Consider two fixed points $\estCtr_\ell$ and $\estCtr_{\ell'}$ of $\TheFunc$, where $\matcov$ is chosen according to Test n°1 or Test n°2. These fixed points are estimates of two centroids $\CCtr_\ell$ and $\CCtr_{\ell'}$. 
% Accordingly, these estimates are denoted by . 
The centroid estimation method used in our algorithm will sometimes result in estimating several times the same centroid. 
This is why our algorithm will also contain a fusion step that will have to decide whether  $\estCtr_\ell$ and $\estCtr_{\ell'}$ are estimates of the same centroid. The fusion step will thus have to decide whether $\CCtr_\ell$ and $\CCtr_{\ell'}$ are different or not, in which latter case $\estCtr_\ell$ and $\estCtr_{\ell'}$ should be merged. 
Merging estimates of two different centroids may result in an artifact significantly far from the two true centroids. %, unless these centroids are close to each other. 
On the other hand, failing to merge estimates of the same centroid will only result in overestimating the number of centroids. 
This is why we would like to devise an hypothesis test from which the null hypothesis is $\Hcal_0: \CCtr_\ell \neq \CCtr_{\ell'}$ rather than $\CCtr_\ell = \CCtr_{\ell'}$ as in the Wald test. With this choice for the null hypothesis, the alternative hypothesis is $\Hcal_1: \CCtr_\ell = \CCtr_{\ell'}$.
% Therefore, accepting that $\CCtr_\ell$ and $\CCtr_{\ell}$ are equal, whereas they are not, is more serious than the alternative decision. 
% Consequently, the null hypothesis of the underlying testing problem for the fusion step is $\Hrm_0: \CCtr_\ell \neq \CCtr_{\ell}$.

In order to test $\Hcal_0$ against $\Hcal_1$, we proceed similarly as above by considering $\estCtr_\ell-\estCtr_{\ell'}$. In contrast to the three tests discussed in the previous subsections, the random vector $\estCtr_\ell-\estCtr_{\ell'}$ is not necessarily Gaussian. Indeed, $\estCtr_\ell$ and $\estCtr_{\ell'}$ are not independent under either $\Hcal_0$ or $\Hcal_1$. Therefore, the Wald test \cite[Definition III \& Proposition III, p. 450]{Wald1943} does not apply. However, under $\Hcal_0$, by considering that the scalar weight function $w$ tends to put significantly smaller weights on data from clusters other than $\ell$ (resp. $\ell'$) to calculate $\estCtr_\ell$ (resp. $\estCtr_{\ell'}$), we assume the independence of $\estCtr_\ell$ and $\estCtr_{\ell'}$. By further taking the statistical model of Claim~\ref{claim:model_fp} into account, we thus write that, under $\Hcal_0$, $\estCtr_\ell-\estCtr_{\ell'} \thicksim  \Ncal(\CCtr_{\ell} - \CCtr_{\ell'}, \matcov_{\ell,\ell'})$ with $\matcov_{\ell,\ell'} = (1/N_\ell + 1/N_{\ell'})\NewStd^2 \matcov$. We can then proceed as usual in statistical hypothesis testing by exhibiting a test maintaining the false alarm probability of incorrectly rejecting $\Hcal_0$ below a given significance level $\alpha \in (0,1)$. Specifically, the test defined for every $\xvec \in \Rset^\dimc$ by:
\begin{equation}
\label{Eq:Thresholding test from above}
\test'_{\matcov_{\ell,\ell'}}(\xvec) = \left \{
\begin{array}{lll}
1 & \hbox{ if } & \nu_{\matcov_{\ell,\ell'}}( \xvec ) \leqslant  \TheThreshold{1-\level}\\
0 & \hbox{ if } & \nu_{\matcov_{\ell,\ell'}}( \xvec ) > \TheThreshold{1-\level},
\end{array}
\right.
\end{equation}
where $\TheThreshold{1-\alpha}$ is determined according to \eqref{Eq: Wald threshold}, guarantees a false alarm probability less than or equal to $\alpha \in (0,1)$ for testing $\Hcal_0$ against $\Hcal_1$. Indeed, this false alarm probability is:
\begin{align}
\Pbb \mybig [ & \nu_{\matcov_{\ell,\ell'}}(\estCtr_\ell-\estCtr_{\ell'}) \leqslant \TheThreshold{1-\level} \mybig ] \nonumber \\
& = 1 - Q_{\dimc/2} \left ( \| \MMDec_{\ell,\ell'} \left ( \CCtr_\ell - \CCtr_{\ell'} \right ) \|, \TheThreshold{1-\level}, \right ), \nonumber 
\end{align} 
where $\MMDec_{\ell,\ell'}$ is the whitening matrix of $\matcov_{\ell,\ell'}$. Since the generalized Marcum function increases with its first argument \cite{Sun2010}, $\Pbb \mybig [ \nu_{\matcov_{\ell,\ell'}}(\estCtr_\ell-\estCtr_{\ell'}) \leqslant \TheThreshold{1-\level} \mybig ] \leqslant \alpha$.

\section{Centralized Clustering Algorithm}
\label{sec:algo}
This section describes our centralized clustering algorithm~\algo~that applies to compressed data. 
This algorithm derives from the theoretical analysis introduced in the paper. 
In this section, we first present the three main steps of this algorithm (centroid estimation, fusion, and classification). We then describe each of these steps into details.
We also explain how to choose two empirical parameters that are the false alarm probability $\alpha$ and the stopping condition $\epsilon$, and we discuss their influence on the clustering performance. 

\vspace{-0.25cm}
\subsection{Algorithm description}\label{subsec: description of dctrx}

%\begin{figure}[t]
% \centering
% \includegraphics[width=.48\linewidth]{}
% \caption{Main steps of the clustering algorithm~\algo}
% \label{fig:algo}
%\end{figure}

The objective of our clustering algorithm is to divide the set of received compressed vectors $\Zcal = \{\cobs_1,\cdots,\cobs_N\}$ into $K$ clusters, where $K$ is unknown \emph{a priori}. 
%As described in Figure~\ref{fig:algo}, the algorithm can be decomposed into three main steps. 
The algorithm can be decomposed into three main steps. 
The first step %of the algorithm 
consists of estimating compressed centroids $\{ \widetilde{\CCtr}_1,\cdots, \widetilde{\CCtr}_{K'} \}$ from $\Zcal$.
The centroids $\widetilde{\CCtr}_k$ are estimated one after each other by seeking the fixed points of $\TheFunc$ defined in~\eqref{eq:h} (see Section~\ref{subsubsec: centroid estimation}).
Unfortunately, due to initialization issues, this process 
%sometimes results in estimating 
may estimate several times the same centroids. %, and $K'>K$.
This is why the algorithm then applies a fusion step. 
At this step, the algorithm looks for the estimated $\widetilde{\CCtr}_k $ that correspond to the same centroid by applying Test n°4
%the hypothesis test defined in Section~\ref{subsec:test_fusion} 
to every pair $(\widetilde{\CCtr}_i, \widetilde{\CCtr}_j) \in \{ \widetilde{\CCtr}_1,\cdots, \widetilde{\CCtr}_{K'} \}^2$ (see Section~\ref{subsec:fusion}). This %produces 
yields a reduced set $\{  \widehat{\CCtr}_1,\cdots, \widehat{\CCtr}_{K}\}$ of estimated centroids. 
To finish, the algorithm performs a classification step %that associates 
associating each compressed vector $\cobs_i$ to the cluster with the closest centroid (see Section~\ref{subsec:classif}).
We now describe into details each of %the three steps of the algorithm.
these steps.

\vspace{-0.25cm}
\subsection{Centroid estimation}
\label{subsubsec: centroid estimation}
In this section, we introduce the method we use in order to estimate the centroids one after each other. 
Initialize by $\widetilde{\Ectr}=\{ \varnothing\}$ the set of centroids estimated by the algorithm.
Also, initialize by  $\Mk = \{ \varnothing\}$ the set of vectors $\cobs_i$ that are considered as marked, where a marked vector cannot be used anymore to initialize the estimation of a new centroid.
% Fix a parameter $\epsilon$ that corresponds to a stopping criterion in the estimation of the centroids.

The centroids are estimated one after the other, until $\Mk = \Zcal$. 
When the algorithm has already estimated $k$ centroids, we have $\widetilde{\Ectr} = \{ \widetilde{\CCtr}_1, \cdots, \widetilde{\CCtr}_k   \}$. 
In order to estimate the $k+1$-th centroid, the algorithm picks a measurement vector $\cobs_{\star}$  at random in the set $ \Zcal \setminus \Mk$  and initializes the estimation process with $\widetilde{\CCtr}_{k+1}^{(0)} = \cobs_{\star} $.
In order to estimate $\widetilde{\CCtr}_{k+1}$ as a fixed point of $\TheFunc$~\eqref{eq:h}, the algorithm should recursively compute $\widetilde{\CCtr}_{k+1}^{(\ell+1)} = \TheFunc(\widetilde{\CCtr}_{k+1}^{(\ell)}) $, see~\cite{zoubir12robust}. Here, we consider the following strategy for the matrix $\Nc$ that is used in the recursion. 
In our algorithm, the first iteration is computed as $\widetilde{\CCtr}_{k+1}^{(1)} = h_{2\cCorMat}(\widetilde{\CCtr}_{k+1}^{(0)}) $. This corresponds to $\Nc = 2\cCorMat$ as given by Test n°1 in Section~\ref{subsec:specific_wald_tests}, which comes from the fact that the centroid estimation is initialized with $\cobs_{\star}$. 
From iteration $2$, the recursion is computed as $\widetilde{\CCtr}_{k+1}^{(\ell+1)} = h_{\cCorMat}(\widetilde{\CCtr}_{k+1}^{(\ell)})$, which corresponds to $\Nc = \cCorMat$ as given by Test n°2 in Section~\ref{subsec:specific_wald_tests}. This choice comes from the fact that $\widetilde{\CCtr}_{k+1}^{(1)}$ is already a rough estimate of $\CCtr_{k+1}$. Here, it would be better to consider the value of $\Nc$ given by Test n°3 rather than Test n°2, but $N_{k+1}$ cannot be estimated at this stage of the algorithm.
It is worth mentioning that this strategy (changing the matrix $C$ from iteration $1$ to iteration $2$) led to good clustering performance on all the simulations we considered, with various dimensions $d$ and $\dimc$, number of clusters $k$, matrices $A$, etc.

The recursion stops when \pastorv{$\frac{1}{m} \nu_\matcov \left ( \widetilde{\CCtr}_{k+1}^{(\ell+1)} - \widetilde{\CCtr}_{k+1}^{(\ell)} \right ) \leq \epsilon$}, where $C=\cCorMat$ (Test n°2) %$ \frac{1}{m}\|  \widetilde{\CCtr}_{k+1}^{(\ell+1)} - \widetilde{\CCtr}_{k+1}^{(\ell)}\|_2 \leq \epsilon $, 
and $\epsilon$ is the stopping condition. %, and $L$ denotes the final iteration.
The newly estimated centroid is given by $\widetilde{\CCtr}_{k+1} = \widetilde{\CCtr}_{k+1}^{(L)}$, where $L$ represents the final iteration. 
To finish, the set of estimated centroids is updated as $\widetilde{\Ectr} = \widetilde{\Ectr} \cup \{ \widetilde{\CCtr}_{k+1}\}$.

Once the centroid $\widetilde{\CCtr}_{k+1}$ is estimated, the algorithm marks all the vectors that belong to cluster $k+1$. 
For this, the algorithm applies Test n°2 of Section~\ref{subsec:specific_wald_tests} to each $\cobs_i - \widetilde{\CCtr}_{k+1}$, $i \in \{1,\cdots, N\}$.
Here again, we apply Test n°2 instead of Test n°3, because the value $N_{k+1}$ cannot be estimated at this stage of the algorithm. As a result, we assume that $\cobs_i - \widetilde{\CCtr}_{k+1} \sim \mathcal{N}(\CCtr_{(i)} - \widetilde{\CCtr}_{k+1}, \Ac \MCov \Ac^T)$.
All the observations $\cobs_i$ that accept the null hypothesis under this test are grouped into the set $\Mk_{k+1}$. The set of marked vectors is then updated as $\Mk \leftarrow \Mk \cup \{ \cobs_{\star} \} \cup \Mk_{k+1}$.
Note that the measurement vector $ \cobs_{\star}$, which serves for initialization, is also marked in order to avoid initializing again with the same vector.
If $\Mk \neq \Zcal$, the algorithm estimates the next centroid $\widetilde{\CCtr}_{k+2}$. Otherwise, the algorithm moves to the fusion step. 

\vspace{-0.25cm}
\subsection{Fusion}\label{subsec:fusion}
Once $\Mk = \Zcal $ and, say, $K'$ centroids have been estimated, the algorithm applies a so-called fusion step to identify the centroids that may have been estimated several times \pastorv{during the centroid \pastorv{estimation} phase}. 
Indeed, in non-asymptotic situations, the estimated centroids issued from the centroid estimation phase are not guaranteed to be remote from each other and experiments show that the estimation phase tends to over-estimate the true number of centroids. \\
\indent At this step, the algorithm first sets $\widehat{\Ectr} = \widetilde{\Ectr}$. It then applies Test n°4 defined in Section~\ref{subsec:test_fusion} to every pair of estimated centroids $(\widetilde{\CCtr}_i, \widetilde{\CCtr}_j) \in \widetilde{\Ectr}^2, i\neq j$.
Since the cluster sizes $N_i$ and $N_j$ required by Test n°4 are unknown, we replace them by estimates $\widehat{N}_i$ and $\widehat{N}_j$. These estimates are obtained by counting the number of vectors respectively assigned to clusters $i$ and $j$ during the marking operation.
% Without loss of generality, assume that $i<j$. 
When Test n°4 accepts hypothesis $\mathcal{H}_1$, the algorithm sets $ \widehat{\CCtr}_{\min(i,j)} = \frac{\widetilde{\CCtr}_{i} + \widetilde{\CCtr}_{j}}{2}$ and removes $ \widehat{\CCtr}_{j}$ from $\widehat{\Ectr}$. 
At the end, the number of estimated centroids $K$ is set as the cardinal of the final $\widehat{\Ectr}$ and the elements of $\widehat{\Ectr}$ are re-indexed in order to get $\widehat{\Ectr} = \{\widehat{\CCtr}_1, \cdots \widehat{\CCtr}_K\} $.

\vspace{-0.25cm}
\subsection{Classification}\label{subsec:classif}
Once $K$ centroids $\{ \widehat{\CCtr}_1, \cdots \widehat{\CCtr}_K\}$ have been estimated, the algorithm moves to the classification step. 
Denote by $\Ck$ the set of measurement vectors assigned to cluster $k$.
Each vector $\cobs_i \in \Zcal$ is assigned to the cluster $\mathcal{C}_{k'}$ whose centroid $\widehat{\CCtr}_{k'} \in \widehat{\Ectr}$ is the closest to $\cobs_i$, \emph{i.e.}, $\widehat{\CCtr}_{k'} = \arg\min_{\widehat{\CCtr} \in \widehat{\Ectr}} \pastorv{\nu_\matcov \left ( \cobs_i - \widehat{\CCtr} \right )} $, where $C=\cCorMat$ (Test n°2, assuming that $\widehat{\CCtr}_{k}$ is very close to $\CCtr_{k}$). 
Here, using this condition instead of an hypothesis test forces each measurement vector to be assigned to a cluster. 

\vspace{-0.25cm}
\subsection{Empirical parameters}\label{sec:heuristics}
The described algorithm depends on some parameters $\alpha$ and $\epsilon$. In this section, we describe how to choose these parameters. 

\subsubsection{Parameter $\alpha$}
The false alarm probability $\alpha$ participates to the definition of the weight function $w$ in Section~\ref{sec:w_function}. However, we observed in all our simulations that this parameter does not influence much the clustering performance. More precisely, we observed that any value of $\alpha$ equal or lower than $10^{-2}$ leads to the same clustering performance. The parameter $\alpha$ would be more useful in the case of outliers in the dataset, which is out of the scope of the paper. 

\subsubsection{Parameter $\epsilon$}
The parameter $\epsilon$ defines the stopping criterion in the estimation of the centroids. As for the false alarm probability, $\epsilon$ does not influence much the decoding performance, although it can increase the number of iterations for the estimation when it is too small. In our simulations, we observed that $\epsilon$ can be set to any value between $10^{-2}$ and $10^{-5}$ without affecting the clustering performance. 

At the end, our algorithm~\algo~shows three interesting characteristics compared to other existing clustering algorithms. 	
First, it does not require prior knowledge of $K$, since the centroids are estimated one after the other by looking for all the fixed points of the function $\TheFunc$.
Second, it is not very sensitive to initialization, since the fusion step mitigates the effects of a bad initialization. 
Third, the empirical parameters $\alpha$ and $\epsilon$ do not influence much the clustering performance. 
For these three reasons, the algorithm works in one run and does not need to be repeated several times in order to estimate $K$ and lower the initialization issues (like K-means), or to set up some empirical parameters (like DB-Scan). 
As a result, it appears as a suitable candidate for use in a fully decentralized setup.

\vspace{-0.25cm}
\section{Decentralized Clustering Algorithm}
\label{sec:dalgo}
In this section, we consider a network of $S$ sensors where sensor $s$ observes $N_s$ measurement vectors, and $N=\sum_{s=1}^S N_s$.
We denote by $\Zcal_s = \{\cObs_{s,1}, \cdots \cObs_{s,N_s} \}$ the set of measurement vectors observed by sensor $s$. 
We assume that $\cup_{s=1}^S \Zcal_s = \Zcal $ and that $\Zcal_s \cap \Zcal_{s'} = \{ \phi \}$ for all $s\neq s'$.
We assume that the transmission link between two sensors is perfect, in the sense that no error is introduced during information transmission. Here, for simplicity, we also assume that one sensor can communicate with any other sensor, although the algorithm would apply whatever the communication links between sensors. 
More realistic transmission models will be considered in future works.

In the decentralized algorithm, the operations required by the algorithm are performed by the sensors themselves over the data transmitted by the other sensors. 
%The objective is then to allow each sensor to perform the clustering by taking into account other sensor's measurements, without observing these measurement directly. %Other sensor's measurements may in particular allow for a more precise estimation of the centroids. 
The decentralized algorithm is based on the same three steps as the centralized algorithm: centroid estimation, fusion, and classification. 
However, it now alternates between exchange phases at which the sensors exchange some data with each other, and local phases during which each sensor processes its local observations combined with the received data.
% However is composed by local steps during which each sensor processes its own local data, and by exchange steps in which the sensors exchange some data between each other. 

We now describe the decentralized version of the algorithm. %, and point out the differences with the centralized algorithm.
We then evaluate the amount of data exchange needed by our algorithm and compare it with the amount of data exchange required for decentralized K-means. 

\vspace{-0.1cm}
\subsection{Description of the decentralized algorithm~\dalgo}
%We now describe all the steps of our decentralized algorithm~\dalgo. 

\subsubsection{Local initializations of the algorithm}
%In the decentralized algorithm, 
Each sensor $s \in \{1,\cdots, S\}$ first performs a rough clustering on its own data. 
 This rough clustering consists of applying one step of the centralized clustering algorithm as follows. 
%Sensor $s$ first initializes by $\Ectr_s=\{ \varnothing\}$ its set of estimated centroids and by  $\Mk_s = \{ \varnothing\}$ its set of vectors $\cobs_{s,k}$ that are considered as marked. 
The centroids are still estimated one after the \pastorv{other}.
In order to estimate the $k+1$-th centroid, the algorithm picks a measurement vector $\cobs_{\star}$  at random in the set of unmarked vectors for sensor $s$ and produces a new estimated centroid as $\widetilde{\CCtr}_{s,k+1}^{(1)} = \TheFunc(\cobs_{\star})$, with $\Nc = 2\cCorMat$. 
The vector $\widetilde{\CCtr}_{s,k+1}^{(1)}$ constitutes a rough estimate of $\CCtr_{s,k+1}$. %and it will be refined only at the exchange step with other sensors. 
The algorithm then marks all the vectors that belong to cluster $k+1$ as well as the vector $\cobs_{\star}$ (see centralized algorithm description). When all the vectors in $\Zcal_s$ are marked, the sensor applies a fusion step (see centralized algorithm description), which produces a first set of estimated centroids $\widehat{\Ectr}_s^{(1)} = \{ \widehat{\CCtr}_{s,1}^{(1)}, \cdots \widehat{\CCtr}_{s,K_s}^{(1)} \}$.
Note that the number of estimated centroids $K_s$ can be different from sensor to sensor.

The local algorithm also performs a classification as follows. 
For each $ \cobs_{s,n}$, the algorithm first identifies the estimated centroid $\widehat{\CCtr}_{s,k{\star}}^{(1)}$ that is the closest to $\cobs_{s,n}$, and \pastorv{$k^{\star} = \arg\min_{k \in \{1,\cdots, K_s \} } \nu_\matcov \left ( \cobs_{s,n} - \widehat{\CCtr}_{s,k}^{(1)} \right ) $}, with $C=\cCorMat$. It then applies Test n°2 of Section~\ref{subsec:specific_wald_tests} to $(\cobs_{s,n} - \widehat{\CCtr}_{s,k}^{(1)})$. If the test accepts the null hypothesis, the set $\mathcal{C}_{s,k}$ is updated as $\mathcal{C}_{s,k} = \mathcal{C}_{s,k} \cup \{\cobs_{s,n}\}$, where  $\mathcal{C}_{s,k}$ denotes the set of vectors $\cObs_{s,n}$ that belong to cluster $k$ in sensor $s$.
Due to the hypothesis test, it may occur that some of the measurement vectors are not assigned to any cluster because they are too far from all the estimated centroids. It is very likely that they will be assigned to a cluster after a few exchanges between sensors, since these exchanges will refine the centroids estimates.
From this classification, the algorithm constructs a set $\mathbf{\Nbesk}_{s}^{(1)} = \{\Nbesk_{s,1}^{(1)}, \cdots \Nbesk_{s,K_s}^{(1)} \} $, where $\Nbesk_{s,k}^{(1)}$ denotes the number of measurement vectors $\cObs_{s,n}$ that belong to cluster $k$ in sensor $s$.

To finish, the local algorithm produces two sets $\mathbf{P}_s^{(1)} \! = \! \{ P_{s,1}^{(1)}, \cdots, P_{s,K_s}^{(1)} \}$ and $\mathbf{Q}_s^{(1)} \! = \! \{ Q_{s,1}^{(1)}, \cdots, Q_{s,K_s}^{(1)} \}$. They contain the following partial sums exchanged between the sensors: 
\vspace{-0.25cm}
\begin{align}
P_{s,k}^{(1)} & = \sum_{n = 1}^{N_{s}} \weight(\cObs_{s,n} - \widehat{\CCtr}_{s,k}^{(1)})) \cObs_{s,n} , \vspace{-0.2cm}
\\
Q_{s,k}^{(1)} & = \sum_{n = 1}^{N_{s}} \weight(\cObs_{s,n} - \widehat{\CCtr}_{s,k}^{(1)})).
\vspace{-0.1cm}
\end{align}

\vspace{-0.1cm}
\subsubsection{Exchange phase between sensors}
The exchange phase of the algorithm is realized in $T-1$ time slots. At time slot $t \in \{2,\cdots, T \}$, sensor $s$ receives some data from $J$ other sensors.
The data transmitted from sensor $s'$ to sensor $s$ is composed by $\mathbf{P}_{s'}^{(t-1)}$, $\mathbf{Q}_{s'}^{(t-1)}$, $\Ectr_{s'}^{(t-1)} $, $\mathbf{\Nbesk}_{s'}^{(t-1)}$ . 
Before updating the local parameters of sensor $s$, the algorithm must identify the common centroids between sensors $s$ and $s'$. 
For this, for all pair $(k,k')$, $k \in \{1,\cdots K_s\}$, $k' \in \{1,\cdots K_s'\}$, it applies Test n°4 of Section~\ref{subsec:test_fusion} to $(\widehat{\CCtr}_{s,k}^{(t-1)} - \widehat{\CCtr}_{s',k'}^{(t-1)}) \sim \mathcal{N}(\CCtr_{s,k} - \CCtr_{s',k'},r_{k,k'}^2 \Ac \MCov \Ac^T)$, where
\vspace{-0.1cm}
\begin{equation}
 r_{k,k'}^2 = \NewStd^2 \left( {1}/{\Nbesk_{s,k}^{(t-1)}} + {1}/{\Nbesk_{s',k'}^{(t-1)}} \right).
 \vspace{-0.1cm}
\end{equation}
%\begin{equation}
%r_{k,k'}^2 = \NewStd^2 \left( \frac{1}{\Nbesk_{s,k}^{(t-1)}} + \frac{1}{\Nbesk_{s',k'}^{(t-1)}} \right) .
%\end{equation}
For every pair $(k,k')$ that accepts hypothesis $\mathcal{H}_1$ of Test n°4, the algorithm updates the partial sums of sensor $s$ as 
\vspace{-0.1cm}
\begin{align}
 P_{s,k}^{(t-1)} & = P_{s,k}^{(t-1)} + P_{s',k'}^{(t-1)} ,  \\
 Q_{s,k}^{(t-1)} &= Q_{s,k}^{(t-1)} + Q_{s',k'}^{(t-1)}  .
\vspace{-0.1cm}
\end{align}
If for a given $k'' \in \{1,\cdots, K_{s'} \}$, there does not exist any $k \in \{1,\cdots, K_{s} \}$ for which the pair $(k,k'')$ accepts hypothesis $\mathcal{H}_1$, sensor $s$ adds a new centroid $\widehat{\CCtr}_{s',k''}^{(t-1)} $ to its own set of centroids. In this case, sensor $s$ updates its own sets as $\mathbf{P}_{s}^{(t-1)} = \mathbf{P}_{s}^{(t-1)} \cup \{ P_{s',k''}^{(t-1)}\}$, $\mathbf{Q}_{s}^{(t-1)} = \mathbf{Q}_{s}^{(t-1)} \cup \{Q_{s',k''}^{(t-1)}\}$, $\Ectr_s^{(t-1)} = \Ectr_s^{(t-1)} \cup \{\widehat{\CCtr}_{s',k''}^{(t-1)} \} $, $\mathbf{\Nbesk}_{s}^{(t-1)} = \mathbf{\Nbesk}_{s}^{(t-1)} \cup \{ \Nbesk_{s',k''}^{(t-1)} \}$. 
This permits to create additional centroids that were not detected in the initial dataset of sensor $s$.

Once the $J$ received sets of data have been processed by sensor $s$, this sensor perfoms 1) an estimation step by estimating new centroids as $ \widehat{\CCtr}_{s,k}^{(t)} = P_{s,k}^{(t-1)}/Q_{s,k}^{(t-1)}$ , 2) a fusion step (see centralized algorithm description), in order to produce a new set of estimated centroids $\widehat{\Ectr}_s^{(t)}$, 3) a classification step in order to compute a new set $\mathbf{V}_{s}^{(t)}$ (see local initialization of the decentralized algorithm), 4) a computation of the updated partial sums $P_{s,k}^{(t)} = \sum_{n = 1}^{N_s} \weight(\cObs_{s,n} - \widehat{\CCtr}_{s,k}^{(t)})) \cObs_{s,n}$ and $Q_{s,k}^{(t)} = \sum_{n = 1}^{N_s} \weight(\cObs_{s,n} - \widehat{\CCtr}_{s,k}^{(t)}))$.

\subsubsection{Final local clustering}
The exchange phase stops after $T-1$ steps. Each sensor $s$ outputs a set of centroids $\widehat{\Ectr}_s^{(T)}$. The final classification at sensor $s$ is realized as the classification in the centralized algorithm and it outputs $K_s$ sets $\mathcal{C}_{s,k}$.

\vspace{-0.25cm}
\subsection{Empirical parameters}
In this decentralized version, the empirical parameters $\alpha$ and $\epsilon$ are chosen as in the centralized algorithm, see Section~\ref{sec:heuristics}. 
% In the same way for centroid estimation, at the local initialization of the decentralized algorithm, $r$ is set to $1$, and at the exchange phase, $r$ is set to $0$.

\vspace{-0.25cm}
\subsection{Number of exchanged messages}
In this section, we evaluate the number of messages exchanged between sensors by~\dalgo~and compare it to the number of messages required by decentralized K-means~\cite{datta09KDE}.
By number of messages, we mean the number of scalar values exchanged between all the sensors during the whole running of the algorithm. 
We use this criterion instead of the number of operations performed by each sensor, since in most cases, sensor energy consumption is mainly due to information transmission rather than information processing.

The algorithm~\dalgo~consists of $T$ time slots, and at each time slot, each of the $N$ sensors receives $J$ sets  $\left\{\mathbf{P}_{s'}^{(t-1)},\mathbf{Q}_{s'}^{(t-1)},\widehat{\Ectr}_{s'}^{(t-1)} ,\mathbf{V}_{s'}^{(t-1)}\right\}$. In the following, we denote by $\bar{K}_1$ the common cardinality of each of the sets  $\mathbf{P}_{s'}^{(t-1)}$, $\mathbf{Q}_{s'}^{(t-1)}$, $\widehat{\Ectr}_{s'}^{(t-1)}$,  $\mathbf{V}_{s'}^{(t-1)}$ . In the algorithm, the cardinality of these sets is not constant among sensors and time slots, as they are given by the number of clusters estimated by each sensor at each time slot. Therefore, $\bar{K}_1$ is a generic parameter that can be chosen as the average or as the maximum number of clusters.
The sets $\mathbf{P}_{s'}^{(t-1)}$ and $\widehat{\Ectr}_{s'}^{(t-1)} $ are composed by vectors of length $m$ while the sets $\mathbf{Q}_{s'}^{(t-1)}$ and $\mathbf{V}_{s'}^{(t-1)}$ are composed by scalar values. As a result, the total number of exchanged messages  is linear with all the involved parameters and can be evaluated as
\begin{equation}
\nonumber
 \Lambda_1 = 2TJN\bar{K}_1 (m+1) .
\end{equation}
We also note that the compression permits to lower the number of exchanged messages by approximately a factor $m/\dim < 1$ since, without compression, we would have $ \Lambda_1 = 2TJN\bar{K}_1 (d+1) $.

The decentralized K-means algorithm of~\cite{datta09KDE} is also composed by $T$ time slots. At each time slot, each sensor receives $J$ sets of data that are equivalent to $\widehat{\Ectr}_{s'}^{(t-1)}$ and $\mathbf{V}_{s'}^{(t-1)}$. 
In order to deal with the initialization issue, the decentralized K-means can be repeated $R$ times. 
% If a procedure like a local K-means++ or codeword shifting~\cite{fellus13dDMW} is applied, we may however have $R=1$. 
Also, since K-means assumes that $K$ is known, we must try different values of $K$ and consider a penalized criterion in order to both estimate $K$ and perform the clustering. Assume that the algorithm tests values of $K$ from $1$ to $\bar{K}_2$. Therefore, the number of messages exchanged by each sensor can be evaluated as
\begin{equation}
\nonumber
 \Lambda_2 = RTJN \left(\sum_{k=1}^{\bar{K}_2} k \right)  (m+1) = R TJ N \frac{\bar{K}_2(\bar{K}_2+1)}{2} (m+1) .
\end{equation}
This time, we observe that the number of exchanged messages is quadratic with the maximum number of clusters $\bar{K}_2$, although this maximum number is usually small compared to parameters $N$ and $m$.

For comparison between~\dalgo~and decentralized K-means, assume that $\bar{K}_1 = \bar{K}_2$, and that $T$ and $J$ are the same for both algorithms, which is usually verified in practice. With this assumption, we see that $\Lambda_2$ is approximately $\frac{R\bar{K}_2}{4}$ times bigger than $\Lambda_1$, which can make a big difference. For instance, assume, as in our simulations, that $\bar{K}_2=10$, and consider two extreme cases $R=1$ and $R=10$. 
%The value $R=1$ corresponds to a local Kmeans++ procedure or to the codeword shifting operation, while we observed that $R=10$ leads to a good level of performance in our simulations. 
With these two extremes, K-means requires $2.5$ to $25$ more message exchanges than our algorithm, which is significant. 
In our simulation results, the value $R=10$ leads to a good level of performance for K-means, while $R=1$ induces a clustering performance degradation.

% In the following, we evaluate the clustering performance of our centralized and decentralized algorithms compared to K-means and DB-Scan.
% \textcolor{red}{Connection with the evaluation of the number of exchanged messages?}

% If K-means++ or if codeword shifting, $R=1$

\section{Experimental results}\label{sec:experiments}
%In this section, we evaluate the performance of the clustering algorithms~\algo~and~\dalgo~proposed in the paper. We compare them to standard clustering algorithms K-means and DB-Scan. 
In this section, we benchmark \algo~and~\dalgo~against standard K-means and DB-Scan.

\vspace{-0.1cm}
\subsection{Centralized algorithm}\label{sec:res_centralized}
This section evaluates the performance of CENTRE-X from Monte Carlo simulations. 
We want to verify that our algorithm can retrieve the correct number of clusters, and we want to assess its performance compared to standard clustering solutions K-means and DB-Scan. 
In all our simulations, we consider $d=100$ and the observation vectors $\obs_n$ that belong to cluster $k$ are generated according to the model $\obs_n \sim \mathcal{N}(\CCtr_k, \sigma^2 \Id)$, where $\sigma^2$ is the noise variance.
Here, we consider a diagonal noise covariance matrix $\MCov = \sigma^2 \Id$ for simplicity. In our simulations, we will consider two models for the centroids $\Ctr_k$: a non-sparse model and a sparse model. Each model will correspond to a different matrix $A$.
For the two models, the parameters of~\algo~are always set to $\alpha=10^{-3}$ for the false alarm probability and $\epsilon=10^{-3}$ for the stopping criterion. As discussed in Section~\ref{sec:heuristics}, these parameters do not influence much the clustering performance.
In our simulations, we consider different pairs of values $(m,\sigma)$ and for every considered pair, we run $1000$ trials with new centroids and new measurement vectors at each trial. 
In order to evaluate the capability of our algorithm to retrieve the correct number of clusters, for each trial, the value of $K$ is selected uniformly at random in the set $\llbracket 1, 10 \rrbracket$.

The clustering performance is evaluated with respect to two criteria. The first criterion is the percentage over the $1000$ trials of cases in which the algorithm retrieved the correct number of clusters $K$. 
The second criterion is the Silhouette~\cite{rousseeuw87CAM}, which measures the quality of the clustering itself.
The performance of our algorithm with respect to these two criteria is compared with three other clustering algorithms. It is first compared with the standard K-means for which $K$ is known and with $10$ replicates in order to lower the initialization issues. Second, we consider the K-means algorithm with $10$ replicates and $K$ is unknown. In this case, we run the K-means algorithm for every $K\in \llbracket 1, 10 \rrbracket$ and we apply an AIC criterion~\cite{pelleg2000x} in order to both retrieve $K$ and perform the clustering. Third, we consider the standard DB-Scan, which does not require the value of $K$. 

\begin{onecol}
\subsubsection{Sparse centroids}
\begin{figure*}[t]
\begin{center}
  \subfloat[~]{ \includegraphics[width=.48\linewidth]{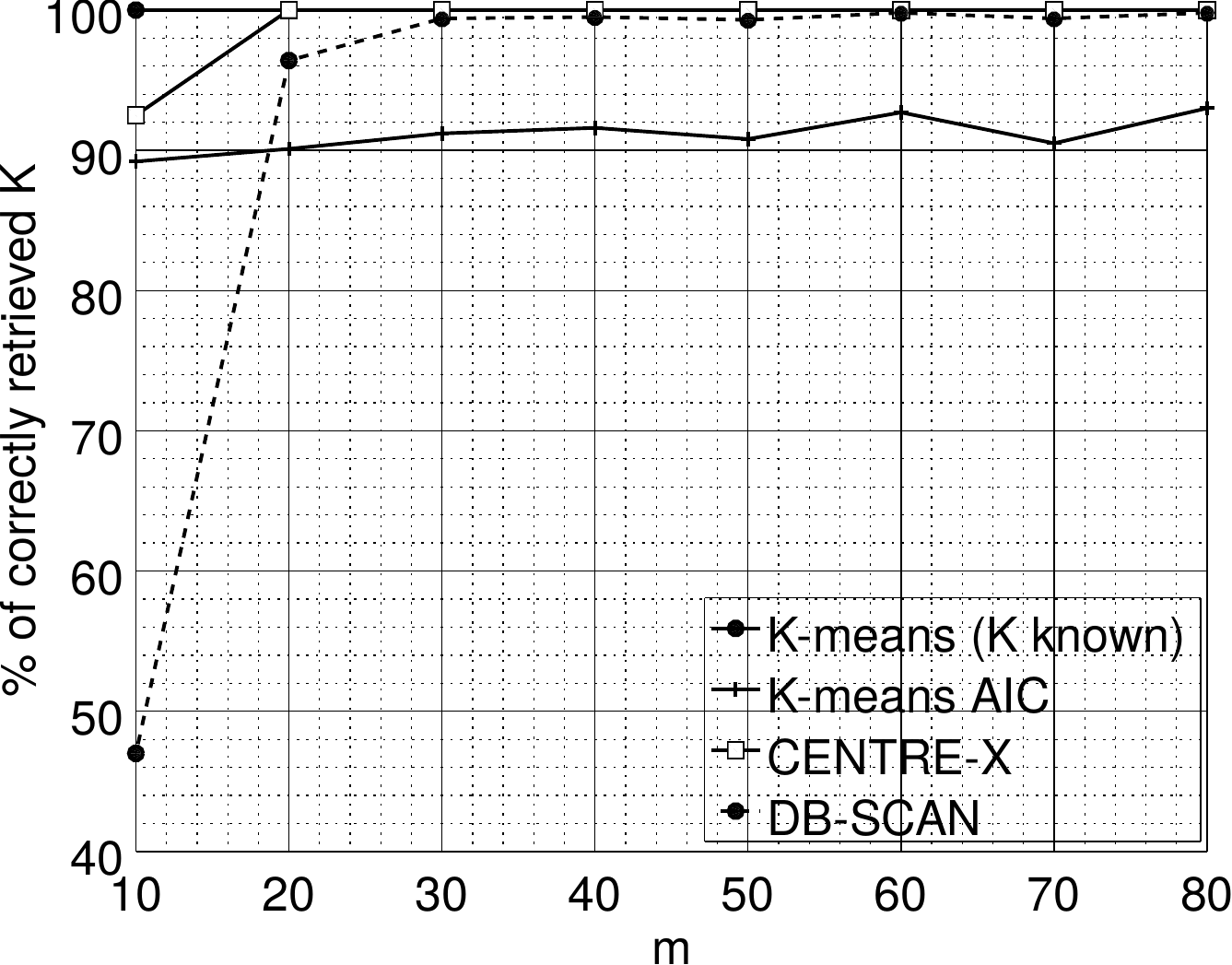}}
  \subfloat[~]{ \includegraphics[width=.48\linewidth]{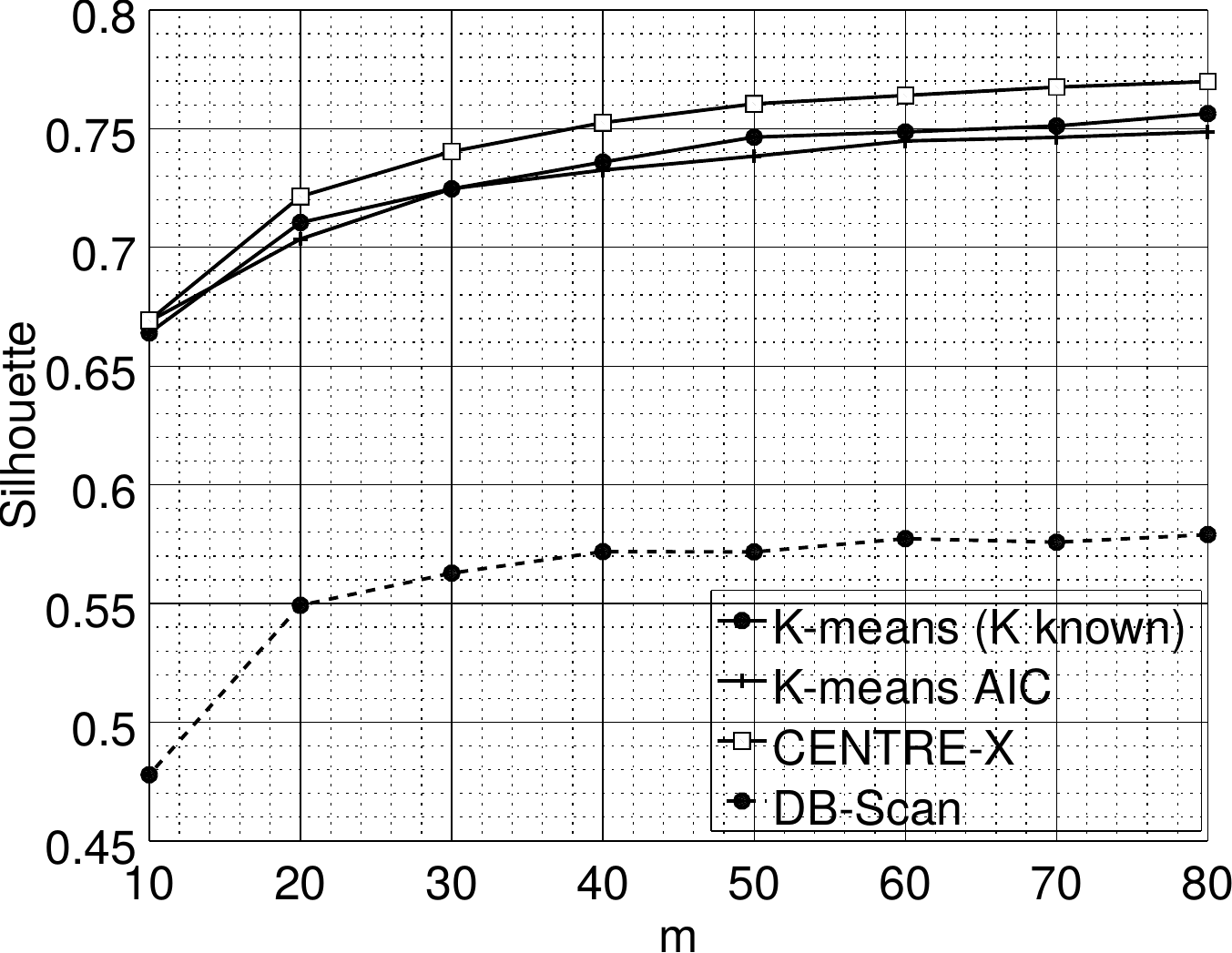}}
\end{center}
\caption{Performance with respect to $m$ of \algo~compared to K-means and DB-Scan in the case of sparse centroids. Sigma is set to $2$ (a) Percentage of correctly retrieved number of  clusters (b) Silhouette}
\label{fig:nonsp_M}
\end{figure*}
\end{onecol}

\begin{twocol}
	\subsubsection{Sparse centroids}
	\begin{figure}[t]
		\begin{center}
			\subfloat[~]{ \includegraphics[width=.5\linewidth]{Figures/res_nonsp_M_Ke.pdf}}
			\subfloat[~]{ \includegraphics[width=.5\linewidth]{Figures/res_nonsp_M_sil.pdf}}
		\end{center}
		\caption{Performance with respect to $m$ of \algo~compared to K-means and DB-Scan in the case of sparse centroids. Sigma is set to $2$ (a) Percentage of correctly retrieved number of  clusters (b) Silhouette}
		\label{fig:nonsp_M}
	\end{figure}
\end{twocol}

We first assume a sparse model for the centroids. At each trial, each individual component of each centroid is generated as $\theta_{k,j} \sim \mathcal{N}(0,b^2)$ ($b=2$), with probability $0.2$, and is equal to $0$ otherwise. 
In this case, the matrix $\Ac$ performs random projections with $A_{i,j} \sim \mathcal{N}(0,md)$, $i=1,\cdots,m$ and $j=1\cdots,d$~\cite{zebadua17SP}. %This corresponds to a construction from Gaussian components.

In Figure~\ref{fig:nonsp_M}, we fix $\sigma=2$ and we represent both the percentage of correctly retrieved clusters and the Silhouette with respect to $m$. 
We first remark that DB-Scan shows an important performance degradation compared to~\algo~and to the two considered versions of K-means. For $m$ larger than $30$, DB-Scan retrieves the correct number of clusters, but the Silhouette value is far from the three other ones. This is probably due to the fact that our data have initial dimensions $\dim=100$ and DB-Scan is known to perform poorly for medium to high dimensions~\cite{hinneburg99Data}.
As a second observation, we see that~\algo~is competitive with respect to the two considered versions of K-means. In particular, it shows the largest Silhouette, even though the three curves are close to each other. It also shows a better ability than K-means AIC to retrieve the correct number of clusters. The K-means AIC algorithm indeed showed difficulties to handle the relatively large set $\llbracket 1, 10 \rrbracket$ of possible values for $K$.

Figure~\ref{fig:nonsp_sigma} considers $m=50$ and represents the clustering performance with respect to $\sigma$. We still observe that DB-Scan shows an important loss in performance. 
We also see that our algorithm has the same performance as K-means for low to intermediate values of $\sigma$. For larger values $\sigma > 4.5$, the percentage of correctly retrieved $K$ starts to decrease for~\algo. However, its Silhouette curve is still close to the two versions of K-means. When $\sigma$ increases, it can occur that two clusters are very close to each other, so that the value of $\sigma$ does not permit to determine whether there are actually two  clusters or one only, see Figure~\ref{fig:ex_clusters} for an example in 2D. This explains why the algorithms does not retrieve the correct value of $K$ but the Silhouette criterion still says that the quality of clustering does not degrade too much.

\begin{onecol}
\begin{figure*}[t]
\begin{center}
  \subfloat[~]{ \includegraphics[width=.48\linewidth]{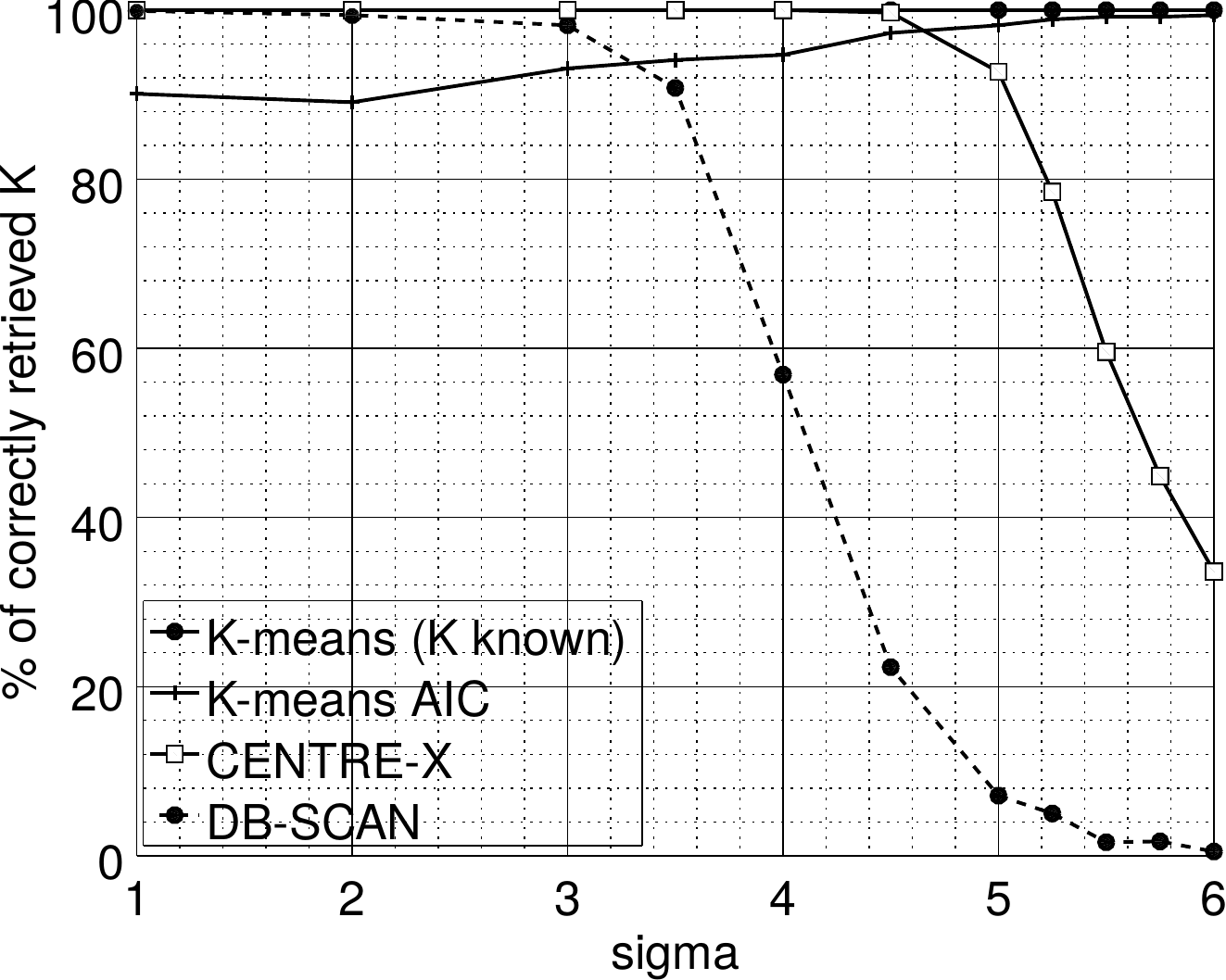}}
  \subfloat[~]{ \includegraphics[width=.48\linewidth]{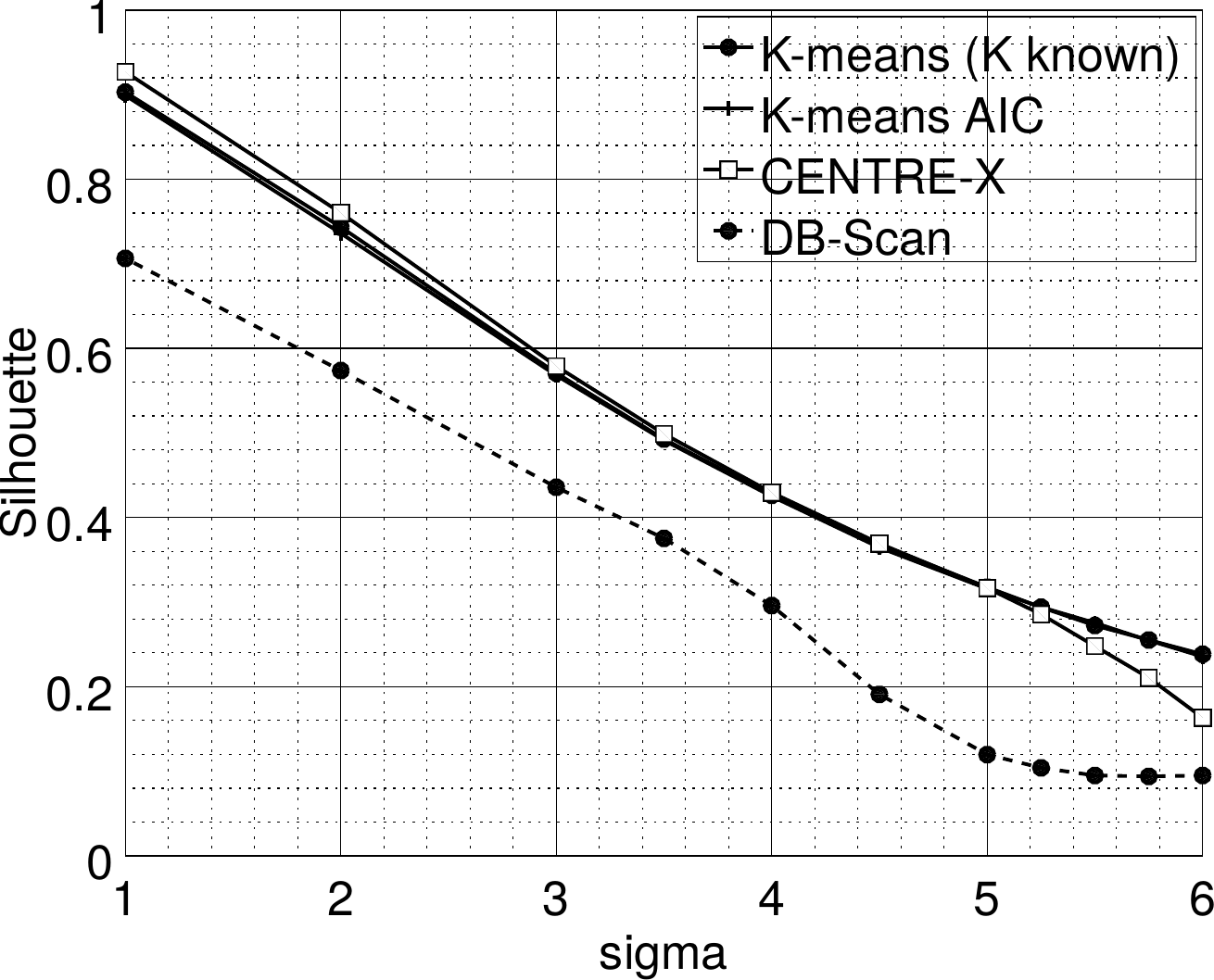}}
\end{center}
\caption{Performance with respect to $\sigma$ of \algo~compared to K-means and DB-Scan in the case of non-sparse signals. The parameter $m$ is set to $50$ (a) Percentage of correctly retrieved number of  clusters (b) Silhouette}
\label{fig:nonsp_sigma}
\end{figure*}
\end{onecol}

\begin{twocol}
	\begin{figure}[t]
		\begin{center}
			\subfloat[~]{ \includegraphics[width=.5\linewidth]{Figures/res_nonsp_sigma_ke.pdf}}
			\subfloat[~]{ \includegraphics[width=.5\linewidth]{Figures/res_nonsp_sigma_sil.pdf}}
		\end{center}
		\caption{Performance with respect to $\sigma$ of \algo~compared to K-means and DB-Scan in the case of non-sparse signals. The parameter $m$ is set to $50$ (a) Percentage of correctly retrieved number of  clusters (b) Silhouette}
		\label{fig:nonsp_sigma}
	\end{figure}
\end{twocol}

\begin{onecol}
	\begin{figure*}[t]
		\centering
		\includegraphics[width=.48\linewidth]{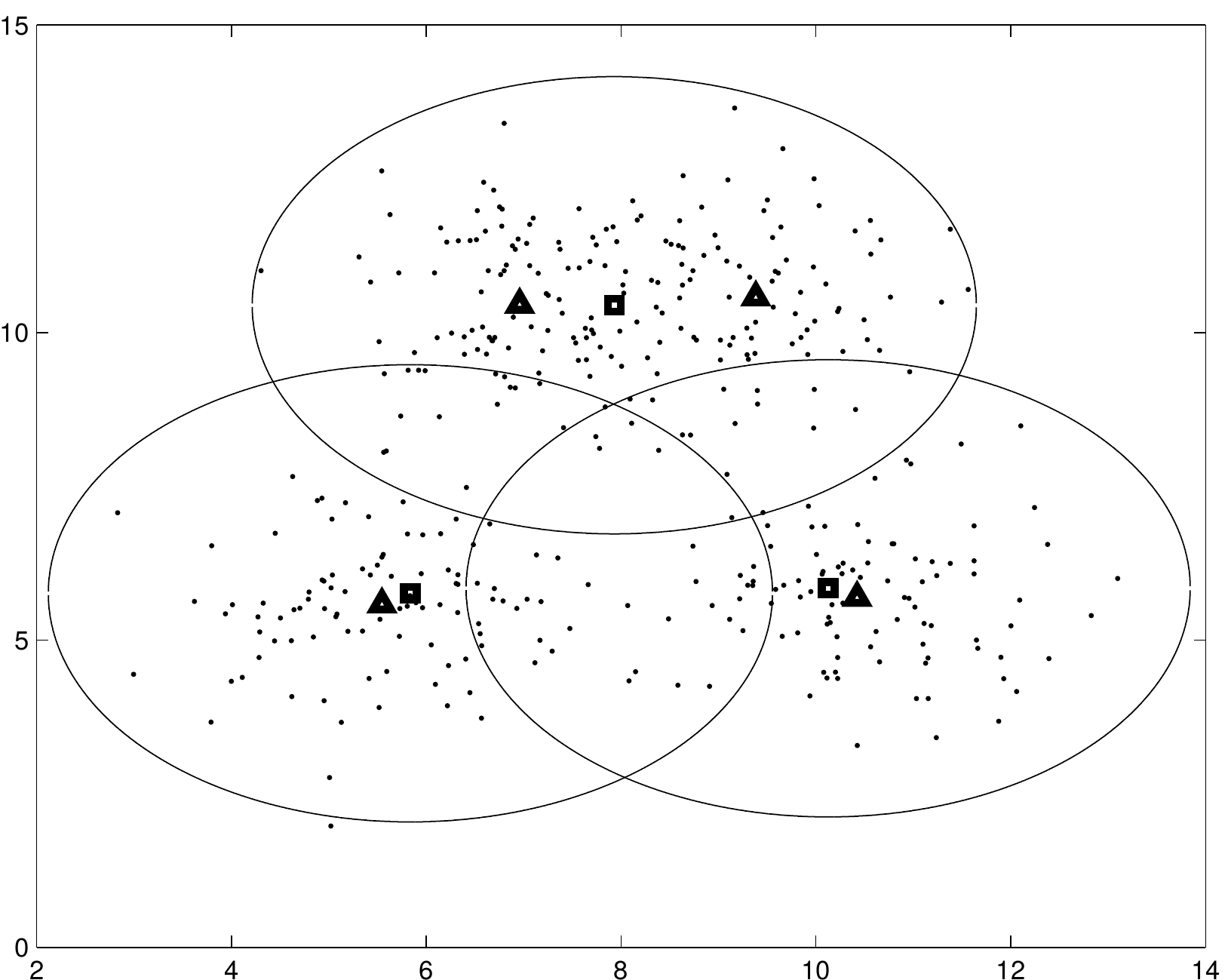}
		\caption{Example of clustering with $d=m=2$, $4$ clusters, and high value of $\sigma$. Triangles give the centroids estimated by K-means initialized with $K=4$ and squares give centroids estimated by our algorithm. The circles correspond to the decision thresholds for our algorithm to decide whether a measurement vector belongs to a cluster.}
		\label{fig:ex_clusters}
	\end{figure*}
\end{onecol}

\begin{twocol}
\begin{figure}[t]
 \centering
 \includegraphics[width=.5\linewidth]{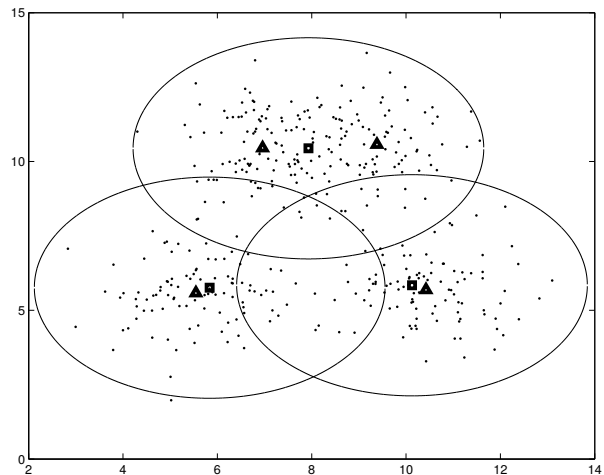}
 \caption{Example of clustering with $d=m=2$, $4$ clusters, and high value of $\sigma$. Triangles give the centroids estimated by K-means initialized with $K=4$ and squares give centroids estimated by our algorithm. The circles correspond to the decision thresholds for our algorithm to decide whether a measurement vector belongs to a cluster.}
 \label{fig:ex_clusters}
\end{figure}
\end{twocol}

\subsubsection{Non-sparse centroids}
In the non-sparse model, we assume that new centroids are generated at each trial as $\Ctr_k \sim \mathcal{N}(0,b^2 \Identity{\dim})$ with $b=2$. 
In this case, the sensing matrix $\Ac$ is constructed so as to randomly select components of $\obs_n$, that is each row of $\Ac$ contains exactly one value $1$, and $0$ elsewhere~\cite{zebadua17SP}.

\begin{onecol}
\begin{figure*}[t]
\begin{center}
  \subfloat[~]{ \includegraphics[width=.48\linewidth]{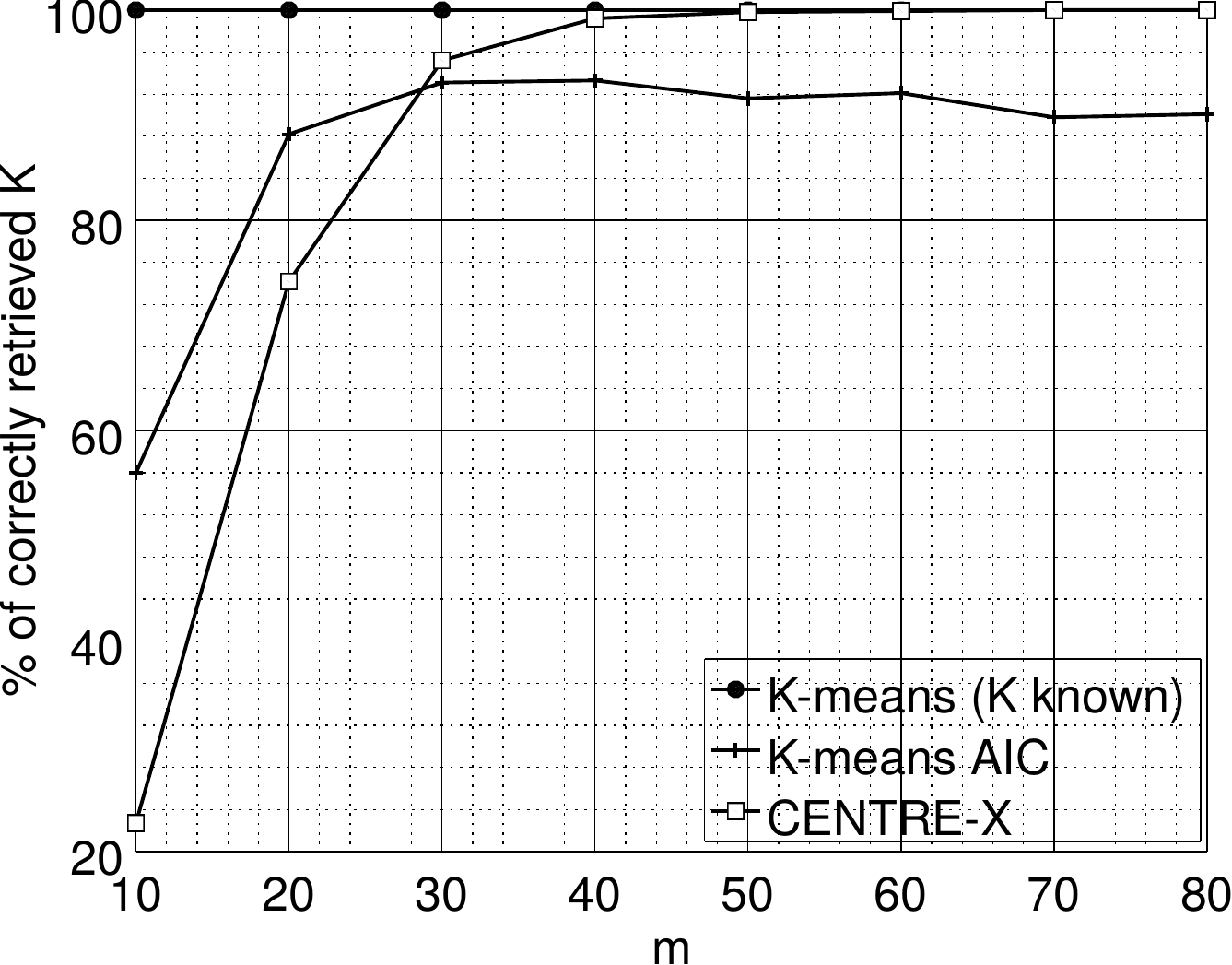}}
  \subfloat[~]{ \includegraphics[width=.48\linewidth]{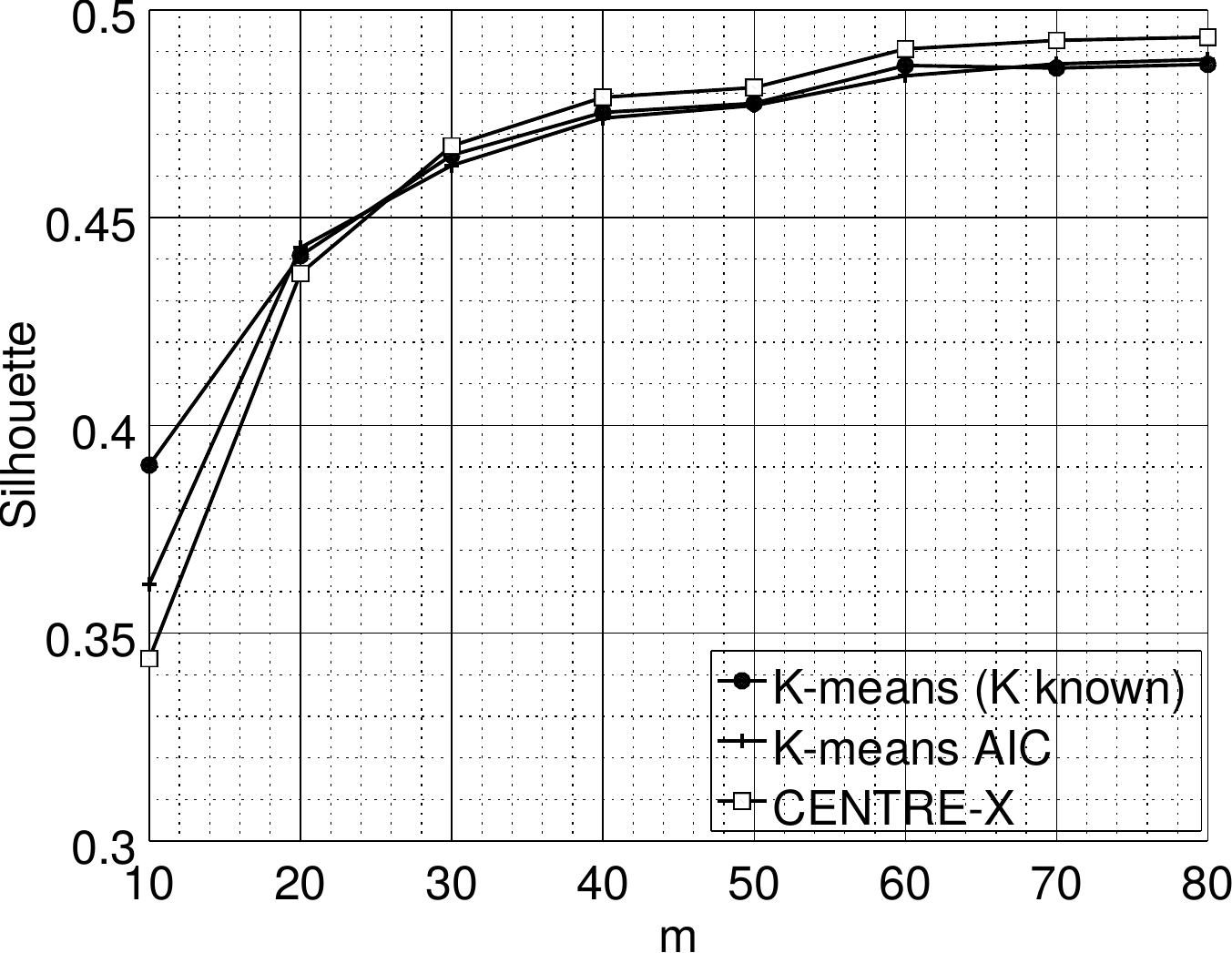}}
\end{center}
\caption{Performance with respect to $m$ of \algo~compared to K-means in the case of sparse centroids. Sigma is set to $2$ (a) Percentage of correctly retrieved number of clusters (b) Silhouette}
\label{fig:sp_M}
\end{figure*}
\end{onecol}

\begin{twocol}
	\begin{figure}[t]
		\begin{center}
			\subfloat[~]{ \includegraphics[width=.5\linewidth]{Figures/res_sp_M_ke.pdf}}
			\subfloat[~]{ \includegraphics[width=.5\linewidth]{Figures/res_sp_M_sil.pdf}}
		\end{center}
		\caption{Performance with respect to $m$ of \algo~compared to K-means in the case of sparse centroids. Sigma is set to $2$ (a) Percentage of correctly retrieved number of clusters (b) Silhouette}
		\label{fig:sp_M}
	\end{figure}
\end{twocol}

\begin{onecol}
	\begin{figure*}[t]
		\begin{center}
			\subfloat[~]{ \includegraphics[width=.48\linewidth]{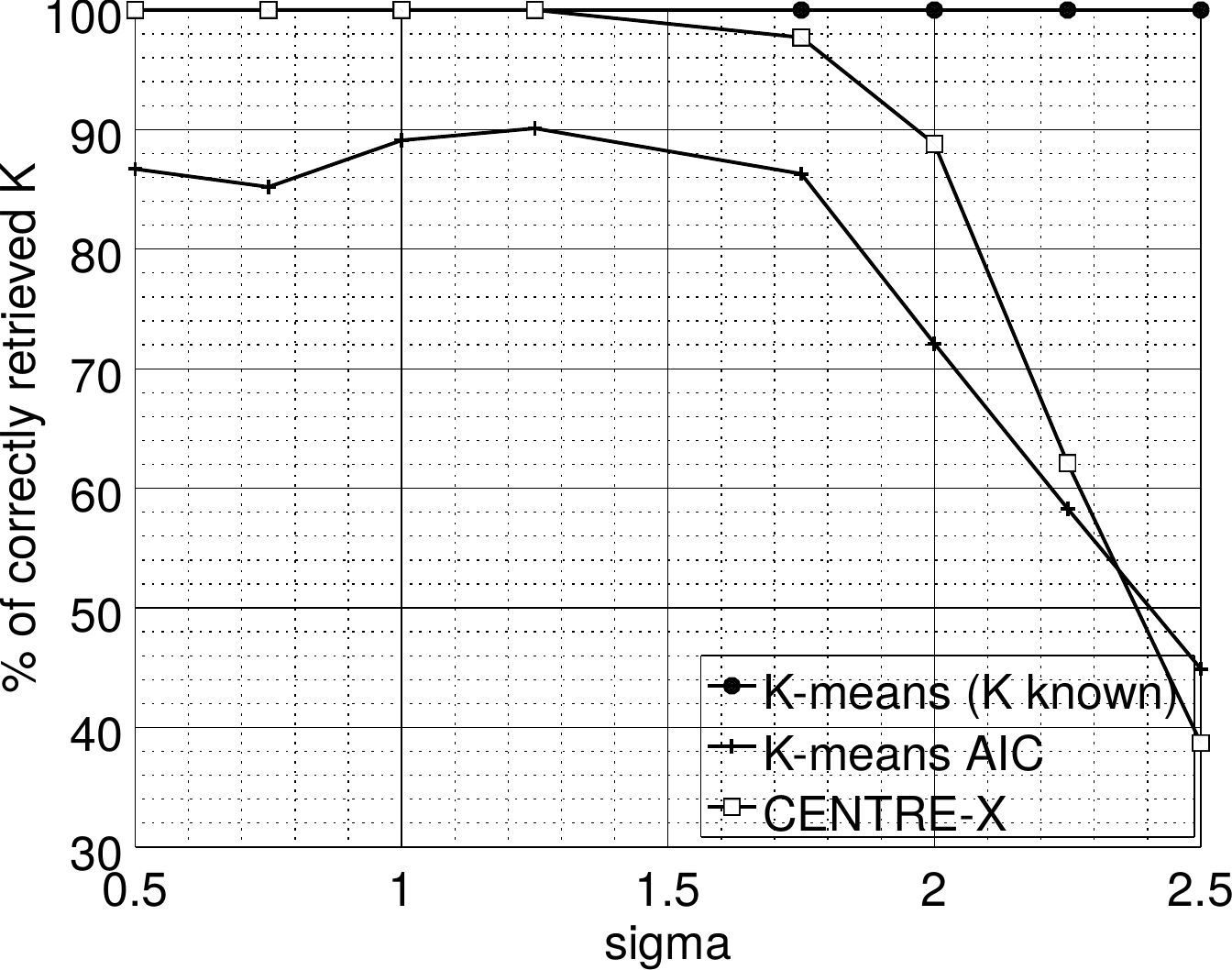}}
			\subfloat[~]{ \includegraphics[width=.48\linewidth]{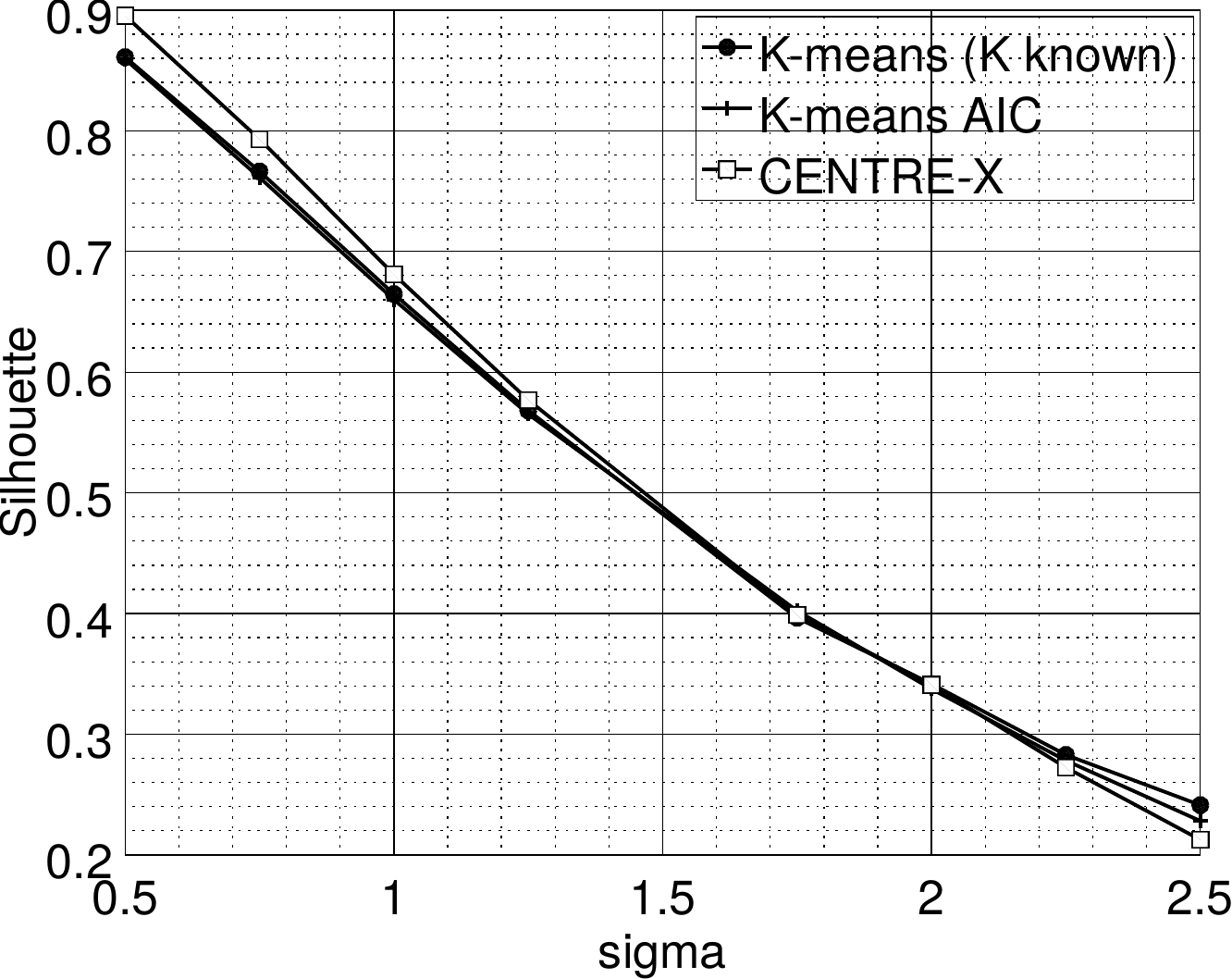}}
		\end{center}
		\caption{Performance with respect to $\sigma$ of \algo~compared to K-means in the case of sparse centroids. $m$ is set to $50$ (a) Percentage of correctly retrieved number of clusters (b) Silhouette}
		\label{fig:sp_sigma}
	\end{figure*}
\end{onecol}

\begin{twocol}
\begin{figure}[t]
\begin{center}
  \subfloat[~]{ \includegraphics[width=.5\linewidth]{Figures/res_sp_sigma_ke.pdf}}
  \subfloat[~]{ \includegraphics[width=.5\linewidth]{Figures/res_sp_sigma_sil.pdf}}
\end{center}
\caption{Performance with respect to $\sigma$ of \algo~compared to K-means in the case of sparse centroids. $m$ is set to $50$ (a) Percentage of correctly retrieved number of clusters (b) Silhouette}
\label{fig:sp_sigma}
\end{figure}
\end{twocol}

With this model, Figure~\ref{fig:sp_M} considers $\sigma=2$ and shows the two considered criteria with respect to $m$. In this case, we see that our algorithm shows a degradation compared to K-means AIC for small values of $m$, but that it performs better for larger values of $m$. This is mitigated by the fact that the three Silhouette curves are still very close to each other. 
Figure~\ref{fig:sp_sigma} considers $m=50$ and shows the performance with respect to $\sigma$. In this case, our algorithm shows a performance degradation compared to K-means with $K$ known for the largest values of $\sigma$, but it outperforms K-means AIC for almost all the considered values of $\sigma$. The Silhouette curves are also very close to each other. 
As for the non-sparse model, the performance degradation in terms of correctly retrieved $K$ of~\algo~for small $m$ and large $\sigma$ can be explained by proximity of clusters.

There results show that~\algo~is competitive with K-means when $K$ is unknown and incurs only a small performance loss compared to K-means with K known. In contrast to K-means, CENTRE-X requires no prior knowledge of the number of clusters and suffers from no initialization issues (no need for replicates). 
This makes our algorithm a good candidate for fully decentralized clustering.

\vspace{-0.25cm}
\subsection{Decentralized algorithm}

We now evaluate the performance of~\dalgo~and compare it with the performance of the fully decentralized K-means algorithm~\cite{datta09KDE}.
We consider a network with $S=20$ sensors and a dataset of size $N=1000$. We assume that the compressed measurement vectors $\cObs_n$ are equally distributed between sensors, which means that each sensor observes $50$ vectors $\cObs_n$.
The compressed vectors $\cObs_n$ are generated according to the non-sparse model described in Section~\ref{sec:res_centralized}, with the same parameter $b=2$ and the same construction of sensing matrices $A$. As for the centralized algorithm, we also set $\alpha=10^{-3}$ and $\epsilon=10^{-3}$.
Regarding the parameters that are specific to the decentralized algorithms, we set $T=10$ time slots and $J=2$ received sets of data per sensor per time slot.
The parameters $T$ and $J$ are the sames for both~\dalgo~and decentralized K-means.

Figure~\ref{fig:dX_nonsp_M} considers $\sigma=2$ and represents the percentage of correctly retrieved clusters as well as the Silhouettes with respect to $m$.
We first observe that decentralized K-means without replicates shows a performance degradation compared to decentralized K-means with $10$ replicates at all the considered values $m$. 
We then see that~\dalgo~shows a performance degradation when $m<30$ but has the same performance as decentralized K-means with $10$ replicates when $m>30$.

Figure~\ref{fig:dX_nonsp_sigma} considers $m=50$ and represents the percentage of correctly retrieved clusters as well as the Silhouettes with respect to $\sigma$.
We get the same conclusions as for Figure~\ref{fig:dX_nonsp_M}: decentralized K-means without replicates always shows lower performance than K-means with $10$ replicates;~\dalgo~ shows a performance degradation for $\sigma>2.5$ and the same performance as decentralized K-means with $10$ replicates for $\sigma<2.5$.
Recall that decentralized K-means with $10$ replicates requires approximately $25$ times more message exchanges than~\dalgo, and that decentralized K-means without replicates needs approximately $2.5$ times more message exchanges than~\dalgo.
This shows that~\dalgo~offers a clustering solution that is competitive with respect to decentralized K-means.

\begin{onecol}
\begin{figure*}[t]
\begin{center}
  \subfloat[~]{ \includegraphics[width=.48\linewidth]{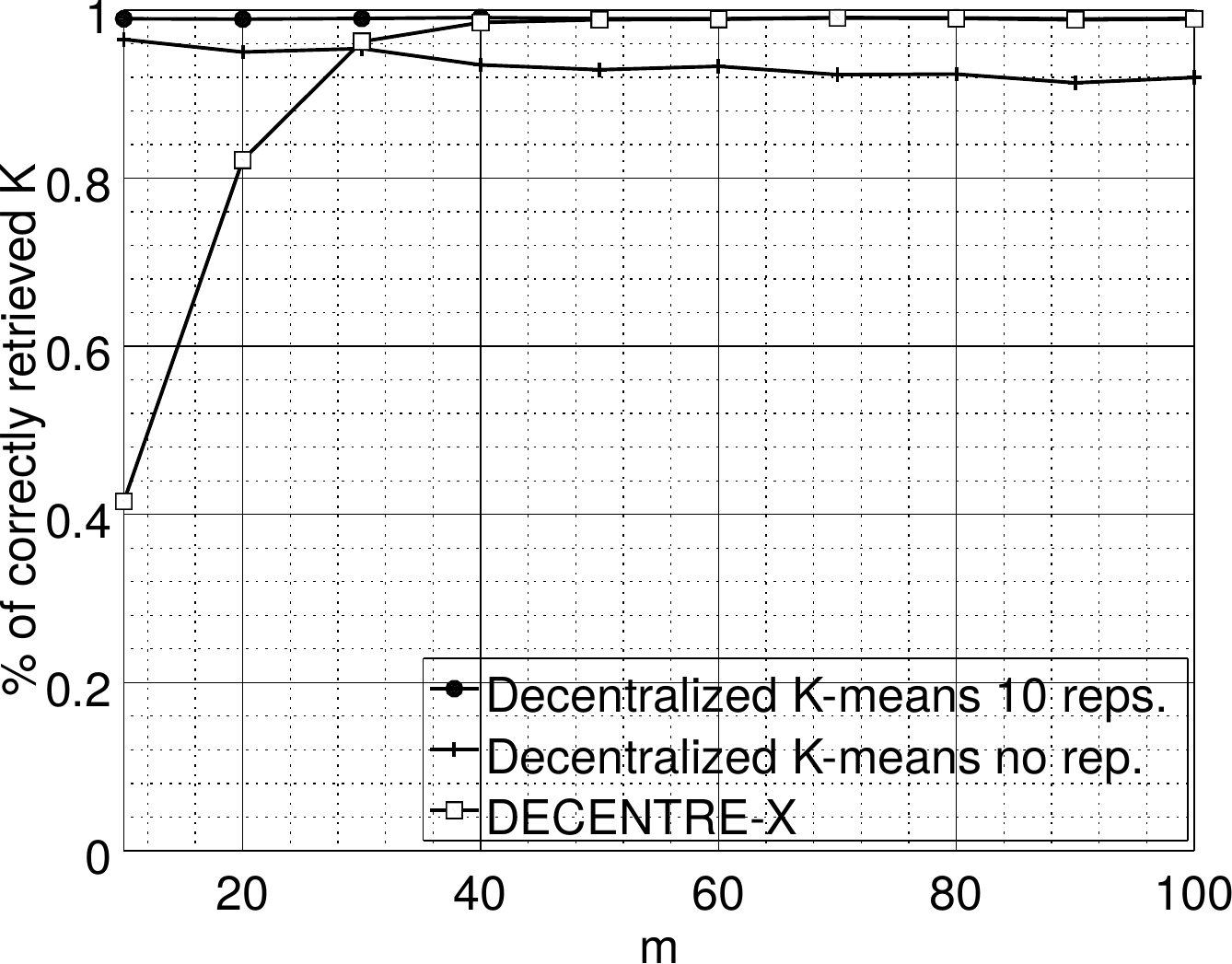}}
  \subfloat[~]{ \includegraphics[width=.48\linewidth]{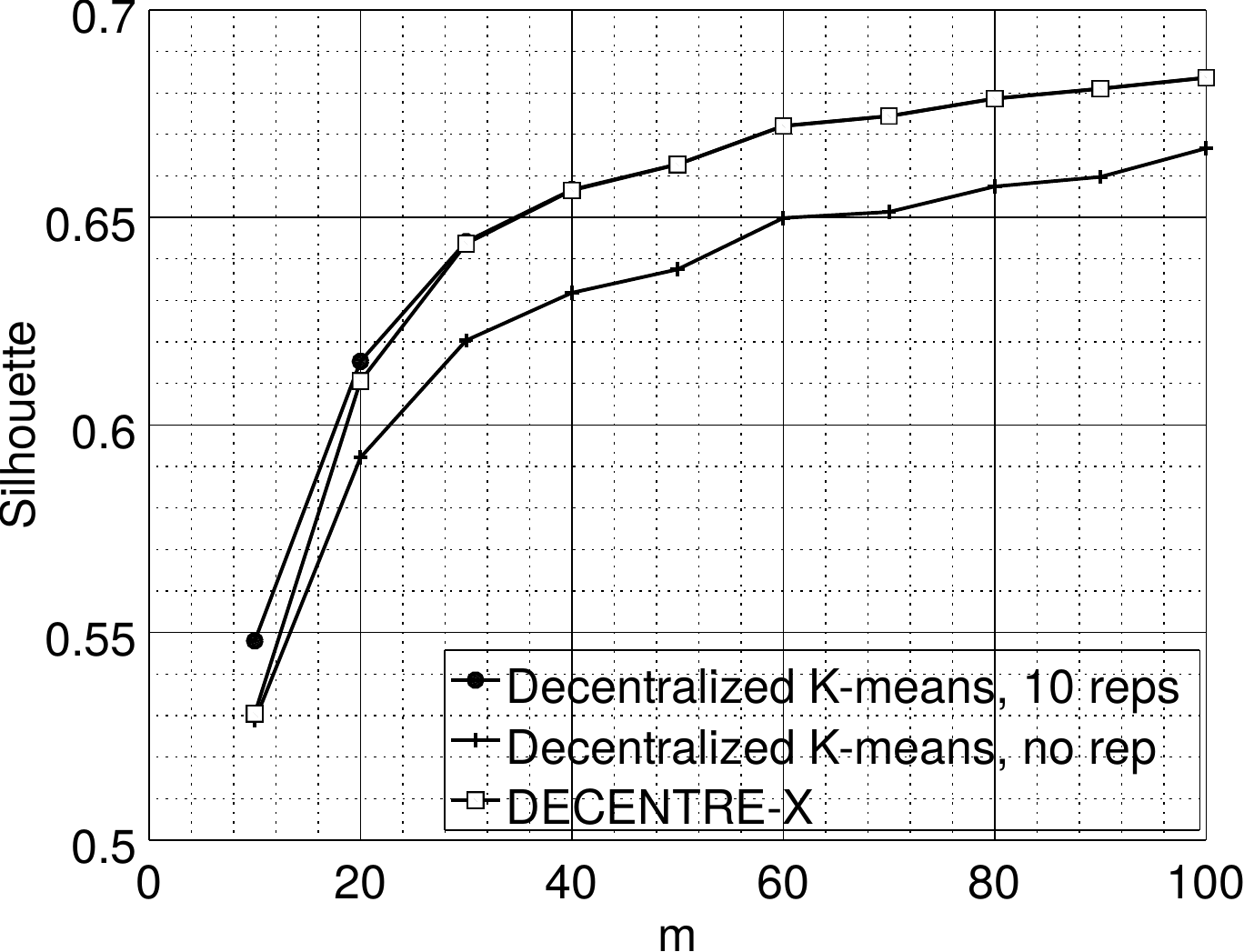}}
\end{center}
\caption{Performance with respect to $m$ of \dalgo~compared to K-means with $R=10$ and $R=1$ for non-sparse centroids. Sigma is set to $2$. (a) Percentage of correctly retrieved number of clusters (b) Silhouette}
\label{fig:dX_nonsp_M}
\end{figure*}
\end{onecol}

\begin{twocol}
	\begin{figure}[t]
		\begin{center}
			\subfloat[~]{ \includegraphics[width=.5\linewidth]{Figures/decentrex_nonsp_M_ke.pdf}}
			\subfloat[~]{ \includegraphics[width=.5\linewidth]{Figures/decentrex_nonsp_M_sil.pdf}}
		\end{center}
		\caption{Performance with respect to $m$ of \dalgo~compared to K-means with $R=10$ and $R=1$ for non-sparse centroids. Sigma is set to $2$. (a) Percentage of correctly retrieved number of clusters (b) Silhouette}
		\label{fig:dX_nonsp_M}
	\end{figure}
\end{twocol}

\begin{onecol}
	\begin{figure*}[t]
		\begin{center}
			\subfloat[~]{ \includegraphics[width=.5\linewidth]{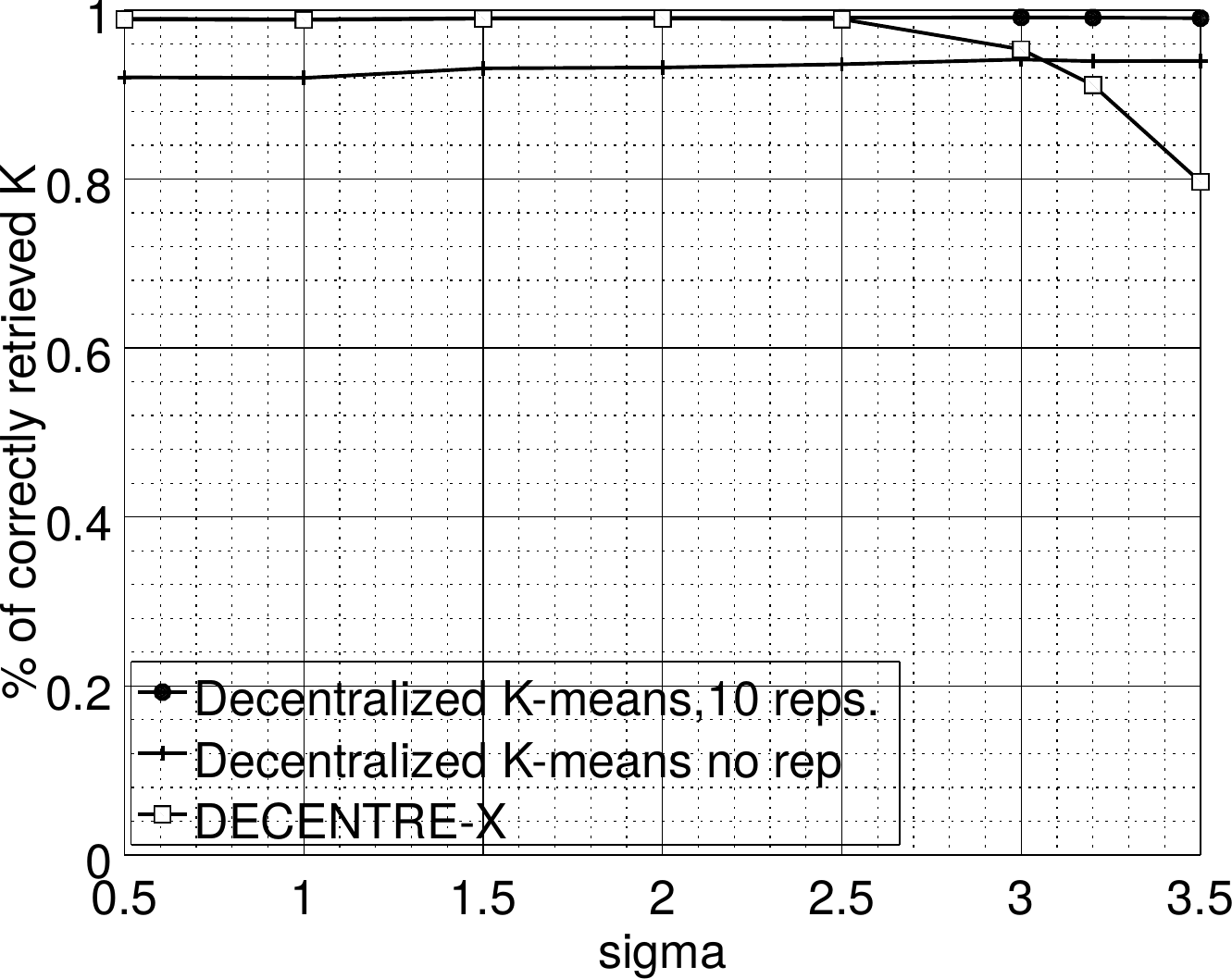}}
			\subfloat[~]{ \includegraphics[width=.5\linewidth]{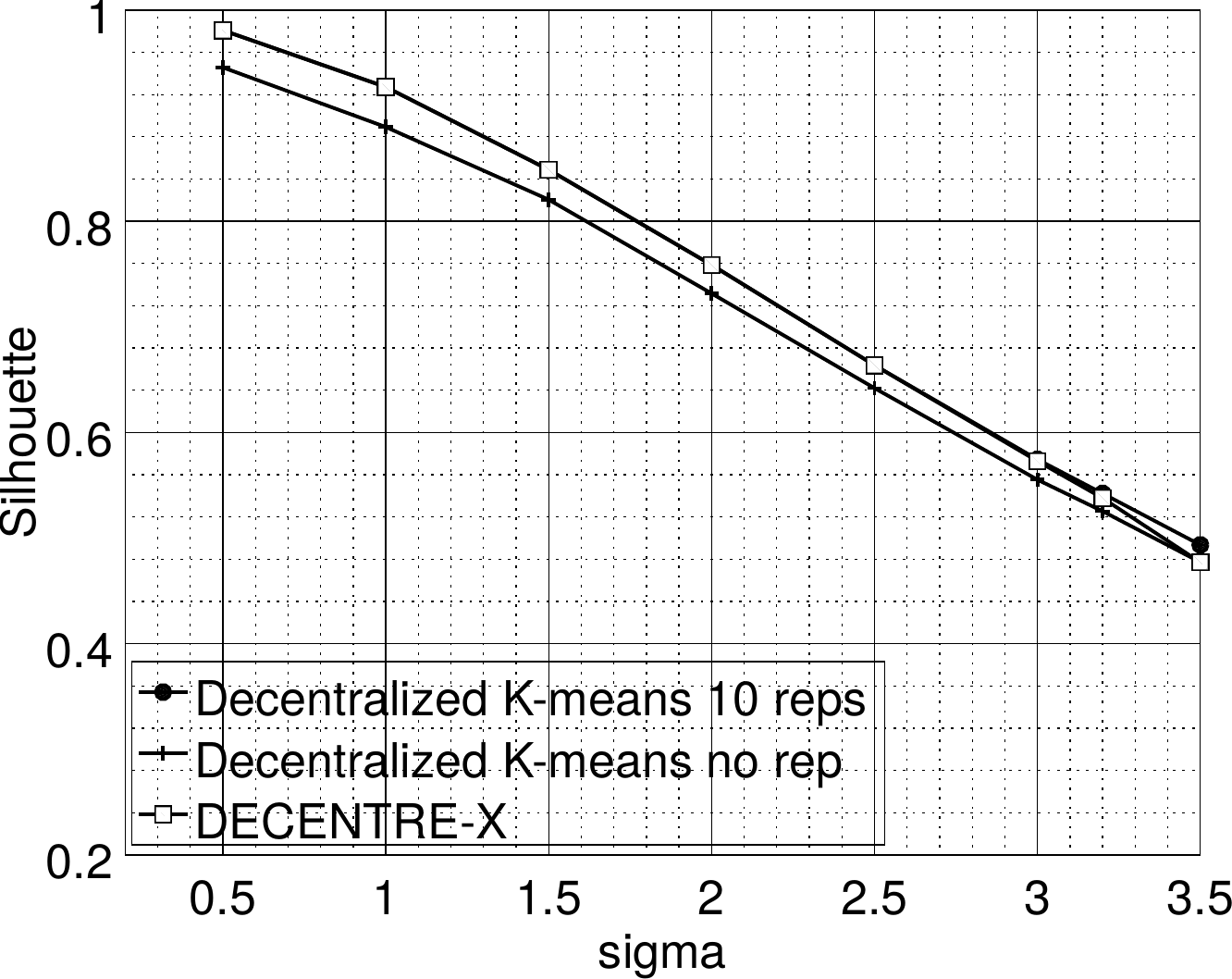}}
		\end{center}
		\caption{Performance with respect to $\sigma$ of \dalgo~compared to K-means with $R=10$ and $R=1$ for non-sparse centroids. Sigma is set to $2$. (a) Percentage of correctly retrieved  number of clusters (b) Silhouette}
		\label{fig:dX_nonsp_sigma}
	\end{figure*}
\end{onecol}

\begin{twocol}
\begin{figure}[t]
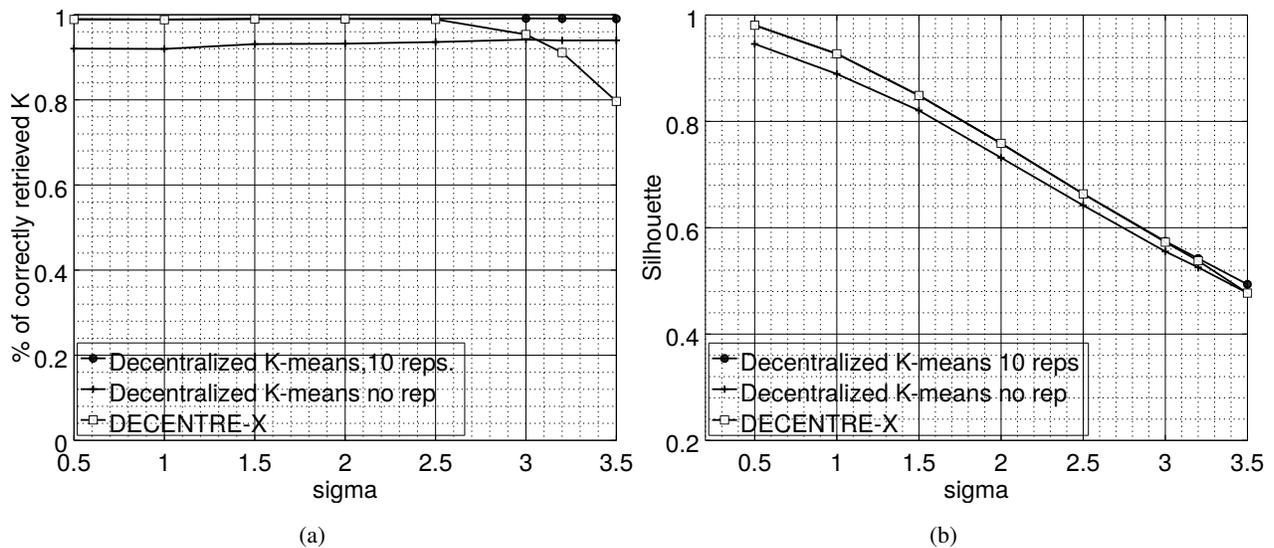

\begin{center}
  \subfloat[~]{ \includegraphics[width=.48\linewidth]{Figures/decentrex_nonsp_sigma_ke.pdf}}
  \subfloat[~]{ \includegraphics[width=.48\linewidth]{Figures/decentrex_nonsp_sigma_sil.pdf}}
\end{center}
\caption{Performance with respect to $\sigma$ of \dalgo~compared to K-means with $R=10$ and $R=1$ for non-sparse centroids. Sigma is set to $2$. (a) Percentage of correctly retrieved  number of clusters (b) Silhouette}
\label{fig:dX_nonsp_sigma}
\end{figure}
\end{twocol}

\vspace{-0.1cm}
\section{Conclusion \& perspectives}\label{sec:conclusion}
% Bilan
This paper has introduced \algo~and~\dalgo~for clustering compressed data over a network of sensors.
\algo~is a centralized algorithm that requires a fusion center, whereas \dalgo~is the fully decentralized version of \algo. %These algorithms overcome the drawbacks of standard clustering algorithms. 
These algorithms do not require prior knowledge of the number of clusters and do not suffer from initialization issues, which is highly beneficial in the decentralized setup of~\dalgo. %With~\dalgo, each sensor performs the clustering with only partial observations of the available compressed measurements, while reducing the amount of data exchanged in the network compared to decentralized K-means. 
The features satisfied by \algo~and \dalgo~follow from a novel theoretical framework that has introduced and established properties of a new type of cost function. Another originality of the approach is the use
of Wald's test p-value 
%the p-value of a Wald test 
as the weight function involved in the cost function. %This p-value is relevant for the clustering problem because it measures the plausibility that a measurement vector belongs to a given cluster.
Experimental results have shown that our algorithms are competitive in terms of clustering performance with respect to K-means with K unknown, while reducing the amount of data exchanged between sensors in the fully decentralized setup.

% Perspectives
The new approach we have introduced for clustering relies on a statistical model of the measurements and can be adapted to other signal models. This may allow for addressing clustering problems that standard algorithms such as K-means can hardly handle. For instance, we could consider heterogeneous sensors that collect measurement vectors with different covariance matrices from one sensor to another, as in~\cite{SSP2018} for spectral clustering. 
We could also model the measurement vectors of a given cluster as random vectors with unknown distributions and known bounded variations corrupted by centered Gaussian noise.
For such models, extensions of the Wald tests are given in\cite{RDT, RDTlm}. % and it was shown that these tests admit p-values.

\section{Acknowledgment}
The authors are very thankful to Oskar A. Rynkiewicz for the fruitful discussions on the algorithms presented in this paper.  

\section*{Appendices}

\appendices
\section{}
%\section{Asymptotic behaviors}
\label{App: asymptotic behaviors}
\begin{Lemma}
\label{Lemma: asymptotic behaviors}
\pastorv{Let $\matcov$ be any positive-definite matrix and set $\VZ{\xivec} \thicksim \Ncal(\xivec,\cCorMat)$ for any $\xivec \in \Rset^m$. With the same notation as in Section \ref{sec:model}, we have:} 
\medskip \\
\noindent
\pastorv{
(i) if $w: [0,\infty) \to [0,\infty)$ is bounded and such that
$\displaystyle \lim\limits_{t \rightarrow \infty} {w(t)} = 0$
then $\lim\limits_{ \| \xivec \| \rightarrow \infty} \! \! \weight ( \VZ{\xivec} ) = 0 \, \, \text{(a-s)}$
}
\medskip \\
\noindent
\pastorv{(ii) If $\lim\limits_{ \| \xivec \| \rightarrow \infty} \! \! \weight ( \VZ{\xivec} ) = 0 \, \, \text{(a-s)}$ then 
		\vspace{-0.1cm}
	$$\displaystyle \lim\limits_{ \| \xivec \| \rightarrow \infty} \Expect{w ( \nu_{\matcov}^2 \left ( \VZ{\xivec} \right ) ) \VZ{\xivec}} = 0$$
}
%\medskip \\
%\noindent
%(iii) \pastorv{The map $w: [0,\infty) \to [0,\infty)$ defined for every $x \in [0,\infty)$ by $w(x) = \Marcumc\left(0, \sqrt{x} \right)$ is is bounded and such that $\displaystyle \lim\limits_{t \rightarrow \infty} {w(t)} = 0$.}
\end{Lemma}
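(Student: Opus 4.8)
I would realise the whole family through a single noise draw: write $\VZ{\xivec} = \xivec + \Noise$ with $\Noise \thicksim \Ncal(0,\cCorMat)$, so that $\|\Noise\|<\infty$ on an event of probability one. Since $\matcov$ is positive definite, $\nu_\matcov$ is a norm equivalent to $\|\cdot\|$: fix $\kappa_1,\kappa_2>0$ with $\kappa_1\|y\| \leq \nu_\matcov(y) \leq \kappa_2\|y\|$ for all $y\in\Rset^m$, and recall that $w$ is bounded, say by $M$. On the event $\{\|\Noise\|<\infty\}$ the reverse triangle inequality gives $\nu_\matcov(\VZ{\xivec}) \geq \nu_\matcov(\xivec) - \nu_\matcov(\Noise) \geq \kappa_1\|\xivec\| - \nu_\matcov(\Noise) \to \infty$ as $\|\xivec\|\to\infty$, hence $\nu_\matcov^2(\VZ{\xivec})\to\infty$. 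Plugging this into \eqref{eq:prop3} yields $\weight(\VZ{\xivec}) = w(\nu_\matcov^2(\VZ{\xivec})) \to 0$ pointwise on that event, which is exactly (i).

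\textbf{Part (ii).} Using that $\xivec$ is deterministic, I would split
\[
\Expect{w(\nu_\matcov^2(\VZ{\xivec}))\,\VZ{\xivec}} = \xivec\,\Expect{\weight(\VZ{\xivec})} + \Expect{\weight(\VZ{\xivec})\,\Noise}.
\]
The second term is the easy one: the hypothesis of (ii) gives $\weight(\VZ{\xivec})\Noise \to 0$ a.s., while $\|\weight(\VZ{\xivec})\Noise\| \leq M\|\Noise\|$ is integrable, so dominated convergence drives it to $0$. The first term is where the growing factor $\xivec$ must be tamed. Splitting on $\{\|\Noise\| \leq \tfrac12\|\xivec\|\}$, where $\|\VZ{\xivec}\| \geq \tfrac12\|\xivec\|$ forces $\nu_\matcov^2(\VZ{\xivec}) \geq \tfrac14\kappa_1^2\|\xivec\|^2$ and hence $\weight(\VZ{\xivec}) \leq \sup_{t\geq \kappa_1^2\|\xivec\|^2/4} w(t) =: \delta(\xivec)$, and bounding $\weight$ by $M$ on the complement, I obtain
\[
\|\xivec\|\,\Expect{\weight(\VZ{\xivec})} \leq \|\xivec\|\,\delta(\xivec) + M\,\|\xivec\|\,\Pbb\!\left[\,\|\Noise\| > \tfrac12\|\xivec\|\,\right].
\]

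\textbf{Main obstacle.} The tail summand vanishes because the Gaussian tail $\Pbb[\|\Noise\|>\tfrac12\|\xivec\|]$ beats any power of $\|\xivec\|$, so the genuine crux is forcing $\|\xivec\|\,\delta(\xivec)\to 0$, i.e. $\sqrt{t}\,w(t)\to 0$ as $t\to\infty$ — a decay slightly stronger than the bare $w(t)\to 0$ used in part (i). I would close this by invoking the concrete weight function of Section~\ref{sec:w_function}, $w(x)=\Marcumc(0,\sqrt{x})$: the generalized Marcum function has Gaussian-type tails, so $\sqrt{t}\,w(t)\to 0$ holds with room to spare and both summands vanish. Equivalently, once this decay is in hand the whole of (ii) collapses to a single dominated-convergence step on the product, since by norm equivalence $\|\weight(\VZ{\xivec})\VZ{\xivec}\| \leq \tfrac{1}{\kappa_1}\,\nu_\matcov(\VZ{\xivec})\,w(\nu_\matcov^2(\VZ{\xivec}))$, a quantity that is uniformly bounded and tends to $0$ a.s. because $\nu_\matcov(\VZ{\xivec})\to\infty$; this is the cleanest route, and I expect the $\sqrt{t}\,w(t)\to 0$ control to be the only delicate point.
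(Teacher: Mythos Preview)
Part (i), the split in (ii), and your dominated-convergence treatment of $\Expect{\weight(\VZ{\xivec})\,\Noise}$ all match the paper. The gap you yourself flag in the first term $\xivec\,\Expect{\weight(\VZ{\xivec})}$ is genuine: your truncation on $\{\|\Noise\|\leq\tfrac12\|\xivec\|\}$ forces the requirement $\sqrt{t}\,w(t)\to 0$, which is strictly stronger than the hypothesis of the lemma, and retreating to the specific Marcum weight of Section~\ref{sec:w_function} proves a special case rather than the stated result.

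The paper handles this term by a different mechanism. It whitens so that $\VW{\xivec}=\MDec\VZ{\xivec}\thicksim\Ncal(\zetavec,\Idm)$ with $\zetavec=\MDec\xivec$, and writes the $i$-th coordinate of the offending term (up to an invertible factor) as
\[
\frac{1}{(2\pi)^{m/2}}\int_{\Rset^m}\weight(\MDec^{-1}\xvec)\,\zeta_i\,e^{-\|\xvec-\zetavec\|^2/2}\,\d\xvec .
\]
For each fixed $\xvec$ the factor $\zeta_i\,e^{-\|\xvec-\zetavec\|^2/2}$ tends to $0$ as $\|\zetavec\|\to\infty$, since the Gaussian kernel absorbs the linear growth of $\zeta_i$; the paper then invokes dominated convergence on this integral. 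The idea you are missing is to let the Gaussian density, not $w$, carry the decay that kills the growing factor $\xivec$ --- only boundedness of $w$ is used. Your route makes $w$ do that work, which is exactly why the extra $\sqrt{t}\,w(t)\to 0$ condition appears. (Your instinct that this step is delicate is well placed: the paper does not display the dominating function explicitly, and producing one uniformly in $\zetavec$ takes some care; but shifting the decay onto $e^{-\|\xvec-\zetavec\|^2/2}$ is the intended device.)
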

\vspace{-0.25cm}
\begin{IEEEproof}
\medskip \\
\noindent
{\em Proof of statement (i):} \pastorv{Given $\xivec\in \Rset^\dimc$, $\VZctrd = \VZ{\xivec} - \xivec$ does not depend on $\xivec$ and $\VZctrd \thicksim \Ncal(0,\cCorMat)$. When $\| \xivec \|$ tends to $\infty$, $\nu_\matcov(\xivec)$ tends to $\infty$ as well, because all the norms are equivalent on $\Rset^\dimc$. Since $\nu_\matcov(\VZ{\xivec}) \geqslant \nu_\matcov(\xivec) - \nu_\matcov(\VZctrd)$ and $\weight(\VZ{\xivec}) = g(\nu_\matcov(\VZ{\xivec})$ with $g(t) = w(t^2)$ for any $t \in \Rset$, it follows that $\nu_\matcov(\VZ{\xivec})$ tends to $0$ when $\| \xivec \|$ tends to $\infty$.} 
\medskip \\
\noindent
{\em Proof of statement (ii):} \pastorv{With the same notation as above, 
	\begin{equation}
	\label{Eq: Basic equality for (iv)}
	\! \Expect{\weight ( \VZ{\xibm} ) \VZ{\xibm}} \! = \! \Expect{\weight ( \VZ{\xibm} )} \xivec \! + \! \Expect{\weight ( \VZ{\xibm} ) \VZctrd}.\! \! \! 
	\end{equation}
Let us consider the first term in the rhs of \eqref{Eq: Basic equality for (iv)}. Let $\MDec$ be the whitening matrix of $\cCorMat$. It follows from \eqref{Eq: Whitening matrix} that
%\begin{equation}
%\label{Eq: decomposition}
%\cCorMat = \Qmat \Dmat \Qmat^{\transp},
%\end{equation}
%where $\Dmat$ is a diagonal matrix with the eigenvalues of $\cCorMat$ and $\Qmat$ contains the corresponding eigenvectors.
%We set 
%	\begin{equation}
%	\label{Eq: MDec}
%	\MDec = \Dmat^{-1/2}\Qmat^\transp
%	\end{equation} 
%and 
$\VW{\xivec} = \MDec \VZ{\xivec}$ has distribution $\VW{\xivec} \thicksim \Ncal(\MDec \xivec, \Idm)$. Now, note that: 
\begin{equation}
\label{Eq: expect}
\Expect{ \weight \left (\VZ{\xivec} \right ) } \xivec = \MDec^{-1} \Expect{ \weight \left ( \MDec^{-1} \VW{\xivec} \right ) } \MDec \xivec.
\end{equation}
By setting $\zetavec = \MDec \xivec = ( \zeta_1, \zeta_2, \ldots, \zeta_m )^\transpose$, we have:
\begingroup\small
%The first component of the first term to the rhs of \eqref{Eq: Basic equality for (iv)} can be written as:
\begin{equation}
\label{Eq: inequalities for (iv)}
\Expect{ \weight \! \left ( \! \MDec^{-1} \VW{\xivec} \right ) } \! \zeta_i \! = \! \dfrac{1}{(2 \pi)^{\frac{m}{2}}} \! \displaystyle  \int \! \! \weight \! \left ( \MDec^{-1} \xvec \right ) \! \zeta_i e^{-\frac{\| \xvec - \zetavec \|^2}{2}} \! \d \xvec \!
\end{equation} \endgroup
for any $i = 1, 2, \ldots, \dimc$.
Given any fixed $\xvec \in \Rset^\dimc$, the inequality $\vert \zeta_i \vert \leqslant \| \xivec \|$ induces that:
$$
	- \| \xivec \| e^{-\frac{1}{2} \| \xvec - \zetavec \|^2} \leqslant \zeta_i e^{-\frac{1}{2} \| \xvec - \zetavec \|^2} \leqslant \| \xivec \| e^{-\frac{1}{2} \| \xvec - \zetavec \|^2}.
$$
whose left and right bounds tend to $0$ when $\| \xivec \|$ --- and thus $\| \zetavec \|$ thanks to the properties of $\MDec$ --- tends to $\infty$. By applying the Lebesgue dominated convergence theorem to \eqref{Eq: inequalities for (iv)} and since $i$ is arbitrary in $\llbracket 1,\dimc \rrbracket$, it follows from \eqref{Eq: expect} that:
\begin{equation}
\label{Eq: lim of expect1}
	\lim\limits_{\| \xivec \| \to \infty} \Expect{ \weight \left (\VZ{\xivec} \right ) } \xivec = 0.
\end{equation}
}
\indent
As far as the second term to the rhs of \eqref{Eq: Basic equality for (iv)} is concerned, we set \pastorv{$\VZctrd = (Z^*_1, \ldots, Z^*_\dimc)^\transpose$}. We have \pastorv{$\weight ( \VZ{\xivec} ) Z^*_i \leqslant \vert Z^*_i \vert$ for any $i \in \llbracket 1,\dimc \rrbracket$, where we assume, without loss of generality, that the bound on $w$ is $1$. Since $\Expect{\vert \, Z^*_i \, \vert} < \infty$ for each $i \in \llbracket 1,\dimc \rrbracket$ and $\displaystyle \lim\limits_{\| \xivec \| \rightarrow \infty} {\weight ( \VZ{\xivec} ) } = 0$ (a-s)}, we derive from the foregoing and the Lebesgue dominated convergence theorem that $\lim\limits_{\| \xivec \| \to \infty} \Expect{\weight ( \VZ{\xivec} ) Z^*_i} = 0$ for all $i \in \llbracket 1,\dimc \rrbracket$ and thus, that $\lim\limits_{\| \xibm \| \to \infty} \Expect{\weight ( \VZ{\xivec} ) \VZctrd } = 0$.
%\pastorv{
%\begin{equation}
%\label{Eq: lim of expect2}
%\lim\limits_{\| \xibm \| \to \infty} \Expect{\weight ( \VZ{\xivec} ) \VZctrd } = 0
%\end{equation}
%}
Thence the \pastorv{result} as a consequence of this equality, \eqref{Eq: Basic equality for (iv)} and \eqref{Eq: lim of expect1}. 
%\medskip \\
%\noindent
%{\em Proof of statement (ii):} \pastorv{The result straightforwardly follows from the fact that $\myQ{t} = 1 - \Fbb_{\chi_m^2}(t^2)$ \cite[Eq. (8)]{Sun2010}, where $\Fbb_{\chi^2_m}$ is the centered $\chi^2_m$ distribution with $m$ degrees of freedom.}
\end{IEEEproof}

%%%%%%%%%%%%%%%%%%%%%%%%%%%%%%%%%%%%%%%%%%%%%%
%%%%%%%%%%%%%%%%%%%%%%%%%%%%%%%%%%%%%%%%%%%%%%
%%%%%%%%%%%%%%%%%%%%%%%%%%%%%%%%%%%%%%%%%%%%%%
%%%%%%%%%%%%%%%%%%%%%%%%%%%%%%%%%%%%%%%%%%%%%%

\section{}\label{App:D}

\begin{Lemma}
\label{Lemma: last useful lemma}
\pastorv{Let $\Zbm$ be a Gaussian $\dimc$-dimensional real random vector with covariance matrix $\sigma^2 \Idm$ and $\sigma \neq 0$}. If $\fbm: \Rset^\dimc \to [0,\infty)$ is non-null, continuous and even in each coordinate of $\xbm = (x_1, \ldots, x_\dimc)^\transpose \in \Rset^\dimc$ so that $\fbm(x_1, \ldots,x_{i-1},x_i,x_{i+1}, \ldots, x_\dimc) = \fbm(x_1, \ldots,x_{i-1},-x_i,x_{i+1}, \ldots, x_\dimc),$  then $\Expect{\fbm ( \Zbm ) \Zbm} = 0$ if and only if $\Expect{\Zbm} = 0$.
\end{Lemma}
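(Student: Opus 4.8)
\emph{Setup and the easy direction.} Write $\mbm := \Expect{\Zbm}$, so that $\Zbm \thicksim \Ncal(\mbm,\sigma^2\Idm)$ has density $p(\xbm) = (2\pi\sigma^2)^{-\dimc/2}\exp(-\|\xbm-\mbm\|^2/(2\sigma^2))$, and the claim becomes $\Expect{\fbm(\Zbm)\Zbm}=0 \Leftrightarrow \mbm = 0$. The plan is to recast the condition $\Expect{\fbm(\Zbm)\Zbm}=0$ as the vanishing of the gradient of a strictly convex function whose unique critical point is the origin. For the implication $\Leftarrow$, suppose $\mbm = 0$; then $p$ depends on $\xbm$ only through $\|\xbm\|^2$ and is thus even in each coordinate. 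Since $\fbm$ is even in each coordinate by hypothesis, each integrand $\xbm \mapsto \fbm(\xbm)\,x_i\,p(\xbm)$ is odd in $x_i$, so its integral over $\Rset^\dimc$ vanishes; hence every component $\Expect{\fbm(\Zbm)Z_i}$ is $0$, giving $\Expect{\fbm(\Zbm)\Zbm}=0$.

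\emph{Reformulation via an exponential family.} For the converse I would introduce the function
\begin{equation}\nonumber
\Psi(\mbm) = \int_{\Rset^\dimc} \fbm(\xbm)\, e^{-\|\xbm\|^2/(2\sigma^2)}\, e^{\langle \xbm,\mbm\rangle/\sigma^2}\, \d\xbm,
\end{equation}
that is, the moment generating function (up to the scalar $1/\sigma^2$) of the finite non-negative measure $\d\kappa(\xbm) = \fbm(\xbm)\,e^{-\|\xbm\|^2/(2\sigma^2)}\,\d\xbm$. Completing the square in $\|\xbm-\mbm\|^2$ gives $\Expect{\fbm(\Zbm)Z_i} = \sigma^2 (2\pi\sigma^2)^{-\dimc/2}\, e^{-\|\mbm\|^2/(2\sigma^2)}\,\partial_{m_i}\Psi(\mbm)$, and since the prefactor is strictly positive, the hypothesis $\Expect{\fbm(\Zbm)\Zbm}=0$ is \emph{equivalent} to $\nabla\Psi(\mbm)=0$: the actual mean $\mbm$ must be a critical point of $\Psi$. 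Moreover the same odd-integrand argument, applied at $\mbm=0$, yields $\nabla\Psi(0)=0$, so the origin is always a critical point.

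\emph{Strict convexity and conclusion.} Because $\fbm$ is continuous, non-negative and non-null, it is strictly positive on a non-empty open set, so $\kappa$ is a finite non-negative measure whose support is not contained in any affine hyperplane. Consequently $\log\Psi$ is strictly convex: it is the cumulant generating function of $\kappa$, and its Hessian equals $\sigma^{-4}$ times the covariance matrix of the exponentially tilted probability measure $\d\kappa_{\mbm}\propto e^{\langle\xbm,\mbm\rangle/\sigma^2}\,\d\kappa$, which is positive definite precisely because the support is full-dimensional. A strictly convex smooth function has at most one critical point; as $\Psi>0$ everywhere we have $\nabla\log\Psi = \nabla\Psi/\Psi$, so $\Psi$ inherits this uniqueness. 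Since the origin is one critical point, $\nabla\Psi(\mbm)=0$ forces $\mbm=0$, which together with the easy direction establishes both implications at once.

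\emph{Main obstacle.} The delicate points are analytic rather than structural: one must justify that $\Psi$ is finite and smooth — so that differentiation under the integral sign and the Hessian computation are legitimate — and establish the \emph{strict} convexity of $\log\Psi$. Finiteness is immediate in the paper's setting, where the relevant weight function $w$, and hence $\fbm = w(\|\cdot\|^2)$, is bounded; then $\kappa$ is finite and $\Psi$ is a genuine Gaussian integral, finite and real-analytic on all of $\Rset^\dimc$, with differentiation under the integral following by dominated convergence. The only part truly needing care is ruling out degeneracy of the tilted covariance, and here the continuity and non-nullity of $\fbm$ — which furnish an open set of strict positivity — are exactly what is required.
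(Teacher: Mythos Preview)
Your argument is correct and takes a genuinely different route from the paper. The paper proceeds elementarily: it first treats the one-dimensional case by splitting $\int f(z)\,z\,e^{-(z-\xi)^2/2\sigma^2}\,\d z$ at the origin and changing variables to obtain $\int_0^\infty f(t)\,t\,e^{-(t^2+\xi^2)/2\sigma^2}\bigl(e^{\xi t/\sigma^2}-e^{-\xi t/\sigma^2}\bigr)\,\d t$, whose integrand has the sign of $\xi$ wherever $f>0$; it then reduces the $\dimc$-dimensional case to the scalar one by Fubini, integrating out $z_2,\ldots,z_\dimc$ to produce a new one-variable weight that is again continuous, nonnegative, even and non-null. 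Your proof instead identifies $\Expect{\fbm(\Zbm)\Zbm}$ with (a positive multiple of) $\nabla\Psi(\mbm)$ for the moment generating function $\Psi$ of the measure $\fbm(\xbm)e^{-\|\xbm\|^2/2\sigma^2}\d\xbm$, and then invokes strict convexity of $\log\Psi$ to conclude that its unique critical point is the origin. This is more conceptual and makes the role of the hypotheses transparent: continuity and non-nullity of $\fbm$ furnish an open ball in the support of $\kappa$, which is exactly what forces the tilted covariance to be nondegenerate and hence $\log\Psi$ to be strictly convex. The paper's approach is more self-contained and, at least formally, demands less of $\fbm$: it needs only finiteness of the single expectation at hand, whereas your route needs $\Psi$ finite (hence smooth) on a convex set containing both $0$ and $\mbm$ --- a point you flag correctly and which is harmless under the boundedness that holds in the paper's application. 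Conversely, the paper glosses over the continuity of the marginalized weight $f(z_1)$, which itself needs a dominated-convergence justification of the same flavor.
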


\begin{IEEEproof}
Suppose first that $\dimc = 1$. In this case, $\Zbm$ is a random variable $Z \thicksim \Ncal(\xi,\pastorv{\sigma^2})$ with $\xi = \Expect{Z}$ and $\fbm$ is \pastorv{even} nonnegative real function $f: \Rset \to [0,\infty)$. It follows that 
\begin{equation}
\label{Eq: EfZ)Z}
\Expect{f(Z)Z} = \frac{1}{\sqrt{2 \pi}} \int_{-\infty}^\infty f(z) z e^{-{(z-\xi)^2}/{2 \pastorv{\sigma^2}}} \dz
\end{equation} 
If $\xi = 0$, $\Expect{f(Z)Z} = 0$. If $\xi \neq 0$, split the integral in \eqref{Eq: EfZ)Z} in two, symmetrically with respect to the origin. After the change of variable $t = -z$ in the integral from $-\infty$ to $0$ resulting from the splitting, some routine algebra leads to $\Expect{f(Z)Z} = \frac{1}{\sqrt{2 \pi}} \int_{0}^\infty f(t) t e^{-{(t^2 + \xi^2)}/{2 \pastorv{\sigma^2}}} \left ( e^{\xi t / \pastorv{\sigma^2}} - e^{-\xi t / \pastorv{\sigma^2}} \right ) \dt$.  The integrand in this integral is continuous and has same sign as $\xi$, which implies that $\Expect{f(Z)Z} \neq 0$. Thence the result if $\dimc = 1$. %Therefore, $\Expect{f(Z)Z} =0$ implies that $f(t) t e^{-{(t^2 + \xi^2)}/{2 \pastorv{\sigma^2}}} \left ( e^{\xi t / \pastorv{\sigma^2}} - e^{-\xi t / \pastorv{\sigma^2}} \right ) = 0$ for any $t$, which induces $\xi = 0$.

In the $\dimc$-dimensional case, set $\Zbm = (Z_1, Z_2, \ldots, Z_\dimc)$ and denote the expectation $\Expect{\Zbm}$ of $\Zbm$ by $\xibm = (\xi_1, \xi_2, \ldots, \xi_\dimc)$. Let $\fbm: \Rset^\dimc \to [0,\infty)$ be a nonnull continuous function, even in each coordinate of $\xvec \in \Rset^\dimc$. Clearly, $\Expect{f(\Zbm) \Zbm} = 0$ if and only if $\Expect{f(\Zbm) Z_i} = 0$ for each coordinate $Z_i$, $i \in \{1, 2, \ldots, \dimc \}$. For the first coordinate $Z_1$ of $\Zbm$, it follows from Fubini's theorem that:
\[
\vspace{-0.1cm}
\Expect{\fbm ( \Zbm ) Z_1} =\displaystyle \int f(z_1) \, z_1 \, e^{-(z_1 - \xi_1)^2/2\pastorv{\sigma^2}} \dz_1
%\vspace{-0.2cm}
\]
with $f: \Rset \to [0,\infty)$ defined for any real number $z_1$ by:
\begingroup \small
$$\vspace{-0.1cm}
f(z_1) \! = \! \dfrac{1}{(2 \pi)^{\dimc/2}} \! \displaystyle \int \! \fbm(z_1, \ldots, z_\dimc) e^{-\sum_{k=2}^\dimc (z_k - \xi_k)^2/2 \pastorv{\sigma^2}} \dz_2  \ldots  \dz_\dimc.$$ \endgroup
This function is continuous, non-negative and even. The conclusion then follows from the one-dimensional case.
\end{IEEEproof}

\section{p-value of Wald's test for Gaussian mean testing}
\label{Sec: RDT pvalue}
Fort any $x \in [0,\infty)$, we hereafter set $\MyMarcum(x) = \Marcum(0,x)$. Given $\level \in (0,1)$, the value $\mu(\level)$ defined by \eqref{Eq: Wald threshold} is the unique real value such that $\MyMarcum(\mu(\level)) = \level$. %In what follows, we need the following preliminary lemma. 
\begin{Lemma}
\label{Lemma: tsub decreases}
Given $\tol \in [ \, 0 \, , \, \infty \, )$, the map $\level \in ( \, 0 \, , \, 1 \, ] \mapsto \mu(\level) \in [ \, 0 \, , \infty \, )$ is strictly decreasing.
\end{Lemma}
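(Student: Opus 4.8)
The plan is to recognize $\MyMarcum$ as a survival function and then invoke the elementary fact that the inverse of a continuous, strictly decreasing bijection is itself strictly decreasing. Throughout, $\mu(\level)$ is characterized by $\MyMarcum(\mu(\level)) = \level$ from \eqref{Eq: Wald threshold}, and the auxiliary parameter $\tol$ plays no role in this map, so I would simply establish monotonicity of $\level \mapsto \mu(\level)$.

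First I would record the probabilistic meaning of $\MyMarcum$. Under the null hypothesis of the Wald test, $\Xvec \thicksim \Ncal(0,\matcov)$, so with $\MMDec$ the whitening matrix of $\matcov$ the random variable $\nu_\matcov(\Xvec) = \| \MMDec \Xvec \|$ satisfies $\nu_\matcov(\Xvec)^2 \thicksim \chi^2_m$; that is, $\nu_\matcov(\Xvec)$ is chi-distributed with $m$ degrees of freedom. By the definition $\MyMarcum(x) = \Marcum(0,x)$, the value $\MyMarcum(x) = \Pbb[\nu_\matcov(\Xvec) > x]$ is exactly the survival function of this chi variable.

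Next I would establish the three properties of $\MyMarcum$ on $[0,\infty)$: it is continuous \cite{Sun2010}, it satisfies $\MyMarcum(0) = \Pbb[\nu_\matcov(\Xvec) > 0] = 1$ and $\lim_{x \to \infty} \MyMarcum(x) = 0$ \cite{Sun2010}, and, crucially, it is strictly decreasing. Strict monotonicity follows because the chi density $f(r) = r^{m-1} e^{-r^2/2} / (2^{m/2-1}\Gamma(m/2))$ is strictly positive for every $r > 0$, so for $0 \leqslant x_1 < x_2$ one has $\MyMarcum(x_1) - \MyMarcum(x_2) = \int_{x_1}^{x_2} f(r) \, \drm r > 0$. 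Hence $\MyMarcum$ is a continuous, strictly decreasing bijection from $[0,\infty)$ onto $(0,1]$, and its inverse is precisely the well-defined map $\mu$ on $(0,1]$ with values in $[0,\infty)$.

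Finally, to conclude that $\mu$ is strictly decreasing I would argue by contradiction: if $\level_1 < \level_2$ but $\mu(\level_1) \leqslant \mu(\level_2)$, then applying the decreasing map $\MyMarcum$ gives $\level_1 = \MyMarcum(\mu(\level_1)) \geqslant \MyMarcum(\mu(\level_2)) = \level_2$, contradicting $\level_1 < \level_2$. The only nontrivial ingredient is the strict monotonicity of $\MyMarcum$, and everything else is bookkeeping; the main obstacle is to argue it cleanly. I expect the survival-function viewpoint to be the decisive simplification, since it reduces strict monotonicity to positivity of the chi density and thereby bypasses any direct differentiation or series manipulation of $\Marcum(0,\cdot)$.
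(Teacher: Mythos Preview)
Your proof is correct and follows essentially the same route as the paper: both arguments reduce the claim to the strict monotonicity of $\MyMarcum$ and then observe that the inverse of a strictly decreasing map is strictly decreasing. The only difference is that the paper simply cites \cite{Sun2010} for the strict decrease of $\MyMarcum$, whereas you supply a self-contained justification via the survival-function interpretation and positivity of the chi density.
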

\begin{proof}
Let $\radius \in [0,\infty)$ and consider two elements $\level$ and $\level'$ of $(0,1]$. We have $\MyMarcum(\mu(\level)) = \level$ and $\MyMarcum(\mu(\level')) = \level'$. If $\level < \level'$, $\MyMarcum(\mu(\level)) < \MyMarcum(\mu(\level'))$, which implies that $\mu(\level) > \mu(\level')$ since $\MyMarcum$ is strictly decreasing \cite{Sun2010}.
\end{proof}

Suppose that $\Ybm \thicksim \Ncal(\xivec,\matcov)$ with $\xivec \in \Rset^\dimc$ and $\matcov$ is an $\dimc \times \dimc$ positive definite covariance matrix. The critical region of the test $\Topt$ defined by \eqref{Eq:Thresholding test from above}
is:
%\begingroup
%\small
$$
\Scal_\level = \mybig \{ \ybm \in \Rset^\dimc: \Topt(\ybm) = 1 \mybig \} = \mybig \{ \ybm \in \Rset^\dimc: \thenorm( \ybm ) > \mu(\level) \mybig \}
$$
%\endgroup
According to Lemma \ref{Lemma: tsub decreases}, for two levels $0 < \level < \level' < 1$, we have $\Scal_{\level} \subset \Scal_{\level'}$. Given $\ybm \in \Rset^\dim$, we can thus define the p-value of $\Topt$ at $\ybm$ as \cite[p. 63, Sec. 3.3,]{Lehmann2005} $\pval{\ybm} = \inf \mybig \{ \level \in (0,1): \ybm \in \Scal_\level \mybig \}$.
%Given $\ybm \in \Rset^\dim$, we can therefore define the p-value of $\Topt$ at $\ybm$ as the value $\pval{\ybm}$ defined by \pastorv{\cite[p. 63, Sec. 3.3,]{Lehmann2005}}:
%$$
%\begin{array}{lll}
%\pval{\ybm} 
%& = & \inf \mybig \{ \level \in (0,1): \ybm \in \Scal_\level \mybig \} \myvspace \\
%& = & \inf \mybig \{ \level \in (0,1): \mu(\level) < \thenorm(  \ybm ) \mybig \}
%\end{array}
%$$
If $\level_0 = \MyMarcum(\thenorm(\ybm))$, we have $\level_0 \in (0,1)$ and $\MyMarcum(\mu(\level_0)) = \level_0$ by definition of $\mu(\level_0)$. It then follows from the bijectivity of $\MyMarcum$ that $\mu(\level_0) = \thenorm( \ybm )$. According to Lemma \ref{Lemma: tsub decreases} again, we obtain $\mybig \{ \level \in (0,1): \mu(\level) < \thenorm( \ybm ) \mybig \} = (\level_0,1) $.
%$$
%\begin{array}{lll}
%\mybig \{ \level \in (0,1): \mu(\level) < \thenorm( \ybm ) \mybig \}
%& = & \mybig \{ \level \in (0,1): \mu(\level) < \mu(\level_0) \mybig \} \myvspace \\
%& = & (\level_0,1) 
%\end{array}
%$$
Therefore $\pval{\ybm} = \MyMarcum(\thenorm( \ybm ))$. %We conclude that the p-value of $\Topt$ for testing the mean of the Gaussian $\Ybm$ is $\pval{\ybm} = \MyMarcum(\thenorm( \ybm ))$.
%\begin{equation}
%\label{Eq:pvalue}
%\pval{\ybm} = \MyMarcum(\thenorm( \ybm ))
%\end{equation}

%%%%%%%%%%%%%%%%%%%%%%%%%%%%%%%%%%%%%%%%%%%%%%%%%%%%%%
%%%%%%%%%%%%%%%%%%%%%%%%%%%%%%%%%%%%%%%%%%%%%%%%%%%%%%
%%%%%%%%%%%%%%%%%%%%%%%%%%%%%%%%%%%%%%%%%%%%%%%%%%%%%%
%%%%%%%%%%%%%%%%%%%%%%%%%%%%%%%%%%%%%%%%%%%%%%%%%%%%%%

\section{}
\label{App: Correlation matrix}
\begin{Lemma}
\label{Lemma: Covmat}
$\Expect{w^2 \left ( \| \Noiseb \|^2 \right ) \Noiseb \Noiseb^\transpose} = \Expect{w^2 \left ( \| \Noiseb \|^2 \right )\noiseb^2_1} \Idm$  for any $\Noiseb = (\noiseb_1, \ldots, \noiseb_\dimc)^\transp\thicksim \Ncal(0, \coeff\Idm)$, 
\end{Lemma}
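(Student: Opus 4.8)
The plan is to exploit the reflection and permutation symmetries of the isotropic Gaussian law $\Ncal(0,\coeff\Idm)$. Write $M = \Expect{w^2(\|\Noiseb\|^2)\Noiseb\Noiseb^\transp}$, a symmetric $\dimc \times \dimc$ matrix whose $(i,j)$ entry is $M_{ij} = \Expect{w^2(\|\Noiseb\|^2)\noiseb_i\noiseb_j}$. First I would observe that every such expectation is finite: the weight function $w$ is bounded by $1$, hence $w^2 \leqslant 1$, and the second moments of a Gaussian vector are finite, so $|M_{ij}| \leqslant \Expect{|\noiseb_i \noiseb_j|} < \infty$ by Cauchy--Schwarz. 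The whole task then reduces to showing that $M$ is diagonal with all diagonal entries equal, i.e. a scalar multiple of $\Idm$.

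For the off-diagonal entries, I would fix $i \neq j$ and use the coordinate reflection $R_j$ that sends $\noiseb_j \mapsto -\noiseb_j$ while leaving the other coordinates fixed. Because the components of $\Noiseb$ are i.i.d. $\Ncal(0,\coeff)$, the joint density $\prod_{k} (2\pi\coeff)^{-1/2} e^{-\noiseb_k^2/2\coeff}$ is invariant under $R_j$; moreover $\|\Noiseb\|^2$ is unchanged, whereas the product $\noiseb_i\noiseb_j$ picks up a factor $-1$. Performing this change of variable inside the expectation therefore yields $M_{ij} = -M_{ij}$, so $M_{ij} = 0$ whenever $i \neq j$, and $M$ is diagonal.

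For the diagonal entries, I would invoke the exchangeability of $(\noiseb_1, \ldots, \noiseb_\dimc)$, which again follows from the coordinates being i.i.d. For any $i$, the permutation swapping coordinates $1$ and $i$ leaves both the distribution of $\Noiseb$ and the quantity $\|\Noiseb\|^2$ invariant while mapping $\noiseb_i^2$ to $\noiseb_1^2$; hence $M_{ii} = \Expect{w^2(\|\Noiseb\|^2)\noiseb_1^2}$ for every $i$. Combining the two symmetry arguments gives $M = \Expect{w^2(\|\Noiseb\|^2)\noiseb_1^2}\,\Idm$, which is exactly the claimed identity.

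I do not expect any genuine obstacle here, since the argument rests solely on the invariance of the standard Gaussian product density under sign flips and coordinate permutations, together with the rotational invariance of $\|\Noiseb\|^2$. The only point deserving a line of care is the justification that the reflection $R_j$ preserves the expectation, which is immediate from the explicit invariance of the product density recalled above; everything else is bookkeeping on the entries of $M$.
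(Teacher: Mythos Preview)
Your proposal is correct and follows essentially the same route as the paper: the off-diagonal entries vanish by the sign-flip/odd-integrand argument (the paper phrases this via Fubini and the oddness of $\xi_i \mapsto w^2(\sum_k \xi_k^2)\,\xi_i\, e^{-\xi_i^2/2\coeff}$, which is just your reflection $R_j$ in integral form). Your treatment is in fact slightly more complete, since you also spell out the exchangeability argument for the equality of the diagonal entries and the finiteness of the expectations, both of which the paper leaves implicit.
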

\begin{proof}
The term located at the $i$th line and $j$th colum of the matrix $\Expect{w^2 \left ( \| \Noiseb \|^2 \right ) \Noiseb \Noiseb^\transpose}$ with $i \ne j$ is:
\begin{onecol}
$$
c_{i,j} 
= \Expect{w^2 \left ( \| \Noiseb \|^2 \right )\noiseb_i \noiseb_j} 
= \dfrac{1}{(2\pi \coeff)^{\dimc/2}} \displaystyle \int_{\Rset^\dimc} w^2  \Big ( \sum_{k=1}^m \xi_k^2 \Big ) \xi_i \xi_j e^{-\sum_{k=1}^m \xi_k^2/2\coeff} \d \xi_1 \d \xi_2 \ldots \d \xi_\dimc.
$$
\end{onecol}
\begin{twocol}
\vspace{-0.1cm}
	$$
	c_{i,j} 
\!	= \! \dfrac{1}{(2\pi \coeff)^{\frac{\dimc}{2}}} \displaystyle \! \int \! \! w^2  \Big (  \sum_{k=1}^m \xi_k^2 \Big ) \xi_i \xi_j e^{-\sum_{k=1}^m \xi_k^2/2\coeff} \d \xi_1 \ldots \d \xi_\dimc.
	$$
\vspace{-0.1cm}
\end{twocol}
By independence of the components of $\Noiseb$ and Fubini's theorem, we have:
\begin{onecol}
$$
c_{i,j} 
= \dfrac{1}{(2\pi \coeff)^{\dimc/2}} \displaystyle \int \prod_{k=1, k\ne i,j}^\dimc e^{-\xi_k^2/2\coeff} \d \xi_k \displaystyle \int \left ( \, \displaystyle \int w^2 \Big ( \sum_{k=1}^m \xi_k^2 \Big ) \xi_i e^{-\xi_i^2/2\coeff} \d \xi_i \, \right ) \xi_j e^{-\xi_j^2/2\coeff} \d \xi_j
$$
\end{onecol}
\begin{twocol}
\vspace{-0.15cm}
	\begin{align}  \nonumber
	& c_{i,j} =  \vspace{-0.1cm} \\ \nonumber
	&  \! \displaystyle \int \! \! \! \underset{k\ne i,j}{\prod_{k=1,}^\dimc} p(\xi_k) \d \xi_k \displaystyle \int \left ( \, \displaystyle \int w^2 \Big ( \sum_{k=1}^m \xi_k^2 \Big ) \xi_i p(\xi_i) \d \xi_i \, \right ) \xi_j p(\xi_j) \d \xi_j
	\vspace{-0.1cm}
	\end{align}
	with $p(x) = ({1}{\sqrt{2\pi \coeff}})e^{-x^2/\coeff}$ for $x \in \Rset$. 
\end{twocol}
The integrand is odd in $\int \! w^2 ( \sum_{k=1}^m \xi_k^2 ) \xi_i p(\xi_i) \d \xi_i \! = \! 0$ and thence, $c_{i,j} = 0$.
%\\
%\indent
%We now compute $c_{i,i}$ for $i = 1, \ldots, \dimc$. With the same notation as above and via  Fubini's theorem again, we have:
%\small
%\begin{align}
%\begin{array}{lll}
%c_{i,i} & = \Expect{w^2 \left ( \| \Noiseb \|^2 \right ) \noiseb^2_i} 
%%\myvspace 
%\\
%& = \dfrac{1}{(2\pi\coeff)^{\dimc/2}} \! \displaystyle \int \left ( \displaystyle \int w^2 \left ( \| \xivec \|^2 \right ) \xi_i^2 e^{-\frac{\xi_i^2}{2\coeff}} \d \xi_i \right ) \! \! \! \prod_{k=1, k\ne i}^\dimc \! \! e^{-\frac{\xi_k^2}{2\coeff}} \d \xi_k  
%%\myvspace 
%\\
%& = \dfrac{1}{(2\pi\coeff)^{\dimc/2}} \! \displaystyle \int \Ebb \Big [ w^2 \big ( \noiseb_i^2 + \! \displaystyle \sum_{j=1,j\ne i}^\dimc \xi_j^2 \big ) \, \Big ] \, \noiseb_i \prod_{k=1, k\ne i}^\dimc \! e^{-\frac{\xi_k^2}{2 \coeff}} \d \xi_k  %\myvspace 
%\\ \notag
%& = \Expect{w^2 \left ( \| \Noiseb \|^2\right ) \noiseb_i}
%\end{array}
%\end{align}
%\normalsize
%%\end{twocol}
%the last equality being obtained by iterating the integration over all the components of $\Noiseb$.
\end{proof}

%% === REFERENCES=========================================================== 
\bibliographystyle{ieeetr}
\bibliography{IEEEabrv,Refs}

%% =======================================================================
\end{document}